\documentclass[11 pt]{article}
\usepackage[utf8]{inputenc}
\usepackage[T1]{fontenc}
\usepackage[colorlinks = true, linkcolor=Mahogany, citecolor = ForestGreen]{hyperref}
\usepackage{url}
\usepackage{booktabs}
\usepackage{amsfonts}
\usepackage{amsmath}
\usepackage{amssymb}
\usepackage{amsthm}
\usepackage{nicefrac}
\usepackage{microtype}
\usepackage[dvipsnames]{xcolor}
\usepackage{newtxtext}

\usepackage[square, sort&compress, numbers]{natbib}
\bibliographystyle{abbrvnat}

\usepackage{parskip}
\parskip = 2pt

\usepackage[margin=1in]{geometry}
\usepackage{changepage}

\usepackage{amsfonts}
\usepackage{breqn}
\usepackage{todonotes}
\usepackage{dsfont}
\usepackage{marginnote}
\usepackage{tikz,pgf}
\usepackage{tkz-euclide}
\usepackage{mathtools}
\usepackage{algorithm}
\usepackage{algpseudocode}
\usepackage{hyperref}
\usepackage{enumitem}
\usepackage{tcolorbox}
\usepackage{tikz}
\usetikzlibrary{positioning}
\usepackage{fullpage}

\DeclarePairedDelimiter\floor{\lfloor}{\rfloor}

\newtheorem{theorem}{Theorem}
\newtheorem{proposition}{Proposition}
\newtheorem{lemma}[proposition]{Lemma}
\newtheorem{definition}{Definition}
\newtheorem{claim}{Claim}

\newtheorem{assumption}{Assumption}

\newcommand{\indep}{\perp \!\!\! \perp}
\newcommand{\E}[2][]{\mathbb{E}_{#1}\left[#2\right]}

\newcommand{\prob}[2][]{\mathbb{P}_{#1}\left[#2\right]}

\definecolor{forestgreen}{RGB}{34,139,34}

\newtheorem{remark}{Remark}

\newcommand{\norm}[1]{\left\lvert #1 \right\rvert}
\newcommand{\var}[2][]{\operatorname{var}_{#1}\left(#2\right)}
\newcommand{\poly}{\operatorname{poly}}

\newcommand{\sigmamin}{\sigma_{\rm min}}
\newcommand{\sigmamax}{\sigma_{\rm max}}
\newcommand{\ymax}{y_{\rm max}}
\newcommand{\bmax}{\beta_{\rm max}}
\newcommand{\bmin}{\beta_{\rm min}}

\newcommand{\abs}[1]{\left| #1 \right|}

\newcommand{\Pbb}{\mathbb{P}}
\newcommand{\Ebb}{\mathbb{E}}

\newcommand{\KL}{D_{\mathrm{KL}}}

\newcommand{\Var}{\operatorname{Var}}
\newcommand{\algone}{LET-GL}
\newcommand{\algtwo}{BTR-GL}
\newcommand{\algonefull}{Local Edge Testing for Graph Learning}
\newcommand{\algtwofull}{Burn-in/Thinning Reduction for Graph Learning}

\DeclareGraphicsExtensions{.png,.pdf}
\graphicspath{{Supplementary/Images/}}

\title{Local and Mixing-Based Algorithms for Gaussian Graphical Model Selection from Glauber Dynamics}

\vspace{-5mm}

\author{
{\small
\begin{tabular}{c}
Vignesh Tirukkonda \qquad Anirudh Rayas \qquad Gautam Dasarathy \\
Arizona State University \\
{\footnotesize\texttt{\{vtirukko, ahrayas, gautamd\}@asu.edu}}
\end{tabular}
}
}

\date{}

\begin{document}
\maketitle
\vspace{-5mm}
\begin{abstract}

Gaussian graphical model selection is usually studied under independent
sampling, but in many applications observations arise from dependent dynamics.
We study structure learning when the data consist of a single trajectory of
Gaussian Glauber dynamics. We develop two complementary approaches. The first
is a local edge-testing estimator based on an appropriately designed correlation test that reveals edges. This estimator does not require waiting for the chain
to mix and admits an embarrassingly parallel edgewise implementation. The
second is a burn-in/thinning reduction: under a Dobrushin contraction condition,
we prove that a suitably subsampled Gaussian Gibbs trajectory is close in total
variation to an i.i.d. product sample, allowing standard i.i.d. Gaussian
graphical model learners to be used as black boxes. The key technical ingredient, which may be of independent interest,
is a high-dimensional total-variation bound for random-scan Gaussian Gibbs
samplers, obtained by combining Wasserstein contraction with an approximate
Lipschitz smoothing argument. We prove finite-sample recovery guarantees for
both approaches, establish information-theoretic lower bounds on the observation
time, and empirically compare the resulting sample-computation tradeoffs.
\end{abstract}

\section{Introduction}\label{sec: introduction}

Graphical models are a powerful class of stochastic models that represent conditional dependence structure between random variables using a graph, with extensive applications in bioinformatics~\citep{Murad2021Biology}, finance~\citep{zhan2021graphical}, and social network analysis~\citep{Farasat2015}. An important subclass is the {\em Gauss-Markov random field} (GMRF) or {\em Gaussian graphical model} (GGM), which models multivariate Gaussian distributions. A central challenge for graphical models is {\em structure learning}: identifying the graph that best fits observed data. Traditional methods assume access to independent and identically distributed (i.i.d.) samples from the underlying distribution (see~\citep{drton2016structure} for a review). In many practical scenarios, however, i.i.d.\ samples are unavailable, and the data are more naturally modeled as stochastic processes that evolve over time~\citep{Dennis2022Markov, Lara2019GlauberEpidemicModel}. In this paper, we study GGM structure learning when the data are generated by {\em Glauber dynamics} (also known as the Gibbs sampler): a Markov chain that iteratively updates each node according to its local conditional distribution and whose stationary distribution is the GGM of interest. Glauber dynamics has been used to model the spread of information and diseases across networks~\citep{Montanari2010InformationSpread, Lara2019GlauberEpidemicModel}, agent decisions in coordination games~\citep{Auletta2017}, and processes whose stationary law is multivariate Gaussian. This work extends~\citep{bresler2014learning}, which considered the Ising case.  

{\bf Correction to an earlier version and relation to recent work.}
A preliminary version of this work~\citep{TRD25} used an ``\(i,j,i\)-test'' based ratio
estimator. That argument has a dependence gap: the denominator in the ratio is not
conditionally decoupled from the earlier noise injected at coordinate \(i\), and removing this gap required strong assumptions on the underlying precision matrix. 
We thank Mahbod Majid for identifying this issue. The present version replaces
that estimator with a local product statistic based on ``\(i,i,j,i\)-test'' 
windows, where the additional update of coordinate \(i\) acts as a buffer that
removes the problematic dependence. A recent and closely related work of Shen,
Wu, Majid, and Moitra~\citep{SWMM26} also identifies this obstruction and
develops an algorithm based on \(i,i,j,i\)-type windows with a diagonal
normalization and robust aggregation of ratio statistics. Our local estimator, which was developed independently of~\cite{SWMM26}, uses a
different product-moment statistic and a different concentration argument. The Dobrushin condition we require however also enables our
second contribution: a separate burn-in/thinning reduction based on
total-variation decoupling of Gaussian Gibbs trajectories, which results in an algorithm with significantly better data requirements. We next discuss our proposed algorithms.

{\bf \algone{} (\algonefull{}): A local edge-testing based estimator.}
Our first algorithm exploits the local nature of Glauber updates directly, without waiting for the chain to mix. For each candidate edge $\{i,j\}$, we identify short windows of the trajectory in which $i$ and $j$ update in a specific alternating pattern while no other neighbor of $i$ is updated. On such a window, the product of increments in $X_i$ and $X_j$ has an expectation that is informative about the edge strength $\beta_{ij}$, and zero when $\{i,j\}$ is not an edge. Averaging over many such windows yields a test statistic that we threshold to recover the edge set. Under natural conditions, we prove that observing the Glauber dynamics for $\mathcal{O}\!\left(\frac{d^{3} p \operatorname{polylog} p}{\bmin^{5}}\right)$ time suffices to recover a graph of maximum degree~$d$, where $\bmin=\min_{\{i,j\}\in E}\min\{|\beta_{ij}|,|\beta_{ji}|\}$ is the smallest edge strength. A distinctive feature of this estimator is that it is perfectly parallelizable across edges, with an effective per-core cost as low as $\tilde{\mathcal{O}}(d^{2} p)$\footnote{The notation $g = \tilde{\mathcal{O}}(f)$ means there exist constants $c_1,c_2$ such that $g \leq c_1 f \log(f)^{c_2}$.}. This is comparable to the best parallel cost of regression-based neighborhood selection~\citep{Meinshausen2006Neighborhood}, and is substantially lower than the cost of penalized likelihood methods such as GLASSO~\citep{friedman2008sparse} ($\mathcal{O}(p^{3})$), constraint-based methods such as the PC algorithm~\citep{10.7551/mitpress/1754.001.0001} ($\mathcal{O}(p^{d+2})$), or sample-optimal combinatorial methods such as DICE~\citep{misra2020information} ($\mathcal{O}(p^{2d+1})$).

{\bf \algtwo{} (\algtwofull{}): a post-mixing estimator via burn-in and thinning.}
Our second algorithm follows a complementary post-mixing route. After a suitable burn-in period, thinned states of the Glauber dynamics are close in joint law to independent draws from the Gaussian stationary distribution, and may therefore be passed to a standard i.i.d.\ GGM structure learner. Under a Dobrushin-style contraction condition on the conditional regressions, we discard an initial burn-in segment and then retain every $\mathfrak{t}$-th sample. When the base learner is instantiated with DICE~\citep{misra2020information}, the resulting procedure recovers the graph after
\(\mathcal{O}\!\left(\frac{d\, p\, {\rm polylog}(p/\delta)}{\kappa^2(1-r)}\right)\)
Glauber updates, where $\kappa=\min_{\{i,j\}\in E}\sqrt{|\beta_{ij}\beta_{ji}|}=\min_{\{i,j\}\in E}|\Theta_{ij}|/\sqrt{\Theta_{ii}\Theta_{jj}}$ is the minimum normalized edge strength (the absolute partial correlation, matching the convention of~\citep{misra2020information}) and $r=\max_i\sum_{j\ne i}|\beta_{ij}|$ is the Dobrushin radius. The principal technical obstacle in analyzing this estimator is converting Wasserstein contraction for the random-scan Gaussian Gibbs sampler~\citep{AMIT199182, roberts1997updating, liu1994covariance, wang2014convergence} into total-variation bounds suitable for this reduction. Since the transition kernel is not globally Lipschitz, the standard Kantorovich--Rubinstein argument~\citep{Villani2009} is insufficient. We resolve this difficulty by introducing a {\em thresholded} Lipschitz property, in which the kernel is Lipschitz on a high-probability set and the failure event contributes an explicit additive error. The resulting high-dimensional TV bound may be of independent interest, and yields end-to-end guarantees for the burn-in/thinning estimator in terms of the chosen i.i.d.\ base learner. Relative to \algone{}, \algtwo{} can require shorter trajectories in the Dobrushin regime, but it relies on a mixing condition and does not retain the per-edge parallel structure of the local estimator.

{\bf TV decoupling for random-scan Gaussian Gibbs.}
A key technical ingredient behind \algtwo{} --- and a contribution of independent interest --- is a high-dimensional total-variation mixing bound for the random-scan Gaussian Gibbs sampler. Under the same Dobrushin condition $r<1$, we show (Lemma~\ref{lem:tv-mixing-bound}) that
\[
\left\|K^{t}(x,\cdot)-\pi\right\|_{\mathrm{TV}} \;\le\; \varepsilon
\qquad\text{whenever}\qquad
t \;\ge\; C\,\frac{p}{1-r}\,\log\!\left(\frac{p^{3/2}}{\varepsilon}\right).
\]
That is, the normalized mixing time is only polylogarithmic in the ambient dimension $p$ and the precision $\varepsilon$, with no spectral-gap or condition-number assumptions on $\Theta$. The standard route from Wasserstein contraction~\citep{AMIT199182, roberts1997updating, liu1994covariance, wang2014convergence, wang2017convergence} to TV via Kantorovich--Rubinstein smoothing~\citep{Villani2009} fails here because the transition kernel is not globally Lipschitz; we circumvent this by introducing a {\em thresholded} (or approximate) Lipschitz property, in which the smoothed kernel is Lipschitz on a high-probability set and the failure event enters as an explicit additive defect. Combined with a burn-in/thinning decomposition, this yields a joint TV bound between the subsampled trajectory and a product of stationary draws, which is exactly the input format an i.i.d.\ Gaussian structure learner expects. The resulting TV mixing estimate may be of independent interest to the MCMC community.

{\bf Lower bound.}
We complement these upper bounds with an information-theoretic lower bound on
the normalized observation time required by any structure-learning algorithm.
For a broad family of GGMs, every estimator must observe the Glauber dynamics
for at least
\(
    \Omega\!\left(\frac{\log(p-d)}{\beta_{\min}^{2}}\right)
\)
normalized time. In the scaling $\beta_{\min}=\Theta(1/d)$, this becomes
$\Omega(d^{2}\log p)$. This benchmark places \algtwo{} close to minimax
optimal in the Dobrushin regime: if $\kappa\asymp\beta_{\min}$, the maximum
degree is bounded, and $1-r$ is bounded below by a constant, then its normalized
observation time matches the lower bound up to polylogarithmic factors. For
general sparse graphs, the remaining factors have transparent sources, namely
the sample requirement of the i.i.d.\ base learner and the cost of converting a
dependent trajectory into approximately independent samples. The local
estimator \algone{} has a larger statistical overhead, but it provides a
complementary guarantee that admits a fully
parallel implementation.

{\bf Experiments.}
Finally, we include synthetic experiments on random $d$-regular GGMs to
illustrate the finite-sample behavior suggested by the theory. The experiments
track recovery as a function of observation time, the computational scaling of
the local statistic, and the effect of burn-in and thinning in the Dobrushin
regime. They support the qualitative tradeoff between the two approaches:
\algone{} exposes substantial edgewise parallelism and operates directly on the
dependent trajectory, whereas \algtwo{} behaves closer to an i.i.d.\ learner
after sufficient decoupling and inherits the assumptions and computational
profile of the chosen base procedure.

\section{Preliminaries \& Problem Statement}\label{sec: problem statement}

\textbf{Notation.} Scalar random variables are typically denoted by uppercase letters such as $X$ and $Y$, and the values taken by these random variables are denoted by corresponding lowercase letters such as $x$ and $y$. Random vectors are typically denoted in boldface; e.g.,  $\mathbf{X} \in \mathbb{R}^{p}$ is a $p-$dimensional random vector. The components of $\mathbf{X}$ and their corresponding sample values are written as $X_{i}$ and $x_{i}$, $i \in [p]$. Furthermore, for an index set $A \subseteq [p]$, $\mathbf{X}_{A}$ denotes the the $|A|-$dimensional subvector of $\mathbf{X}$ that has the variables corresponding indices in $A$ concatenated.  Matrices are often denoted with boldface Greek letters in uppercase. For example, $\boldsymbol{\Theta} \in \mathbb{R}^{p \times p}$ is a $p \times p$ matrix. Let $A,B \subseteq [p]$. The submatrix, denoted by $\boldsymbol{\Theta}_{AB}$, is a $|A| \times |B|$ matrix with the elements $\theta_{ab}$ where $a \in A$ and $b \in B$. Typically, continuous-time (and discrete-time stochastic processes) are denoted as a collection $\{\mathbf{Y}^{(t)}\}_{t \in \mathbb{R}} $ (and as a sequence $\{\mathbf{X}^{(n)}\}_{n \in \mathbb{N}}$) where $t \in \mathbb{R}$ represents time (and $n\in \mathbb{N}$ represents the state number.) 

\subsection{Gaussian Graphical Models}
\label{subsec: gaussian graphical model}
Let $G = (V,E)$ be a graph on a vertex set $V = [p] = \{1,2,\ldots, p\}$ with an edge set $E \subseteq {V \choose 2}$. For each vertex $i \in[p]$, its neighborhood is defined as $N(i) \triangleq \left\{j : \left\{i,j\right\} \in E\right\}$ and its closure is defined as $\operatorname{cl}(i) \triangleq N(i)\cup\{i\}$. The degree of node $i$ is defined as $d_i \triangleq |N(i)|$, while degree of the graph $G$ is $d = \max_{i\in [p]} d_i$. A graphical model is an ordered pair $(G, \mathbf{X})$, where $\mathbf{X}$ is a zero-mean $p$-dimensional random vector such that (a) each vertex $i \in [p]$ of the graph is identified with the corresponding component $X_i$ of $\mathbf{X}$ and (b) the edge structure of $G$ encodes the conditional (in)dependences of $\mathbf{X}$: $X_i \indep X_j \mid X_{V\setminus\{i,j\}} \Rightarrow \{i,j\}\notin E$. If this relationship holds, we say that $\mathbf{X}$ is markov with respect to $G$. 

$(G, f_{\mathbf{X}})$ is said to be a {\em Gaussian graphical model} or a {\em Gauss-Markov random field} if $f_{\mathbf{X}}$ is the multivariate normal distribution. That is, there is a symmetric positive definite matrix $\boldsymbol{\Sigma} \in \mathbb{S}^{p}_{++}$ such that 
\begin{align*}
    f_{\mathbf{X}}(x) = \frac{1}{(2\pi)^{p/2} \abs{\boldsymbol{\Sigma}}^{1/2}} \exp\left( -\frac{1}{2}x^{T} \boldsymbol{\Sigma}^{-1}x \right).
\end{align*}

The matrix $\boldsymbol{\Theta} \triangleq \boldsymbol{\Sigma}^{-1}$ is called the {\em precision} or {\em concentration} matrix of $\mathbf{X}$. An attractive feature of the Gaussian graphical model, which lies at the heart of its broad applicability, is that when $\mathbf{X}$ is Markov with respect to $G$, the following holds\footnote{the fact that $\theta_{ij} = 0$ implies that the corresponding edge does not exist in the graph requires the so-called faithfulness assumption~\citep{lauritzen1996}, which is only violated on a set of measure 0; we will assume faithfulness holds here.} 
\begin{equation}\label{eq: Conditional Independence Property of GGM}
   \{i,j\} \notin E \iff (\boldsymbol{\Sigma}^{-1})_{ij} = \boldsymbol{\Theta}_{ij} = 0.  
\end{equation}

Another useful feature of the multivariate Gaussian is that it is closed under conditioning~\citep{rue2005gaussian}. That is, suppose $A \subset V$ and denote by $B$ the set $V \setminus A$ so that $
\mathbf{X} = \begin{pmatrix}
\mathbf{X}_{A} \\
\mathbf{X}_{B}
\end{pmatrix}
$ is partitioned by the subvectors $\mathbf{X}_A$ and $\mathbf{X}_{B}$ with zero mean and its precision matrix partitioned accordingly, 
\begin{align}
\boldsymbol{\Theta} = \begin{pmatrix}
\boldsymbol{\Theta}_{AA} & \boldsymbol{\Theta}_{AB} \\
\boldsymbol{\Theta}_{BA} & \boldsymbol{\Theta}_{BB}
\end{pmatrix}.
\end{align}

Then the distribution of $\mathbf{X_A}$ conditioned on $\mathbf{X_B} = x_B$ is also Gaussian 
with mean 
$\boldsymbol{\mu}_{A|B}$ and precision matrix $\boldsymbol{\Theta}_{A| B}$, where 
\begin{align*}
\boldsymbol{\mu} _{A | B}= -\boldsymbol{\Theta}^{-1}_{AA} \boldsymbol{\Theta}_{AB}\left(x_B\right) \quad \text { and } \quad \boldsymbol{\Theta}_{A | B}=\boldsymbol{\Theta}_{AA}.
\end{align*}

Using the block matrix inversion lemma~\citep{Horn_Johnson_2012_Matrix}, we get $\boldsymbol{\Theta}_{AA}^{-1} =\boldsymbol{\Sigma}_{AA}-\boldsymbol{\Sigma}_{AB} \boldsymbol{\Sigma}_{BB}^{-1} \boldsymbol{\Sigma}_{BA}$.
From this one can check that, for any $i\in V$, the conditional distribution of $X_i$ given $\mathbf{X}_{V \setminus \{i\}}$ is a normal distribution with mean and variance given by 
\begin{align}\label{eq: conditional normal distribution}
       \mu_{X_i | \boldsymbol{X}_{V \setminus \{i\}}} &= \sum_{j \in V\setminus \{i\}}-\frac{\theta_{ij}}{\theta_{ii}} X_j\text{ and } \nonumber \\  \sigma^{2}_{X_i | \mathbf{X}_{V \setminus \{i\}}} &= \sigma_{i}^2-\boldsymbol{\Sigma}_{i, V \setminus \{i\}} \boldsymbol{\Sigma}_{V \setminus \{i\},V \setminus \{i\} }^{-1} \boldsymbol{\Sigma}_{V \setminus \{i\}, i}.
\end{align}

From the above and the relationship between the precision matrix and the graph structure  \ref{eq: Conditional Independence Property of GGM}, it can be checked that $X_i$, given $\mathbf{X}_{V \setminus \{i\}}$, satisfies
\begin{equation}\label{eq: update equation}
    X_i = \sum_{j \in N(i)}\beta_{ij} X_{j} + \epsilon_i,
\end{equation}
where $\epsilon_i \sim \mathcal{N}(0, \sigma^{2}_{X_i | V \setminus \{i\}})$, $\epsilon_i \indep X_{j}$ for $j \in V\setminus\{i\}$, and $\beta_{ij} = -\frac{\theta_{ij}}{\theta_{ii}}$. Indeed, this fact played an important role in~\citep{Meinshausen2006Neighborhood}, which is considered a landmark paper on Gaussian graphical model selection.

The above property~\ref{eq: update equation} allows one to define a natural Markov chain on the GGM, called the Glauber dynamics, where the transitions depend only on local conditions.

\subsection{Glauber Dynamics \& The Data Model}\label{subsec: Glauber dynamics}

Glauber dynamics, originally introduced in statistical physics by~\citep{Glauber1963Ising}, has found important applications across various disciplines to study and model the behavior of dynamical systems. Given that it is also equivalent to single-site Gibbs sampling, it is often used as a Markov chain Monte Carlo (MCMC) algorithm to obtain samples from stationary distributions. For a comprehensive overview, the reader may consult~\citep{LevinPeresWilmer2006, Martinelli1999}. 
In this section, we provide a self-contained definition of Glauber dynamics as a stochastic process whose stationary distribution is $f_{\mathbf{X}}$, which in turn is associated with a Gaussian graphical model (GGM) $(G,\mathbf{X})$. We will then study the problem of learning the edge structure of $G$, given observations from this stochastic process. 

Continuous-time Glauber dynamics is a random process $\{\mathbf{Y}^{(t)}\}_{t \geq \mathbb{R}}$ of $p-$dimensional random vectors $\mathbf{Y}^{(t)}$ defined as follows. The process is initialized with $\mathbf{Y}^{(0)}$, an arbitrary (possibly random) vector.  It is then updated at an increasing sequence of random time points $\{S_n\}_{n \in \mathbb{N}} \subset \mathbb{R}_+$, with $S_0 = 0$, where $n$ represents the state or the round. These update times are distributed such that the time between updates $H_{n} \triangleq S_{n+1} - S_{n}$, called {\em holding times}, are independent and exponentially distributed with parameter $p$. Notice that this implies that in one unit of time, there are $p$ updates on average\footnote{It can also be checked that this is equivalent to associating a rate $1$ Poisson clock with each random variable, and updating the corresponding random variable at the arrival time according to the appropriate update rule.}. 

During the holding time, the process does not change, and it can be represented by a random vector called the {\em state vector}. Therefore, the $n^{th}$ state vector is $\mathbf{X}^{(n)} \triangleq \mathbf{Y}^{(t)}$, $S_{n} \leq t < S_{n+1}$. The updates or transitions between states obey the following rule: At time $S_{n}$, an index $I^{(n)} \in [p]$ is selected uniformly at random. If node $i$ was chosen at round $n$, i.e, $I^{(n)} = i$, the random variable $X_{i}^{(n)}$ is then sampled from $f\left(\mathbf{X}_{i} | \mathbf{X}_{V \setminus \{i\}} = \mathbf{X}^{(n-1)}_{V \setminus \{i\}}\right)$. The values of all other components of the random vector remain unchanged. In the case of multivariate Gaussians the update takes the following form: 
\begin{align}\label{eqn: glauber ggm updates}
    X_{i}^{(n)} &= \sum_{j\in N(i)} \beta_{ij} X_{j}^{(n-1)} + \epsilon_{i}^{(n)} \text{ and } \\
    X_{k}^{(n)} &= X_{k}^{(n-1)}, k \neq i,
\end{align}
where again $\epsilon_i^{(n)} \sim \mathcal{N}\left(0, \sigma^{2}_{X_i | \mathbf{X}_{N(i)}}\right)$ is independent of $X_{k}^{(n')}$ for all $k\in [p],n' \in \mathbb{N}$ except $X_{i}^{(n)}$ and $\beta_{ij} = -\frac{\theta_{ij}}{\theta_{ii}}$. This process is known to converge to corresponding multivariate Gaussian distribution $f_{X}$. We refer the interested reader to~\citep{AMIT199182}, e.g., for convergence results.

Equivalently, the update can be written as an affine transformation. That is if $I^{(t)} = i$, then $\mathbf{X}^{(t)} = M_i \mathbf{X}^{(t-1)} + \Theta_{ii}^{-1/2} e_i \xi^{(t)}$, where $M_i = I - D_i \Theta$ and $D_i$ is the diagonal matrix whose only nonzero entry is $(D_i)_{ii} = \Theta_{ii}^{-1}$. Fix an update schedule $\sigma = (\sigma_1,\dots,\sigma_t) \in [p]^t$. For integers $1 \le a \le b \le t$, define
\begin{align*}
\Phi_a^b(\sigma) := M_{\sigma_b} M_{\sigma_{b-1}} \cdots M_{\sigma_a}, \text{ and } \mathbf{B}_a^b(\sigma) := \sum_a^b \Phi_{s+1}^{t}(\sigma)\Theta_{\sigma_s \sigma_s}^{-1/2} e_{\sigma_s} \xi^{(s)}.
\end{align*}

Iterating the affine update gives $\mathbf{X}^{(t)} = M_\sigma x + \mathbf{B}_\sigma$, where $M_\sigma = \Phi_1^t(\sigma)$ and $\mathbf{B}_{\sigma} = \mathbf{B}_1^t(\sigma)$. Further, if $\mathbf{Z}=(\xi^{(1)},\dots,\xi^{(t)})^\top \sim \mathcal{N}(0,I_t)$, then $\mathbf{B}_\sigma = V_\sigma \mathbf{Z}$, with
\begin{align*}
V_\sigma =
\begin{bmatrix}
V_\sigma(1) & V_\sigma(2) & \cdots & V_\sigma(t)
\end{bmatrix},
\qquad
V_\sigma(s) := \Phi_{s+1}^t(\sigma) \Theta_{\sigma_s \sigma_s}^{-1/2} e_{\sigma_s}.
\end{align*}
Therefore, conditional on the schedule $\sigma$, $\mathbf{X}^{(t)} = M_\sigma x + V_\sigma \mathbf{Z}$ is a Gaussian vector since it is an affine transformation of $\mathbf{Z}$ with mean $M_{\sigma}x$ and covariance $V_{\sigma}V_{\sigma}^{\top}$. When $\sigma$ is not fixed, we shall write the affine update as $\mathbf{X}^{(t)} = \Phi_{1}^{t}x + \mathbf{B}_{1}^{t}$.

As discussed in Section~\ref{sec: introduction}, we suppose that this stochastic process generates the data we observe. In particular, we suppose that one observes the continuous-time Glauber dynamics $\{\mathbf{Y}^{(t)}\}$ for $T$ units of time, generating $N = Tp$ updates (in expectation) of the underlying state. In the sequel we will treat the number of updates $N$ as a deterministic quantity given that it concentrates very rapidly to $Tp$. For convenience, we suppose that the dataset includes the time, the identity of the updated node, and the corresponding updated state vector, forming the collection $\mathcal{D} = \left\{\left(\mathbf{X}^{(n)}, S_{n},I^{(n)}\right)\right\}_{n=0}^{N}$,  where we set $I^{(0)} = 1$. 
The sequence $\{\mathbf{Y}^{(t)}\}_{t=0}^{T}$ is equivalent to the recorded dataset $\mathcal{D}$ (with probability 1) and we will use these interchangeably.

\subsection{Transition kernel, total variation, and subsampling}\label{subsec: tv burn-in thinning}

Before turning to the structure learning problem itself, we collect three pieces of notation that recur throughout the paper: the one-step transition kernel of the discrete-time chain, the total variation distance on probability measures, and the burn-in/thinning parameters that govern subsampling of Glauber dynamics.

\textbf{Transition kernel.} Let $K$ denote the one-step transition kernel of the discrete-time chain $\{\mathbf{X}^{(n)}\}_{n\in\mathbb{N}}$, and $K^{n}$ its $n$-step iterate; that is, $K^{n}(x,\cdot)$ is the law of $\mathbf{X}^{(n)}$ started from $\mathbf{X}^{(0)}=x$.

\textbf{Total variation.} For probability measures $\mu,\nu$ on $\mathbb{R}^{p}$, the total variation distance is
\begin{align}
\norm{\mu-\nu} := \sup_{A \text{ measurable}}\abs{\mu(A) - \nu(A)} = \frac{1}{2}\sup_{\substack{\norm{f}_{\infty}\leq 1 \\ f \text{ measurable}}}\abs{\mu f - \nu f}.
\label{eq:tv-norm}
\end{align}

\textbf{Burn-in and thinning.} For burn-in parameter $\mathfrak{b}\ge 0$ and thinning parameter $\mathfrak{t}\ge 1$, the retained sequence is
\begin{align}
\mathbf{Y}^{(s)} := \mathbf{X}^{(\mathfrak{b}+s\mathfrak{t})}, \qquad s=0,1,\ldots,m-1,
\label{eq:burned-thinned-samples}
\end{align}
with $\mathfrak{b}+(m-1)\mathfrak{t}\le N$. We write $\mathbf{Y}_{0}^{m-1}:=(\mathbf{Y}^{(0)},\ldots,\mathbf{Y}^{(m-1)})$ for the retained subsequence and $\mathcal{L}(\mathbf{Y}_{0}^{m-1}\mid \mathbf{X}^{(0)}=x)$ for its joint law. The i.i.d.\ reference sequence $\mathbf{Z}_{0}^{m-1}\sim\pi^{\otimes m}$ consists of $m$ independent draws from the stationary target $\pi=\mathcal{N}_p(0,\boldsymbol{\Theta}^{-1})$. This subsampling is the basis for the algorithm developed in Section~\ref{sec:algorithm-2-rewrite}.

\subsection{The Structure Learning Problem}\label{subsec: the structure learning problem}
Our goal is to learn the edge set $E$ of the graph underlying the Gaussian graphical model $\left(G, f_{\mathbf{X}}\right)$, given access to observations $\mathcal{D} = \left\{ \left( \mathbf{X}^{(n)}, S_n, I^{(n)} \right) \right\}_{n=0}^{N} \in \left(\mathbb{R}^p \times \mathbb{R} \times V\right)^{N}$
from the corresponding Glauber dynamics. We define a structure learning algorithm as a map $\phi:\left(\mathbb{R}^p \times \mathbb{R} \times V\right)^{Tp} \rightarrow \binom{V}{2}$
that accepts a dataset and produces an estimate of the graph's edge structure. The goal is to understand how long the Glauber dynamics need to be observed to confidently recover the graph as a function of various parameters of the problem. Toward this, for any $\delta\in (0,1)$, we define $T_\ast(\phi, \delta)$ as the smallest observation time with which the algorithm $\phi$ can be guaranteed to recover the graph structure with probability at least $1 - \delta$. That is,
\begin{align}
T_\ast (\phi, \delta) &= \inf \{T > 0: \mathbb{P}\left[ \phi(\mathcal{D}) = E \right]\geq 1 - \delta\}.
\end{align}
In Section~\ref{sec: the algorithm}, we introduce an algorithm $\phi$ for which Theorem~\ref{thm: final theorem} gives an upper bound on $T_\ast(\phi, \delta)$. In  Section~\ref{sec: lower bounds}, we provide lower bounds on $\min_\phi T_\ast(\phi, \delta)$ for a natural and broad family of problems, thereby showing that the algorithm we propose is nearly minimax optimal for this class of problems.

\section{Related Work}\label{sec: related work}

{\bf Graphical Model Selection from i.i.d Data. }
There is a large literature on graphical model selection from i.i.d.\
samples, including classical tree and polytree methods~\citep{chow1968tree,
chow1973polytrees}, penalized likelihood and neighborhood-selection estimators
for Gaussian models~\citep{banerjee2007model,Yuan2007Gaussian,
Meinshausen2006Neighborhood}, and information-theoretically optimal procedures
such as DICE~\citep{misra2020information}. We refer the reader
to~\citep{drton2016structure,zhou2011structurelearningprobabilisticgraphical}
for recent and broader surveys of this literature.

{\bf Graphical Model Selection from Dynamics. }Recently, there has been growing interest in learning graphical models from dynamical data generated by local Markov chains, particularly Glauber dynamics. The work of~\citep{bresler2014learning} pioneered this area, focusing on learning Ising models from Glauber dynamics.~\citep{dutt2021exponentialreductionsamplecomplexity} further improved upon this by enhancing node-wise regression algorithms, such as RISE, which were originally designed for the i.i.d. settings. On the other hand,~\citep{gaitonde2023unifiedapproachlearningising} generalized this problem to a broader class of local Markov chains, while~\citep{gaitonde2024efficientlylearningmarkovrandom} extended the analysis to MRFs with higher-order interactions, beyond binary pairwise models. All these works focus on binary or, at most finite-sized alphabets for the random variables in their models. In contrast, we tackle the problem of structure learning from the well-known dynamical process of Glauber dynamics with Gaussian random variables that take values in $\mathbb{R}$.
The fact that these random variables are unbounded introduces unique challenges that make structure learning difficult. We discuss these challenges in detail in the following section and provide intuition for how \algone{} addresses these challenges.

{\bf Mixing-Time Bounds for High-Dimensional Gibbs Samplers. }A second body of literature, relevant to \algtwo{}, comes from the Markov chain Monte Carlo (MCMC) community. There, the canonical recipe for extracting approximately i.i.d.\ samples from a Markov chain is \emph{burn-in and thinning}: an initial segment is discarded to remove initialization bias, and thereafter only every $\mathfrak{t}$-th sample is retained to reduce temporal correlation~\citep{jones2004sufficient, jones2001honest, maceachern1994subsampling, link2012thinning, gamerman2006markov, owen2017statistically, riabiz2022optimal}. Quantifying how aggressively one can burn in and thin while keeping the retained subsample close in joint law to a product of stationary draws ultimately reduces to sharp upper bounds on the total variation (TV) distance between the chain at time $t$ and its stationary distribution.

Drift-and-minorization is a standard tool for quantitative TV mixing
analysis~\citep{meyn1994computable,rosenthal1995minorization,jones2001honest},
but the resulting bounds can be loose in high dimensions. This limitation is
formalized through convergence-complexity analyses~\citep{rajaratnam2015mcmc}
and through recent exposure of limitations in high-dimensional Markov
chains~\citep{qin2022wasserstein}.

Alternative routes carry their own restrictions in high dimension. Conductance- and isoperimetry-based methods have recently produced sharp entropy contraction rates for the Gibbs sampler under log-concavity~\citep{ascolani2024entropy}, but the resulting bounds presume a warm start and degrade under arbitrary initialization; explicit mixing-time estimates for log-smooth log-concave targets in~\citet{wadia2024gibbs} depend on the condition number. For Gaussian targets specifically, spectral-gap analyses going back to~\citet{AMIT199182} and continued in~\citet{roberts1997updating, liu1994covariance} give convergence rates in Wasserstein distance but depend on the condition number $\kappa(\Theta)$, which we do not wish to assume controls on.

The most useful high-dimensional estimates for the random-scan Gaussian Gibbs sampler instead come from the transport side. Under a Dobrushin-style contraction condition on the conditional regressions,~\citet{wang2014convergence} obtain a Wasserstein contraction for deterministic-scan Gibbs samplers, and~\citet{wang2017convergence} establishes the analogous Wasserstein convergence rate for the random-scan Gibbs sampler, producing a high-dimensional Wasserstein convergence rate without spectral or condition-number assumptions. What is missing from this line of work, and what \algtwo{} supplies, is a conversion from these Wasserstein bounds into TV bounds usable inside the burn-in/thinning reduction; this is the gap closed in Section~\ref{sec:algorithm-2-rewrite}.

{\bf Gaussian GGMs from Glauber dynamics. }
Most closely related to the present work is Shen, Wu, Majid, and Moitra~\cite{SWMM26}, which studies Gaussian graphical model selection from a
single Glauber trajectory without waiting for the chain to mix. They give a
polynomial-time algorithm under sparsity and normalized edge-strength
assumptions, and their local estimator uses \(i,i,j,i\)-type update windows
together with robust aggregation of ratio statistics. Their paper also explains
the dependence obstruction in the \(i,j,i\)-based ratio estimator used in the
preliminary version of this work. A key strength of their result is that
their no-mixing guarantee does not require a Dobrushin or fast-mixing condition

Our local method, which was developed independently of~\cite{SWMM26}, addresses the same
obstruction using a product-moment statistic rather than a ratio statistic,
which leads to a different expectation calculation and martingale concentration
analysis. Our results are framed under the Dobrushin condition holding, and this allows us to devise a second method, which is orthogonal: under contraction, we prove
a total-variation decoupling bound for burned-in and thinned Gaussian Gibbs
trajectories, yielding a black-box reduction to i.i.d. Gaussian graphical model
selection.

\section{\algone{}: \algonefull{}}\label{sec: algone}

\subsection{Intuition behing the Algorithm}\label{sec: intuition}

We introduce \algone{} (\algonefull{}), which estimates the edge set of the graph underlying the Gaussian graphical model directly from observations of Glauber dynamics. Our strategy is to construct a hypothesis test for the existence of each potential edge $\{i,j\} \in \binom{V}{2}$. In particular, we will create an estimate of the edge strength $\beta_{ij}$, and declare that an edge exists in $G$ if this estimate is above an appropriate threshold. We will next provide some intuition on how we construct this test.

Consider any two nodes, $\{i,j\} \in {V \choose 2}$. Imagine there is a sequence of updates where these two nodes take turns being updated. First, node $i$ gets updated twice at separate times, then node $j$, and then node $i$ again as shown in Fig~\ref{fig: updates}. Further, let us suppose that during this sequence of updates, none of the neighbors of node $i$ (other than possibly node $j$) are updated.
Formally, assume there exist integers $0 \leq n_0 \leq n_1 < n_2 < n_3 < n_4 \leq n_{5}$ such that: (a) $I^{(n)} \notin N(i) \setminus \{j\}$ for all $n$ in the range $n_0 \leq n \leq n_5$, (b) $I^{(n)} = i$ for $n = n_1, n_2, n_4$, and $I^{(n)} = j$ for $n = n_3$.
 
\begin{figure}[!t]
\centering
\scalebox{0.85}{
\begin{tikzpicture}
    \draw[dashed, thick] (-1.5, 1.7) rectangle (11.5, -2.5);
\draw[thick, ->] (-1, -1) -- (10.5, -1) node[anchor=north west, xshift=-20pt, yshift=28pt, align = center] {update \\ number};
\node at (0, -1.3) {$n_0$};
\node at (2, -1.3) {$n_1$};
\node at (4, -1.3) {$n_2$};
\node at (6, -1.3) {$n_3$};
\node at (8, -1.3) {$n_4$};
\node at (10, -1.3) {$n_5$};
\draw[->, thick] (2, -1.1) -- (2, 0.5) node[anchor=south] {\fcolorbox{black}{blue!20}{$i$ updated}};
\draw[->, thick] (4, -1.1) -- (4, 0.5) node[anchor=south] {\fcolorbox{black}{blue!20}{$i$ updated}};
\draw[->, thick] (6, -1.1) -- (6, 0.5) node[anchor=south] {\fcolorbox{black}{red!20}{$j$ updated}};
\draw[->, thick] (8, -1.1) -- (8, 0.5) node[anchor=south] {\fcolorbox{black}{blue!20}{$i$ updated}};
\draw[decoration={brace,mirror,raise=5pt},decorate] (0, -1.5) -- (10, -1.5)
node[pos=0.5,anchor=north,yshift=-7pt]{no updates in $N(i) \setminus \{j\}$};
\end{tikzpicture}
}
\caption{\footnotesize Visualization of the idealized update sequence for vertices $i$ and $j$. At times $n_1$, $n_2$, and $n_4$, node $i$ is updated, while node $j$ is updated at time $n_3$. No updates occur for any other neighbors of $i$ in the interval between $n_0$ and $n_5$, except possibly $j$.}
\label{fig: updates}
\end{figure}

Under this idealized scenario, given that no neighbor of node $i$ was updated between $n_0$ and $n_5$, from \ref{eqn: glauber ggm updates}, we expect the change in the value of $X_{i}$ between $n_{1}$ and $n_{5}$ to be solely due to the update of node $j$ at $n_{3}$ (if $j$ is indeed a neighbor of $i$), aside from zero-mean noise terms. This change must depend on (a) the change in $X_{j}$, and (b) the influence of $j$ on $i$; the latter, of course, is captured by $\beta_{ij}$. The following lemma formalizes this relationship and forms the basis for our edge test. 

\begin{lemma}\label{lemma: intuition}
Under the idealized conditions on $n_{0},n_{1},n_{2},n_{3},n_{4}, n_5$ described above, for any $\bar{x}\in \mathbb{R}^{\abs{N(i)\setminus j}}$ we have,
\begin{align}\label{eqn: intuition}
	\abs{\E[]{\left(X_{i}^{(n_{4})}-X_{i}^{(n_{1})}\right)\left( X_{j}^{(n_{3})}-X_{j}^{(n_{1})} \right) \middle |  \mathbf{X}^{(n_{0})}_{N(i)\setminus j}=\bar{x}}} \geq \abs{\beta_{ij}}\theta_{jj}^{-1}.
\end{align}
\end{lemma}

Note  when $\{i,j\} \not\in E$ then the expectation is exactly zero. This suggests a strategy to estimate the magnitude of $\beta_{ij}$: find multiple update intervals that satisfy the above condition, and compute the average of the product of the changes in between $X_i$ and $X_j$ values. This is the basis for the construction of our test statistic (see Definition~\ref{def: Test statistic}). 

However, this naive strategy has some limitations. First, while we can tell whether $i$ and $j$ alternate in updates, there is no way to guarantee that the neighbors of $i$ remain unaffected during the interval in question (since we do not know the graph structure!). To address this, we show how one can choose the length of the update interval appropriately to make sure that the required sequence of updates for nodes $i$ and $j$ is observed frequently enough, while making sure the neighbors of node $i$ are not updated too often; see Lemmas~\ref{lemma: Probability of event U}-\ref{lemma: Probability of event Q}.

The idea behind our test and this fix is inspired by~\citep{bresler2014learning}. There are however several challenges specific to the Gaussian case that do not occur in~\citep{bresler2014learning} for the Ising model case. First, the form of the test statistic is rather different and demands a different analysis technique. That is, the proposed test statistic considers not merely the probability of sign flips, but instead the  product of the change in the values of samples. Second, this test statistic could be uninformative (even in expectation) if the updates are too large; this is not a concern in the Ising model case, where all random variables are bounded. Our algorithm and analysis carefully deal with these while also ensuring that the test statistic retains enough power to recover the underlying graph.

The asymmetry of the \(i,i,j,i\)-type window is essential. A preliminary version~\cite{TRD25}
of this work used an \(i,j,i\)-type update pattern and a ratio of coordinate
increments. That statistic is not valid in general: when \(\{i,j\}\in E\), the
middle \(j\)-update can depend on the noise injected during the preceding
\(i\)-update, so the denominator of the ratio is not conditionally independent
of the numerator noise. The additional update of coordinate \(i\) acts as a
buffer. After this second \(i\)-update, the subsequent change in \(i\) can be
related to the intervening change in \(j\) without carrying the problematic
dependence from the first \(i\)-update. We therefore aggregate products of
increments rather than ratios. The product form is less direct as an estimator
of \(\beta_{ij}\), but it has the conditional moment structure needed for a
valid edge test.
 
\subsection{The \algone{} Algorithm}\label{sec: the algorithm}

From the discussion above, our strategy is to construct a test statistic to detect edges in $G$ by estimating the conditional expectation in Lemma~\ref{lemma: intuition}. We know this quantity is only useful when the conditions described in Figure~\ref{fig: updates} occur. Towards constructing an estimator, we will suppose that we choose a small $\tau >0$, and divide the length $T$ of the Glauber dynamics process into $k_{\max} \triangleq \floor{\frac{T}{\tau}}$ intervals of length $\tau$. We will next show that with a careful choice of $\tau$, we can enforce that our desired events occur frequently enough to get a usable test statistic.

First, for any pair of vertices $\{i,j\} \in [p]$ and $k\in [k_{\max}]$, we define the update event $U_{ij}^k$, where $U$ stands for update, which records whether the updates of the pair $\{i,j\}$ occur in the required order in the $k$-th time interval. We write $I_{t}$ to represent the identity of the last update node on or before time $t$. 
\begin{definition}[Update event $U_{ij}^k$]\label{def: Event U} For $k \in [k_{\max}]$,
     let the interval $[(k-1)\tau,k\tau)$ be divided into fourths as below
     \begin{align*}
         &W_{1} = [(k-1)\tau,(k-3/4)\tau), \\
         &W_{2} = [(k-3/4)\tau,(k-1/2)\tau), \\
         &W_{3} = [(k-1/2)\tau,(k-1/4)\tau), 
         \text{ and } 
         W_{4} = [(k-1/4)\tau,k\tau),
    \end{align*}
    For $i,j \in [p]$, we define
    \begin{align*}
        U_{ij}^{k} =
        \bigg\{&\exists t \in W_{1} : I_{t}= i,  \forall t \in W_{1} : I_{t} \neq j \bigg\}  \bigcap
        \bigg\{\exists t \in W_{2}:I_{t}  = i , \forall t \in W_{2}:I_{t} \neq j  \bigg\}  \\ &\bigcap
        \bigg\{\exists t \in W_{3}:I_{t}  = j , \forall t \in W_{2}:I_{t} \neq i  \bigg\} \\ &\bigcap
        \bigg\{\exists t \in W_{4}: I_{t} = i, \forall t \in W_{3}: I_{t} \neq j 
        \bigg\}.
    \end{align*}
\end{definition} 

Whenever this event occurs, we update our test statistic with the product of changes as suggested by \ref{eqn: intuition}. We next show that the event occurs with a non-vanishing probability, and therefore we know that its impact on the test statistic is controllable by the choice of    $\tau$.
\begin{lemma}[Probability of update event $U_{ij}^k$]\label{lemma: Probability of event U}
    For $k \in [k_{\max}]$, $i,j \in [p]$ we have
    \begin{align*}
        \prob[]{U_{ij}^{k}} = \left[(1-e^{-\tau/4})e^{-\tau/4}\right]^{4} \triangleq q.
    \end{align*}
\end{lemma}
Next, we define $Q_{ij}^{k}$, the event where the rest of the neighborhood is quiet, that is, no element in $N(i)\setminus\{j\}$ is updated in the interval $[(k-1)\tau, k\tau)$.
\begin{definition}[Quiet neighborhood event $Q_{ij}^{k}$]
    For any $k \in [k_{\max}]$ and $\{i,j\}$ pair we define
    \begin{align*}
        Q_{ij}^{k} = \left\{I_{t} \notin N(i)\setminus \{j\}, \forall (k-1)\tau \leq t < k\tau\right\}.
    \end{align*}
\end{definition}
As we noted earlier, we cannot observe whether this event occurs. However, in the following lemma, we show that we can control how often this event occurs by controlling the parameter $\tau$. The smaller $\tau$ is, the more likely it is that $Q_{ij}^k$ occurs. On the other hand, the smaller $\tau$ is, the less likely it is that the update event $U_{ij}^k$ occurs. As part of our analysis, we will identify a value for $\tau$ that balances this tension.

\begin{lemma} \label{lemma: Probability of event Q}The probability of event $Q_{ij}^{k}$ for any $k \in [k_{\max}]$ satisfies $\prob[]{Q_{ij}^{k}} \geq e^{-\tau d}.$
\end{lemma}

Following the discussion in Lemma~\ref{lemma: intuition}, our test statistic will be computed as the product of changes in the values of coordinates when the appropriate $U_{ij}^{k}$ event occurs. There is, however, a catch. Given that the $Q_{ij}^{k}$ events are not observed, the expected value of the test statistic could depart from its idealized value (i.e., the edge strength) since we cannot condition on the appropriate $Q_{ij}^{k}$. While we can make the probability of $Q_{ij}^{k}$ not occurring as small as we want, if the test statistic is unbounded, this can essentially render the test powerless; that is, the test may not be able to detect the presence or absence of edges even in expectation. Note that~\citep{bresler2014learning},~\citep{dutt2021exponentialreductionsamplecomplexity}, and~\citep{gaitonde2024efficientlylearningmarkovrandom} do not have to deal with this as the Ising model is bounded and nicely behaved. To account for these eventualities, we next define two events and establish controls on their probabilities.

\begin{lemma}[Event B]\label{lemma: event B}
    For any $\delta > 0$ and sufficiently large $p$, there exist constants $C_{1}, C_{1}', C_{1}'' > 0$ such that 
    \begin{align*}
        \prob[]{\max_{i\in [p], t < C_1' p^{C_1''}} \left| Y_i^{(t)} \right| \leq C_1 \sigma_{\max} \sqrt{\log \left( \frac{p}{\delta} \right)}} \geq 1 - \delta/2.
    \end{align*}
\end{lemma}

For a fixed $\delta$, we will define $\ymax \triangleq C_1 \sigma_{\max} \sqrt{\log \left( \frac{p}{\delta} \right)},$ and $B_{\delta} \triangleq \left\{\max_{i\in [p], t < C_1' p^{C_1''}} \left| Y_i^{(t)} \right| \leq \ymax \right\},$
and define the appropriate conditional measure and expectation as $\mathbb{P}_{B}[\cdot] = \prob[]{\cdot|B_{\delta}}$ and $\E[B]{\cdot} = \E[]{\cdot | B_{\delta}}$. Letting $T_{\max} \triangleq C_{1}^{'}p^{C_{1}''}$, the bound above ensures that the observed trajectories of the Glauber dynamics are bounded for all time $t< T_{\max}$. As we prove in Theorem~\ref{thm: final theorem}, we only need to observe the process for $\Omega\left(\frac{d^3\operatorname{polylog}p}{\bmin^5}\right)$ time, and for appropriately chosen constants, the event $B_{\delta}$ holds for significantly longer. This forms one part of our strategy to control the test statistic in light of the possible unboundedness discussed above.

We can now define our test statistic based on Lemma~\ref{lemma: intuition} and the event we have defined and Lemma~\ref{lemma: Test statistic bounded as} shows that our test statistic is bounded almost surely.

\begin{definition}[Test statistic]\label{def: Test statistic}
    For any $i,j \in [p]$ we define 
    \begin{align*}       
    T_{ij} = \sum_{k=1}^{k_{\max}}\mathds{1}_{\{U_{ij}^{k}\}}\Delta Y_{i}^{k} \cdot \Delta Y_{j}^{k}.
    \end{align*}
    Here $\Delta Y_{i}^{k} = Y_{i}^{k\tau -} - Y_{i}^{(k-3/4)\tau -}$ and $\Delta Y_{j}^{k} = Y_{j}^{(k-1/4)\tau} - Y_{j}^{(k-1)\tau}$. We will let $T_{ij}^{(k)}$ denote the $k$-th term in the above summation.
\end{definition}

The event $B_{\delta}$ allows us to control the behavior of the test statistic
by bounding each summand of the statistic, as stated in
Lemma~\ref{lemma: Test statistic bounded as} below, and by bounding the
conditional variance in Lemma~\ref{appseclemma: variance bound} of
Appendix~\ref{sec: proofs of lemmas}. These two bounds address the main
unboundedness issues raised in Section~\ref{sec: intuition}.

\begin{lemma}\label{lemma: Test statistic bounded as}
    For $k \in [k_{\max}]$ and $i,j \in [p]$ the $k$-th term of the test statistic is bounded almost surely with respect to the conditional measure $\mathbb{P}_{B}$. That is, 
    \begin{align*}
        \abs{T_{ij}^{k}} \leq 4C_1^{2} \sigma_{\max}^{2} \log \left( \frac{p}{\delta} \right) = 4\ymax^{2} \quad(\text{a.s.}).
    \end{align*}
\end{lemma}

We now present our structure learning algorithm. The algorithm takes as parameters the dataset $\mathcal{D}$, the interval length $\tau$, a threshold $\rho$, and a margin of error $\delta$. First, it calculates $k_{\max}$, the number of $\tau$-length intervals in the dataset, and checks if the high-probability event $B_{\delta}$ occurs in the dataset. If not, it terminates the procedure and returns an empty edgeset (error). Otherwise, the algorithm iterates over all pairs of nodes $\{i, j\} \in \binom{V}{2}$, and, for each pair, computes the test statistic $T_{ij}$. If this test statistic over $k_{\max}$ intervals exceeds the threshold $\rho$, we declare an edge between nodes $i$ and $j$. 

\begin{algorithm}[H]
\caption{\algone{}}
\label{alg:cap}
    \begin{algorithmic}[1]
        \State {\bf Input:} $\mathcal{D}$ (dataset), $\tau$ (interval length), $\rho$ (threshold), $\delta$ (confidence parameter)
        \State Let $\hat{E} = \emptyset$ and $k_{\max} = \floor{\frac{T}
        {\tau}}$
        \State {\bf if} {$\mathds{1}_{B_{\delta}} = 1$}
             \State \qquad {\bf for every pair}\; {$\{i,j\}, i \neq j$:}
                \State \qquad\qquad {\bf compute} {$T_{ij} = \left|\frac{1}{k_{\max}} \sum_{k=1}^{k_{\max}} T_{ij}^{(k)}\right|$}
                \State \qquad\qquad {\bf if} {$T_{ij}\geq \rho$}, add $\{i,j\}$ to $\hat{E}$.
        \State \textbf{return} $\hat{E}$
    \end{algorithmic}
\end{algorithm}

\subsection{Performance guarantees}\label{sec: theorem}

We state the standing assumptions used by both \algone{} and \algtwo{}, and then give the performance guarantee for \algone{}.

We suppose that the edge strengths ($\beta_{ij}$ for some nodes $i,j$) are bounded away from zero and the edges are statistically distinguishable from non-edges.
\begin{assumption}[Bounded edge strength]\label{assumption: Bounded edge strength}
For all pair of edges $\{i,j\} \in E$, the partial regression coefficients $\beta_{ij}$ are bounded as \(\bmin \leq \left|\beta_{ij}\right|\leq \bmax\).
\end{assumption}
Further, we assume that no node has zero or arbitrarily large variance. In precision form, this is equivalent to a uniform bound on the diagonal of $\Theta$.
\begin{assumption}[Bounded variance]\label{assumption: Bounded variance}
For all $i \in [p]$, the marginal variance is bounded above by $\sigmamax^{2}$ and the conditional variance of $X_{i}$ given $X_{V \setminus \{i\}}$ is bounded below by $\sigmamin^{2}$. Equivalently, the diagonal of the precision matrix satisfies $\Theta_{\min}\le\Theta_{ii}\le\Theta_{\max}$ for all $i\in[p]$.
\end{assumption}

In many real-world processes, such as opinion dynamics~\citep{degroot}, each node update is a weighted average of neighboring values, so large values should attenuate rather than amplify across the graph. The next assumption captures this contractive behavior. It is also a simple sufficient form of the Dobrushin contraction condition for Gibbs samplers~\citep{wang2014convergence}: for Gaussian conditional laws, the influence of coordinate $j$ on coordinate $i$ is $|\beta_{ij}|$, and the total influence on any coordinate is at most $d\bmax$.

\begin{assumption}[Bounded degree and contraction]\label{assumption: bounded degree and sample decay}
    The graph $G$ has maximum degree at most $d$ and satisfies $d\bmax < 1$. Equivalently, this implies the Dobrushin uniqueness condition
    \begin{align}
    r:=\max_{i\in[p]}\sum_{j\ne i}|\beta_{ij}|\ \le\ d\bmax\ <\ 1,
    \end{align}
    and we write $\alpha := 1-r$ for the contraction gap.
\end{assumption}

\begin{theorem}\label{thm: final theorem}
    Consider observations from Glauber dynamics $\{\mathbf{Y}^{(t)}\}_{t=0}^{T}$ associated with a Gaussian graphical model $(G,f_{\mathbf{X}})$. Under the assumptions ~\ref{assumption: Bounded edge strength}-\ref{assumption: bounded degree and sample decay}, for any $\delta \in (0,1/2)$  and sufficiently large $p$ there exists constants $C_{2}, C_{3}$ and $C_{4}$ such that if the observation time $T$ satisfies ($T_{\max}$ is as in Lemma~\ref{lemma: High probability event B})
    \begin{align}
        T_{\max} \geq T \geq  \frac{C_{2}}{(\bmin\sigmamin)^5}d^{3}\log\left(\frac{p}{\delta}\right)^{5},  
    \end{align}
    
    and the interval length $\tau$ and threshold $\rho$ are set as 
    \begin{align*}
        \tau = d^{-1}\left[\frac{C_{3}{\log\left(\frac{p}{\delta}\right)}}{\sigmamin\bmin} + 1\right]^{-1}, \qquad
        \rho = C_{4}\tau {\log\left(\frac{p}{\delta}\right)},
    \end{align*}
    then \algone{} recovers the true edge set with probability at least $1-\delta$.
\end{theorem}

{\bf Remark 1.} Our algorithm is guaranteed to recover the true edge set given $\mathcal{O}(\frac{d^{3}\operatorname{polylog} p}{\bmin^{5}})$ Glauber updates per node. Notice that this quantity is an upper bound on $T_\ast$ defined in Section~\ref{sec: problem statement}. The f on $\beta_{\rm min}^4$ is possibly improvable with a different algorithmic choice and is an interesting avenue for future work.

{\bf Remark 2.} The computational complexity of \algone{} is  $\mathcal{O}\left(d^3p^3\operatorname{polylog}\bmin^{-5}\right)$ as the algorithm iterates over ${p \choose 2}$ pairs of vertices. It is worth noting that in the high-dimensional regime, \algone{} recovers the edge set in a significantly shorter observation time than it takes Glauber dynamics to mix which is $\Omega(p)$.

{\bf Remark 3 (Dobrushin generalization).} Theorem~\ref{thm: final theorem} is stated under the sufficient form $d\bmax<1$ of Assumption~\ref{assumption: bounded degree and sample decay}. The conclusion extends to the strictly weaker Dobrushin uniqueness condition $r<1$ with $\alpha=1-r$ in place of $1-d\bmax$, at the cost of tracking $r$ rather than $d\bmax$ inside the separation and concentration arguments.

{\bf Remark 4 (Comparison with lower bound).} The minimax lower bound of Theorem~\ref{thm:lower-bound-glauber} in Section~\ref{sec: lower bounds} shows that any consistent estimator requires $T = \Omega\bigl(\log(p-d)/\bmin^{2}\bigr)$ Glauber updates per node. In the regime $\bmin = \mathcal{O}(1/d)$, this shows that \algone{} matches the information-theoretic lower bound up to polynomial factors in $d$ and $\log p$. 

We provide only a brief proof strategy here. The full proof of
Theorem~\ref{thm: final theorem} appears in
Appendix~\ref{appsubsec: theorem 1 statement and proof}, and the supporting
lemma statements and proofs are collected in
Appendix~\ref{sec: proofs of lemmas}.
 
{\em Proof Strategy. } To bound the probability of error of \algone{}, we first condition on the event that $B_\delta$ happens (which we choose to be a high-probability event). Then, for each pair of vertices $\{i, j\}$, we first establish (in Lemma~\ref{appseclemma: separation2}) that the test statistic proposed is able to distinguish, in expectation, between the cases where $\{i,j\}$ are connected and where they are not; this requires a careful selection of the interval $\tau$ and allows us to pick an appropriate threshold $\rho$. Finally, we establish the finite data performance of \algone{} using a martingale concentration argument that leverages our analysis involving the events $U_{ij}$. It is worth noting that without carefully accounting for the above events, the underlying test statistics is not a (sub)martingale and is not well-behaved.

\section{\algtwo{}: \algtwofull{}}\label{sec: algtwo}\label{sec:algorithm-2-rewrite}

\subsection{Intuition}\label{subsec: algtwo intuition}

The preceding section treats the Glauber trajectory itself as the object of
analysis, using local update windows to test edges one at a time. In this
section we ask a different question: can the dependent trajectory be converted
into an input suitable for an existing i.i.d.\ Gaussian graphical model learner?
The resulting method, \algtwo{} (\algtwofull{}), is less local than \algone{}
and relies on a contraction condition, but it lets us inherit the sharper
i.i.d.\ sample-complexity guarantees available for procedures such as
DICE~\citep{misra2020information}. The main task is to prove a quantitative decoupling theorem for the Gaussian Gibbs trajectory.

A canonical strategy is to burn-in and thin the chain to achieve such a decoupling.
Starting from an arbitrary initial condition, we discard the first
$\mathfrak{b}$ iterates and then retain only states separated by
$\mathfrak{t}$ Glauber updates. The burn-in period controls initialization
bias, while the thinning interval controls temporal dependence among the
retained states. If these two parameters are chosen appropriately, the retained
sequence should be close to independent draws from the stationary Gaussian law
$\pi=\mathcal{N}_{p}(0,\Theta^{-1})$, and can be supplied to an i.i.d.\
structure learner. Writing
$\mathbf{Y}_{0}^{m-1}$ for the retained sequence, the quantity we need to
control is
\[
    \left\|
        \mathcal L(\mathbf{Y}_{0}^{m-1}\mid \mathbf{X}^{(0)}=x)
        - \pi^{\otimes m}
    \right\|_{\mathrm{TV}}.
\]
This total-variation error is exactly the price paid for replacing the retained
trajectory by ideal i.i.d.\ samples: if it is at most $\varepsilon$, then the
failure probability of any downstream learner increases by at most
$\varepsilon$. Thus the central technical task is to choose
$\mathfrak{b}$ and $\mathfrak{t}$ so that this joint decoupling error is small.

As surveyed in Section~\ref{sec: related work}, existing tools for bounding TV mixing of high-dimensional Gibbs samplers are either quantitatively loose (drift-and-minorization), or require restrictive side conditions (warm starts, bounded condition number of $\Theta$). The most useful high-dimensional estimates for the random-scan Gaussian Gibbs sampler instead come from the transport side: Dobrushin-style contraction arguments give Wasserstein convergence rates without spectral or condition-number assumptions~\citep{wang2017convergence}. Converting these Wasserstein bounds into TV, however, is not automatic. Several off-the-shelf conversions are available --- the discrete-metric identification of $W_{1}$ with TV, one-shot coupling~\citep{RobertsRosenthal2002}, and Kantorovich--Rubinstein smoothing of bounded test functions by a few kernel applications~\citep{Villani2009,MadrasSezer2010,GibbsSu2002} --- but each has hypotheses that fail for the random-scan Gaussian Gibbs sampler, as we discuss next.

Each of these routes has a specific obstruction in our setting. If one equips each coordinate with the discrete metric, then $W_{1}$ coincides with total variation, but the Dobrushin condition fails to hold and the usual smoothing argument breaks down. A one-shot coupling argument~\citep{RobertsRosenthal2002} controls TV by a single coupled step, but the available bounds require contracting Lipschitz constants that blow up on the unbounded state space of the Gaussian Gibbs sampler. A Kantorovich--Rubinstein conversion, finally, requires the smoothed kernel $K^{s}f$ to be globally Lipschitz for every bounded test function $f$~\citep{Villani2009,MadrasSezer2010,GibbsSu2002}; for random-scan Gibbs, at any fixed time there is positive probability that some coordinate has not yet been refreshed, so no global Lipschitz statement of this kind can hold. This is the conversion problem we address in the remainder of this section.

In what follows we establish a high-dimensional total variation convergence rate for the Gaussian random-scan Gibbs sampler by introducing a notion of \emph{thresholded} (or \emph{approximate}) Lipschitzness for the transition kernel: the smoothed kernel need not be globally Lipschitz, only Lipschitz on a high-probability set, with the failure event entering the bound as an explicit additive defect. Using this device, we convert a Wasserstein contraction from~\citet{wang2017convergence} into a TV bound; the resulting mixing estimate may be of independent interest to the MCMC community as a usable TV convergence rate for high-dimensional Gaussian Gibbs without spectral or condition-number assumptions. We then plug this mixing bound into the burn-in/thinning reduction described above, and obtain end-to-end performance guarantees for \algtwo{} in terms of the parameters of an i.i.d.\ base learner~$\mathcal{A}$. We instantiate $\mathcal{A}$ as DICE~\citep{misra2020information} to obtain an explicit sample complexity in $p$, $d$, $\kappa$, and $\delta$; the same reduction applies to any other i.i.d.\ Gaussian graphical model selection procedure.

\subsection{Algorithm}\label{subsection: algtwo}

The one-step transition kernel $K$, the total variation distance $\norm{\cdot}$, the burn-in/thinning parameters $(\mathfrak{b},\mathfrak{t})$, and the retained and i.i.d.\ reference sequences $\mathbf{Y}_{0}^{m-1},\mathbf{Z}_{0}^{m-1}$ were all introduced in Section~\ref{subsec: tv burn-in thinning}. Recall in particular that $\mathbf{Y}^{(s)}=\mathbf{X}^{(\mathfrak{b}+s\mathfrak{t})}$ for $s=0,\ldots,m-1$ as in~\eqref{eq:burned-thinned-samples}, and that $\mathbf{Z}_{0}^{m-1}\sim\pi^{\otimes m}$ is i.i.d.\ from $\pi=\mathcal{N}_p(0,\Theta^{-1})$.

We instantiate \algtwo{} with DICE~\citep{misra2020information} as the i.i.d.\ base learner; the construction below applies verbatim to any other i.i.d.\ Gaussian graphical model selection procedure (see the remark following Theorem~\ref{thm:algorithm-2-pac}). Let $\mathcal{A}:\mathbb{R}^{m}\to\mathcal{G}_{p,d}$ denote DICE, which takes $m$ i.i.d.\ samples and outputs a graph from the class $\mathcal{G}_{p,d}$ of graphs with $p$ vertices and maximum degree $d$. Our strategy is to subsample Glauber dynamics based on burn-in and thinning parameters $\mathfrak{b}, \mathfrak{t}$ to obtain approximately i.i.d.\ data and apply $\mathcal A$ to the result.

We define $\widetilde{\mathcal{A}}_{\mathfrak{b}, \mathfrak{t}}: \mathbb{R}^{n} \to \mathcal{G}_{p,d}$ such that
\begin{align}
    {\widetilde{\mathcal{A}}}_{{\mathfrak{b}, \mathfrak{t}}}(\mathbf{X}_{0}^{n-1}) = \mathcal{A}(\mathbf{Y}_{0}^{m-1}).
\end{align}
Thus \algtwo{} applies DICE to the burned-in and thinned samples.

We evaluate our algorithm with the zero-one loss $\mathds{1}[\widetilde{\mathcal{A}}(\mathbf{X}_{0}^{n-1}) \neq G]$. The probability of error is
\begin{align}
    \prob[\mathbf{X}_{0}^{n-1} \sim \mathcal{L}(\mathbf{X}_{0}^{n-1} \mid \mathbf{X}^{(0)} = x)]{\widetilde{\mathcal{A}}(\mathbf{X}_{0}^{n-1}) \neq G} = \prob[\mathbf{Y}_{0}^{m-1} \sim \mathcal{L}(\mathbf{Y}_{0}^{m-1} \mid \mathbf{X}^{(0)} = x)]{\mathcal{A}(\mathbf{Y}_{0}^{m-1}) \neq G}.
\end{align}

\subsection{Performance guarantee}\label{subsec: algtwo theorem}

The standing assumptions of Section~\ref{sec: theorem} (Assumptions~\ref{assumption: Bounded edge strength}--\ref{assumption: bounded degree and sample decay}) are used throughout. We additionally use the DICE i.i.d.\ sample complexity~\citep{misra2020information}: for $\mathbf{Z}_{0}^{m-1}\sim\pi^{\otimes m}$ and any $\delta\in(0,1)$, $m\ge C_{\rm DICE}\,d\log(p/\delta)/\kappa^2$ suffices for $\Pr\{\mathcal A(\mathbf{Z}_{0}^{m-1})\neq G\}\le\delta$, where $\kappa$ is the normalized edge-strength parameter and $C_{\rm DICE}$ is the universal constant from~\citep{misra2020information}.

\begin{theorem}[Sample complexity guarantee for \algtwo{}]
\label{thm:algorithm-2-pac}
Consider observations $\mathbf{X}_0^{n-1}$ from the Gaussian Glauber dynamics above,
started at $\mathbf{X}^{(0)}=x$.  Suppose Assumptions~\ref{assumption: Bounded edge strength}--\ref{assumption: bounded degree and sample decay} hold.  Then for every $\delta\in(0,1)$,
\algtwo{} satisfies
\begin{align}
\prob{\widetilde{\mathcal A}_{\mathfrak b,\mathfrak t}(\mathbf{X}_0^{n-1})\neq G}
\le \delta,
\label{eq:algorithm-2-pac-conclusion}
\end{align}
if the number of samples satisfies
\begin{align}
    n = \mathcal O \left(\frac{d\log(p/\delta)\,p}{\kappa^2(1-r)}\log\!\left(\frac{d\log(p/\delta)\,p^{3/2}\sqrt{\log p}}{\kappa^2\delta}\right)\right)
\label{eq:algorithm-2-update-complexity}
\end{align}
and we choose the burn-in and thinning lengths as
\begin{align}
    \mathfrak b,\mathfrak t
    \ge
    C_{x,\Theta,\mathcal{A}}\,\frac{p}{1-r}
    \log\!\left(\frac{d\log(p/\delta)\,p^{3/2}\sqrt{\log p}}{\kappa^2\delta}\right),
    \label{eq:algorithm-2-bt-choice}
\end{align}
for a constant $C_{x,\Theta,\mathcal{A}}$ depending on the initialization, model
parameters, and the universal constant of the base learner $\mathcal{A}$, but not on $p,d,\kappa$ or $\delta$ except through the displayed terms.

\end{theorem}

\begin{remark}
Equivalently, in normalized Glauber time, this corresponds to
\begin{align}
    T = \mathcal O \left(\frac{d\log(p/\delta)}{\kappa^2(1-r)}\log\left(\frac{d\log(p/\delta)\,p^{3/2}\sqrt{\log p}}{\kappa^2\delta}\right)\right)
\end{align}
time units.
\end{remark}

\begin{remark}[General base learner]\label{rem:algo2-general-learner}
Theorem~\ref{thm:algorithm-2-pac} is stated for DICE because it admits the explicit i.i.d.\ rate $m_{\rm DICE}(p,d,\delta)=C_{\rm DICE}\,d\log(p/\delta)/\kappa^{2}$. The same reduction applies to any i.i.d.\ base learner $\mathcal A$ with PAC sample complexity $m_{\mathcal A}(p,d,\delta)$: under the same standing assumptions, $\widetilde{\mathcal A}_{\mathfrak b,\mathfrak t}$ recovers $G$ with probability at least $1-\delta$ provided
\begin{align}
n=\mathcal O\!\left(\frac{m_{\mathcal A}(p,d,\delta)\,p}{1-r}\log\!\left(\frac{m_{\mathcal A}(p,d,\delta)\,p^{3/2}\sqrt{\log p}}{\delta}\right)\right),
\end{align}
with $\mathfrak b,\mathfrak t$ chosen of the same order as the inner logarithmic factor. Specialization to DICE recovers~\eqref{eq:algorithm-2-update-complexity}.
\end{remark}

\begin{remark}[Near-optimality]\label{rem:algo2-lower-bound}
The minimax lower bound of Theorem~\ref{thm:lower-bound-glauber} in Section~\ref{sec: lower bounds} shows that any consistent estimator requires $T=\Omega\bigl(\log(p-d)/\bmin^{2}\bigr)$ Glauber updates per node. In the regime $\kappa\asymp\bmin$ with $\bmin=\mathcal{O}(1/d)$, Theorem~\ref{thm:algorithm-2-pac} matches this lower bound up to a factor of $\poly(d,\log p,(1-r)^{-1})$, so \algtwo{} is also nearly minimax-optimal for this class.
\end{remark}

\textbf{Proof strategy for Theorem~\ref{thm:algorithm-2-pac}}
The full proof is given in Appendix~\ref{appsubsec:algorithm-2-proof}, with
the supporting mixing lemmas collected in
Appendix~\ref{subsec:supporting-lemmas}. At a high level, the proof first
compares the error of the burned-in/thinned procedure to the error of the same
base learner on an ideal i.i.d.\ sample, paying only the joint total-variation
distance between the retained trajectory and \(\pi^{\otimes m}\). It then
controls this joint distance by a hybrid argument, separating the contribution
of the initial burn-in from the accumulated dependence between consecutive
retained samples.

The remaining ingredient is a TV mixing estimate for the Gaussian random-scan
Gibbs sampler. Since the Gaussian transition kernel is not globally Lipschitz in
the sense needed for a direct Kantorovich--Rubinstein conversion, we use a
novel approximate-Lipschitz smoothing argument: after enough random scans, a
coupon-collector event ensures that all coordinates have been refreshed, and
the exceptional schedules contribute only a controlled additive error. Combining
this smoothing step with Dobrushin contraction in Wasserstein distance yields
the required TV bound. Choosing \(m\) from the DICE i.i.d.\ sample complexity
and choosing \(\mathfrak b,\mathfrak t\) from this mixing bound makes the
i.i.d.\ learning error and the decoupling error each at most \(\delta/2\). The
stated observation-length bound then follows from
\(n\ge \mathfrak b+(m-1)\mathfrak t+1\).

\section{Observation Time Lower Bounds}\label{sec: lower bounds}
In this section, we present the minimax lower bounds on the observation time $T$ required to learn the Gaussian graphical model from the Glauber dynamics $\{\mathbf{Y}^{(t)}\}_{t=0}^{T}$. We start by introducing a natural class of graphical models for which we can provide these lower bounds. We then define the minimax risk and state Theorem 2. We refer the reader to Section \ref{appsubsec: theorem 2 statement and proof} for detailed presentation of its proof.

\paragraph{Model Class. } Let $\mathcal{G}_{p,d}$ denote the set of undirected graphs with vertex set $[p]$ and maximum degree $d \leq p$. For a graph $G \in \mathcal{G}_{p,d}$, let $\Theta(G)$ denote any symmetric positive definite precision matrix whose graph is $G$, namely
  \begin{equation*}
  \Theta(G)_{ij} \neq 0 \quad \Longleftrightarrow \quad \{i,j\} \in E, \qquad i \neq j.
  \end{equation*}
  Let $\Sigma(G) := \Theta(G)^{-1}$ be the corresponding covariance matrix. With a slight abuse of notation, $\Theta(G)$ and $\Sigma(G)$ are not unique given
  $G$. We let $\mathcal{G}_{p,d}(\bmin,\theta_{\min},\theta_{\max})$ denote the set of pairs of probability distributions representing target and initial distributions
  $(\phi_{\Theta(G)},q)$ such that $G \in \mathcal{G}_{p,d}$ and $\Theta(G)$ is any precision matrix with graph $G$ satisfying: 
  \begin{itemize}
    \item For every $(\phi_{\Theta(G)}, q) \in \mathcal{G}_{p,d}(\bmin,\theta_{\min},\theta_{\max})$,
  \begin{equation*}
  \beta^*(\Theta)=\min_{\{i,j\}\in E}\frac{|\Theta_{ij}|}{\Theta_{ii}} \ge \bmin,
  \end{equation*}
  and the diagonal entries satisfy $\theta_{\min} \le \Theta_{ii} \le \theta_{\max}$ for all $i$.

\end{itemize}

We study a hard subclass $\mathcal{H}_{p,d}(\bmin,\theta_{\min},\theta_{\max}) \subset \mathcal{G}_{p,d}(\bmin,\theta_{\min},\theta_{\max})$ with initial distributions being the same as the target distribution (stationary start) and precision matrix of the target distribution additionally satisfying constant diagonal entry $\Theta_{ii} = \theta \in [\theta_{\min}, \theta_{\max}]$ for all $i \in V$.

\subsection{Graph Decoders and Maximum Probability of Error}

Let $\operatorname{GD}_{(n)}(\phi_{\Theta(G)})$ represent the joint probability distribution of observing $n$ vector samples from a Glauber dynamics process with target distribution $\phi_{\Theta(G)} \in \mathcal{H}_{p,d}(\beta_{\min}, \theta_{\min}, \theta_{\max})$ with a stationary start. Suppose we are given $n$ vector samples $\mathbf{X}_{0}^{n-1} = (\mathbf{X}^{(0)}, \mathbf{X}^{(1)}, \dots, \mathbf{X}^{(n-1)}) \in \mathbb{R}^{n \times p}$ from a stationary Glauber dynamics process $\operatorname{GD}_{(n)}(\phi_{\Theta(G)})$ with unknown targets. Our goal is to estimate the underlying graph $G \in \mathcal{G}_{p,d}$ associated with $\Theta(G)$ based on the observations $\mathbf{X}_{0}^{n-1}$.

To prove our minimax lower bound we consider decoders $\psi: \mathbb{R}^{n \times p} \to \mathcal{G}_{p,d}$ that map the observations $\mathbf{X}_{0}^{n-1}$ to an estimate $\widehat{G} = \psi(\mathbf{X}_{0}^{n-1})$. Any information about the update sequence that is identifiable from the observed trajectory
  $\mathbf{X}_0^{n-1}$ may be used implicitly by the decoder. In particular, under Gaussian Glauber
  dynamics, each update resamples exactly one coordinate from a non-degenerate conditional
  Gaussian distribution. Hence, with probability one, $\mathbf{X}^{(t)}$ and $\mathbf{X}^{(t-1)}$ differ in exactly
  one coordinate, so the identity of the updated node at time $t$ is determined by the observed
  pair $(\mathbf{X}^{(t-1)},\mathbf{X}^{(t)})$. The performance is evaluated using the 0-1 loss function $\mathds{1}[\psi(\mathbf{X}_{0}^{n-1}) \neq G]$, indicating whether the estimated graph differs from the true graph. For a decoder, the maximal expected risk over all admissible target distributions $\phi_{\Theta(G)}$ in the class $\mathcal{H}_{p,d}(\bmin, \theta_{\min}, \theta_{\max})$ is given by,
\begin{align}\label{eqn: maximal probability of error preliminary}
    \mathcal{R}_{n,p,d}(\psi; \mathcal{H}_{p,d}(\bmin, \theta_{\min}, \theta_{\max})) &:= \max_{\phi_{\Theta(G)}} \E[\Theta(G)]{\mathds{1}[\psi(\mathbf{X}_{0}^{n-1}) \neq G]} \nonumber \\
    &= \max_{\phi_{\Theta(G)}} \prob[\Theta(G)]{\psi(\mathbf{X}_{0}^{n-1}) \neq G}
\end{align}

The expectation is taken with respect to the joint probability measure $\prob[\Theta(G)]{\cdot} = \operatorname{GD}_{(n)}(\cdot;\phi_{\Theta(G)})$. The minimax risk is defined as
\begin{align*}
    \mathcal{M}_{n,p,d}(\mathcal{H}_{p,d}(\bmin, \theta_{\min}, \theta_{\max})) := \min_{\psi} \max_{\phi_{\Theta(G)}} \prob[\Theta(G)]{\psi(\mathbf{X}_{0}^{n-1}) \neq G}
\end{align*}

The minimax risk represents the minimum error achieved by the 
best estimator $\widehat{G}$ in the worst-case scenario, where distributions on graphs are hard to distinguish. Our main goal is to understand the scaling laws of the triplet $(n,p,d)$ where $n$ is the number of updates or samples observed, $d$ is the maximum degree of the graph, and $p$ is the number of vertices; such that the minimax risk $\mathcal{M}_{n,p,d}$ vanishes asymptotically.

An alternative viewpoint, which we incorporate to maintain consistency with the achievability theorem's style, is to consider the time interval $T$ over which the continuous-time Glauber process $\{\mathbf{Y}^{(t)}\}_{t=0}^{T}$ was observed. These two perspectives are equivalent in the sense that $T = \frac{n}{p}$, as there are, on average, $p$ updates over a unit interval of time. Recall that in Section~\ref{subsec: Glauber dynamics}, we treat the number of samples observed over $T$ time units to be deterministic. Therefore we denote it with lower case $n$. Moreover, as noted earlier, $\{\mathbf{Y}^{(t)}\}$ and the input data sequence $\{\mathcal{X}^{(n)}, \mathcal{I}^{(n)}\}$, where $\mathcal{I}^{(n)} = (I^{(1)},\dots,I^{(n)})$ can be used interchangeably.

Note that the minimax risk over the hard subclass lower bounds the minimax risk over the general class. That is, 
\begin{align*}
    \mathcal{M}_{n,p,d}(\mathcal{G}_{p,d}(\bmin, \theta_{\min}, \theta_{\max})) \geq \mathcal{M}_{n,p,d}(\mathcal{H}_{p,d}(\bmin, \theta_{\min}, \theta_{\max})).
\end{align*}

By using Fano's method, we will characterize $n_{\text{lb}}(p,d)$ such that for $n < n_{\text{lb}}(p,d)$ the minimax risk $\mathcal{M}_{n,p,d}(\mathcal{H}) \geq 1/2$ which implies $\mathcal{M}_{n,p,d}(\mathcal{G}) \geq 1/2$. Equivalently if, $\mathcal{M}_{n,p,d}(\mathcal{G}) < 1/2$, then $n \geq n_{\text{lb}}(p,d)$.

\subsection{Minimax Lower Bound}

We now state the lower bound on the observation time $T$ for graphical model selection.

\begin{theorem}[Lower bound for graph recovery from Glauber dynamics]\label{thm:lower-bound-glauber}
Assume $0 < \bmin \leq 1/2$ and $2 \leq d \leq p-2$. There exists a universal
constant $c>0$ such that if
\begin{equation*}
T \leq c \, \frac{\log \binom{p-d}{2}}{\bmin^2},
\end{equation*}
then
\begin{equation*}
\mathcal{M}_{n,p,d}\bigl(\mathcal{H}_{p,d}(\bmin,\theta_{\min},\theta_{\max})\bigr)
\geq \frac{1}{2}.
\end{equation*}
Consequently,
\begin{equation*}
\mathcal{M}_{n,p,d}\bigl(\mathcal{G}_{p,d}(\bmin,\theta_{\min},\theta_{\max})\bigr)
\geq \frac{1}{2}.
\end{equation*}
Equivalently, if some estimator achieves probability of error strictly smaller than
$1/2$ uniformly over
$\mathcal{G}_{p,d}(\bmin,\theta_{\min},\theta_{\max})$, then necessarily
\begin{equation*}
T \geq c \, \frac{\log(p-d)}{\bmin^2}.
\end{equation*}
\end{theorem}

\textbf{Remark:}
As a consequence of Theorem \ref{thm:lower-bound-glauber}, if the minimum edge strength $\bmin$ decays at the rate $\mathcal{O}\left(\frac{1}{d}\right)$, then the lower bound on the observation time becomes $T = \Omega(d^2 \log p)$. In this regime, the sample complexity $T$ from Theorem \ref{thm: final theorem} matches the information-theoretic lower bound up to a factor of $\poly(d\log\left(p\right))$.

\section{Numerical Experiments}\label{sec: experiments}

The preceding sections give worst-case, finite-sample guarantees for two
different ways of using a single Glauber trajectory. The purpose of this section
is more modest: we use synthetic experiments to illustrate the qualitative
tradeoffs predicted by the theory. In particular, we examine whether the local
statistic in \algone{} exhibits a sharp recovery transition as the observation
time increases, whether burn-in and thinning lead to behavior comparable to an
i.i.d.\ learner in the Dobrushin regime, and whether the edgewise form of
\algone{} gives the expected computational advantage.

Throughout the experiments, the underlying graph is a random $d$-regular graph
on $p$ nodes. The precision matrix has off-diagonal entries drawn uniformly from
$[0.2,0.4]$ with random signs and is rescaled so that the Dobrushin radius
\[
    r := \max_i \sum_{j\ne i} |\theta_{ij}/\theta_{ii}|
\]
matches a prescribed value. We simulate the random-scan Gaussian Gibbs sampler
with stationary distribution $\mathcal{N}(0,\boldsymbol{\Theta}^{-1})$. All
methods are evaluated on the same trajectory and use top-$d$ edge selection per
node with the true maximum degree $d$; thus differences in performance reflect
the statistics computed from the trajectory rather than different graph-size
selection rules.

We compare four estimators:
\begin{itemize}
    \item \textbf{\algone{}}: the local edge-testing estimator analyzed in
    Section~\ref{sec: theorem}.
    \item \textbf{BTR-GL instantiated with GLasso}: an instance of \algtwo{} that
    first subsamples the trajectory and then applies GLasso. The theorem for \algtwo{} is stated with DICE
    because of its explicit i.i.d.\ sample complexity~\citep{misra2020information};
    for experiments we use graphical lasso as a standard implementable base
    learner.
    \item \textbf{naive GLasso} and \textbf{naive Nodewise Lasso}: reference procedures that ignore temporal dependence and apply an
    i.i.d.\ learner (GLasso~\citep{friedman2008sparse}, nodewise Lasso~\citep{Meinshausen2006Neighborhood}) directly to the full trajectory. These methods are not
    covered by our guarantees; they serve only to contextualize the empirical
    cost of accounting for dependence.
\end{itemize}
We report F1 score under top-$d$ selection, AUROC as a threshold-free ranking
metric, the empirical observation threshold $N^{*}$ at which mean AUROC first
crosses $0.9$, and wall-clock runtime. Recovery curves are averaged over $3$
replicates and sample-complexity sweeps over $5$ replicates. As a check on the
simulation, at $p=8$ and $N=2{,}000{,}000$ updates the empirical covariance of
the generated trajectory agrees with $\boldsymbol{\Theta}^{-1}$ to relative
Frobenius error $0.6\%$.

\textbf{Compute environment.} The reported experiments were run on an Exxact TensorEX 2U rackmount machine with two AMD EPYC Rome 7542 processors (32 cores and 64 threads each), 1\,TB DDR4 ECC memory, four NVIDIA A100 SXM4 GPUs with 40\,GB memory each, a 2\,TB NVMe OS drive, a 15.36\,TB NVMe data drive, and Ubuntu 18.04. Some development and pilot runs were also executed on a 2021 Apple M1 MacBook with 64\,GB RAM, but the server configuration above is the conservative compute environment for reproducing the reported results. 

\subsection{Recovery Performance}\label{subsec: experiments-recovery}

Figures~\ref{fig: recovery-F1} and~\ref{fig: recovery-AUROC} show recovery as a
function of normalized observation time $N/p$ for $p=20$, $d=4$, and
$r\in\{0.5,0.7\}$. The two naive procedures recover with the shortest
trajectories in this finite-dimensional regime, and the burn-in/thinning
procedure follows them within a small constant factor. This is consistent with
the interpretation of \algtwo{} as a reduction to i.i.d.\ structure learning
after the chain has been sufficiently decorrelated.

The local estimator \algone{} displays a different profile. Its performance is
near chance for short trajectories and then improves rapidly once enough
informative update windows have accumulated, reaching essentially exact
recovery at the largest observation times shown. This transition is consistent
with the event-counting analysis in Theorem~\ref{thm: final theorem}: the
statistic is designed to be valid without waiting for global mixing, but it
requires enough local update patterns to average out the martingale noise.

\begin{figure}[!t]
    \centering
    \includegraphics[width=\linewidth]{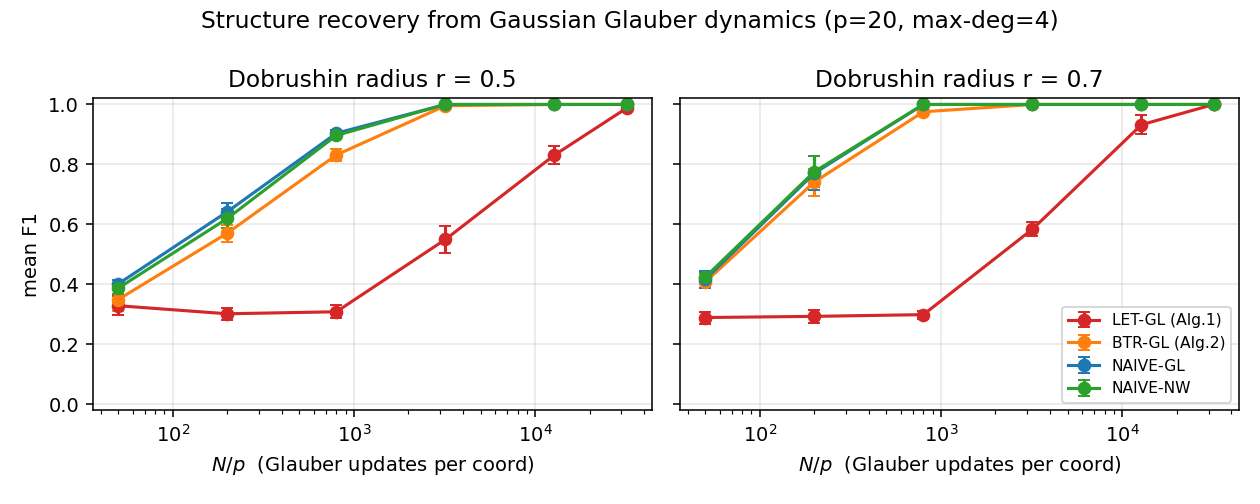}
    \caption{Finite-sample recovery as a function of normalized observation time. F1 score is shown for the four estimators on random $d$-regular graphs with $p=20$ and $d=4$. Left: Dobrushin radius $r=0.5$. Right: $r=0.7$. Curves are means over $3$ replicates; shading shows $\pm 1$ standard deviation.}
    \label{fig: recovery-F1}
\end{figure}

\begin{figure}[!t]
    \centering
    \includegraphics[width=\linewidth]{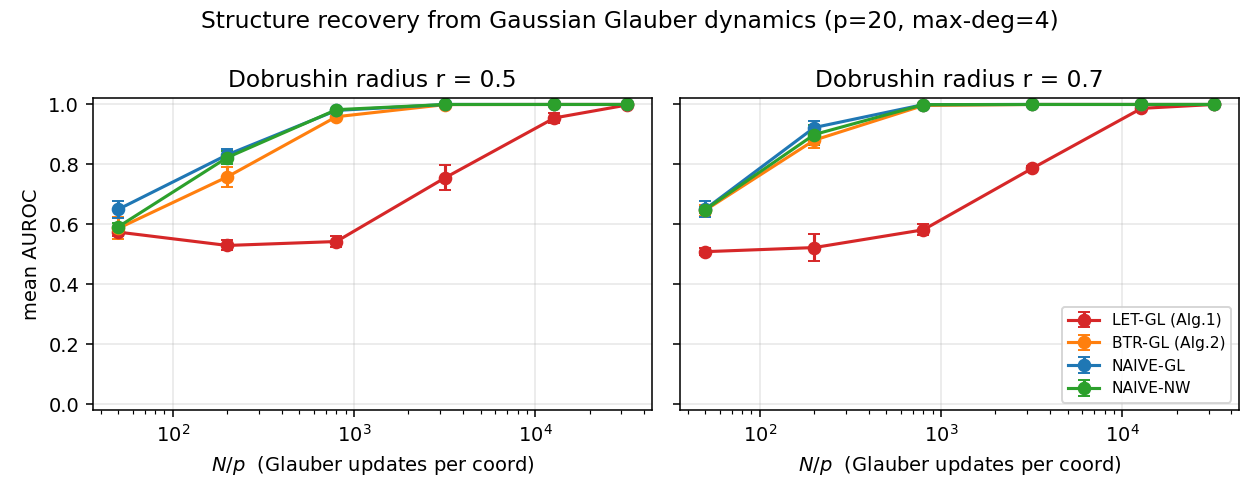}
    \caption{Threshold-free recovery for the same setup as Figure~\ref{fig: recovery-F1}. AUROC emphasizes the ranking behavior of the edge scores and displays the transition of the local statistic in \algone{}.}
    \label{fig: recovery-AUROC}
\end{figure}

\subsection{Computational Scaling and Parallelism}\label{subsec: experiments-compute}

Figure~\ref{fig: compute} examines the computational side of this tradeoff.
Panel~(A) varies $p \in \{40,80,160,320,640\}$ at $N=200p$ updates and measures
the wall-clock cost of applying each estimator to the full trajectory. The
runtime of \algone{} grows approximately linearly in $p$, in agreement with its
event-counting form under top-$d$ selection. Graphical lasso, by contrast,
operates on a dense empirical covariance matrix and tracks the displayed
$\mathcal{O}(p^{3})$ reference curve.

Panel~(B) illustrates the edgewise and additive structure of the local
statistic. The trajectory accumulator for \algone{} can be computed on
contiguous shards and summed at the end. When the trajectory is divided into
$W\in\{1,2,4,8,16\}$ shards, the total sequential work is essentially unchanged
while the per-shard work scales as $1/W$. This experiment isolates the
parallelism implied by the statistic itself; it is not intended as a benchmark
of any particular multiprocessing implementation.

\begin{figure}[!t]
    \centering
    \includegraphics[width=\linewidth]{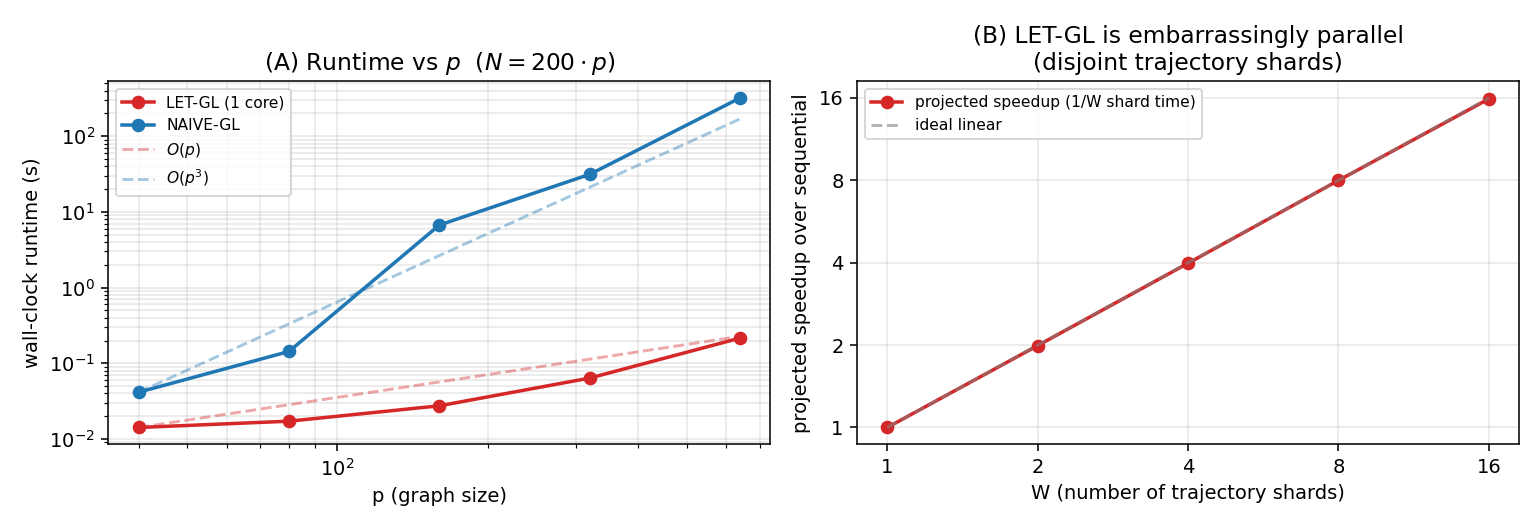}
    \caption{Computational behavior of the local statistic. Panel~(A): wall-clock runtime versus $p$ at $N = 200p$, with graphical lasso included as a covariance-based reference. The dashed line is the $\mathcal{O}(p^{3})$ scaling reference. Panel~(B): trajectory split into $W$ contiguous shards at $p=100$, $N \approx 1.5\!\cdot\!10^{6}$; the per-shard work scales as $1/W$ while total work is conserved.}
    \label{fig: compute}
\end{figure}

\subsection{Observation-Time Scaling}\label{subsec: experiments-samplecomplexity}

Finally, we consider how the observation time needed for reliable ranking grows
with $p$ in a fixed sparse regime. We set $d=4$ and $r=0.6$, vary
$p\in\{15,25,40\}$, and define $N^{*}$ to be the smallest observation length at
which the mean AUROC reaches $0.9$, using log-linear interpolation between the
two bracketing grid points. Figure~\ref{fig: samplecomplexity} displays the
AUROC curves and the resulting $N^{*}$ values; Table~\ref{tab: nstar} reports
the numerical values.

These experiments should not be read as estimating the sharp asymptotic
exponent in $p$. They are instead a finite-dimensional check that the relative
ordering of the procedures agrees with the theory. The burn-in/thinning
estimator is close to the naive reference in the Dobrushin regime,
whereas \algone{} requires longer trajectories but remains computationally much
cheaper per update and does not rely on treating consecutive states as
approximately independent.

\begin{figure}[!t]
    \centering
    \includegraphics[width=\linewidth]{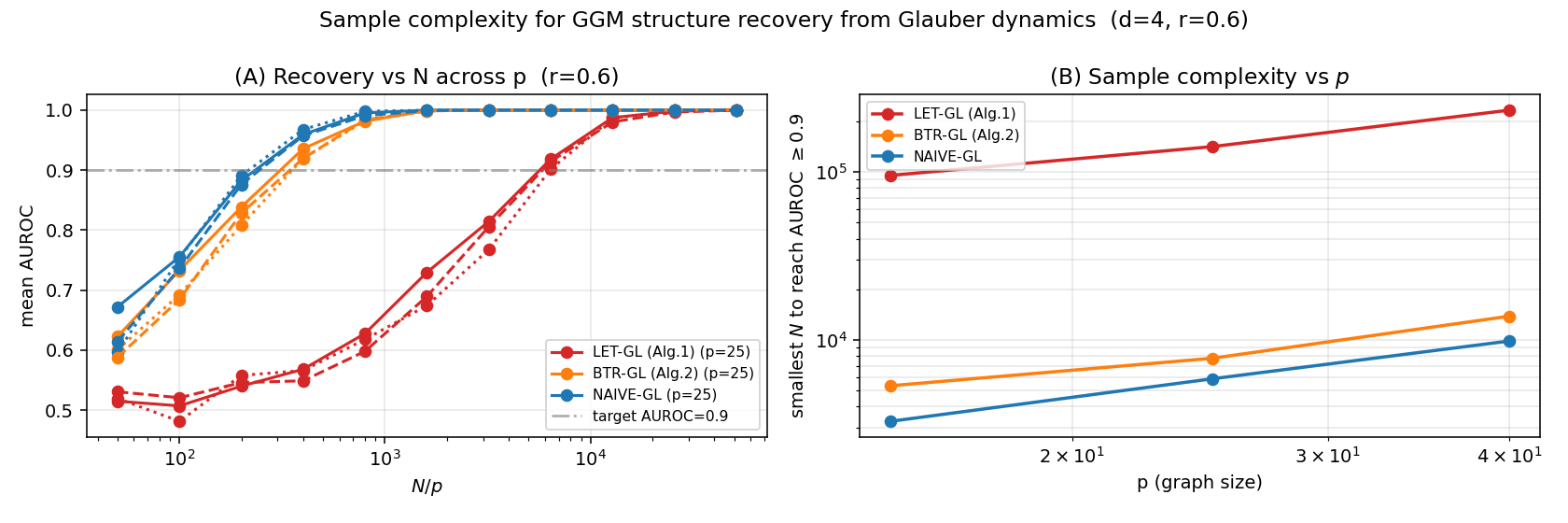}
    \caption{Empirical observation-time scaling at $d=4$, $r=0.6$. Panel~(A): AUROC versus $N/p$ for $p \in \{15,25,40\}$ over $5$ replicates; the horizontal line marks $\mathrm{AUROC}=0.9$. Panel~(B): interpolated $N^{*}$ versus $p$ on log-log axes.}
    \label{fig: samplecomplexity}
\end{figure}

\begin{table}[!t]
    \centering
    \caption{Empirical observation threshold $N^{*}$, defined as the smallest $N$ at which mean AUROC reaches $0.9$, for $d=4$ and $r=0.6$. Values are obtained by log-linear interpolation between bracketing grid points.}
    \label{tab: nstar}
    \begin{tabular}{rrrr}
        \toprule
        $p$ & naive GLasso & BTR (GLasso) & \algone{} \\
        \midrule
        $15$ & $3{,}260$ & $5{,}310$ & $95{,}252$ \\
        $25$ & $5{,}849$ & $7{,}751$ & $141{,}518$ \\
        $40$ & $9{,}813$ & $13{,}799$ & $233{,}410$ \\
        \bottomrule
    \end{tabular}
\end{table}

\subsection{Interpretation}\label{subsec: experiments-discussion}

Taken together, the experiments illustrate the complementarity of the two
theoretical approaches. The local estimator \algone{} is designed for validity
directly on the dependent trajectory and exposes substantial computational
parallelism, but pays a statistical price in the number of updates required in
the regimes tested here. The reduction-based estimator \algtwo{} is
statistically closer to ordinary i.i.d.\ graphical model selection when the
Dobrushin contraction condition holds, but it relies on burn-in/thinning and
inherits the computational profile of the chosen base learner. The
naive procedures are useful empirical references, but they do not
provide a substitute for the decoupling argument: their behavior in these
experiments does not imply a uniform guarantee for dependent Glauber data.

\section{Discussion}

We studied Gaussian graphical model selection from a single trajectory of
Glauber dynamics and developed two complementary algorithmic approaches.
LET-GL is a local edge-testing method whose main advantages are its transparent trajectory-level interpretation and
parallel implementation: candidate edges can be processed independently, and
the trajectory accumulator can be sharded and merged. Its main drawback is
statistical: the guarantee has a relatively unfavorable dependence on degree
and edge strength.

BTR-GL takes the opposite route. It uses burn-in and thinning to reduce the
dependent-data problem to i.i.d. Gaussian graphical model selection. This method
requires a Dobrushin contraction condition and sacrifices the fully local
edgewise structure of LET-GL, but it gives a modular path to sharper sample
complexity by leveraging existing i.i.d. structure learners. The main technical
ingredient is a total-variation decoupling bound for Gaussian Gibbs samplers
obtained through approximate Lipschitz smoothing. 
Relaxing the Dobrushin-style condition here, possibly via conductance based techniques, is a promising direction for future work.

The lower bound shows that any method must pay at least
\(\Omega(\log(p-d)/\beta_{\min}^2)\) normalized observation time over a natural
model class. Closing the remaining polynomial gaps in \(d\), \(\beta_{\min}\),
and \(1-r\), and comparing product-moment and ratio-aggregation local
estimators both theoretically and empirically, remain important open problems.

\section*{Acknowledgments}
This work was supported by the National Science Foundation (NSF) under award
CCF-2048223. We thank Mahbod Majid for carefully reading an earlier version of
this manuscript and helping identify a dependence gap in the local ratio-estimator
argument.

\bibliography{final}

@misc{drton2016structure,
      title={Structure Learning in Graphical Modeling}, 
      author={Mathias Drton and Marloes H. Maathuis},
      year={2016},
      eprint={1606.02359},
      archivePrefix={arXiv},
      primaryClass={stat.ME}
}

@inproceedings{wang2010information,
  title={Information-theoretic bounds on model selection for Gaussian Markov random fields},
  author={Wang, Wei and Wainwright, Martin J and Ramchandran, Kannan},
  booktitle={2010 IEEE International Symposium on Information Theory},
  pages={1373--1377},
  year={2010},
  organization={IEEE}
}

@misc{bresler2014learning,
      title={Learning graphical models from the Glauber dynamics}, 
      author={Guy Bresler and David Gamarnik and Devavrat Shah},
      year={2014},
      eprint={1410.7659},
      archivePrefix={arXiv},
      primaryClass={cs.LG}
}

@book{lauritzen1996,
  added-at = {2010-03-25T16:34:59.000+0100},
  author = {Lauritzen, Steffen L.},
  biburl = {https://www.bibsonomy.org/bibtex/2ced39d23200f7ae9a7c38e5e65b6e416/3mta3},
  interhash = {1fbc2c33d8565597e6fb80a796effff2},
  intrahash = {ced39d23200f7ae9a7c38e5e65b6e416},
  isbn = {0-19-852219-3},
  keywords = {},
  publisher = {Oxford University Press},
  timestamp = {2010-03-25T16:34:59.000+0100},
  title = {Graphical Models},
  year = 1996
}

@book{LevinPeresWilmer2006,
  added-at = {2010-01-19T17:51:27.000+0100},
  author = {Levin, David A. and Peres, Yuval and Wilmer, Elizabeth L.},
  biburl = {https://www.bibsonomy.org/bibtex/2097dc4d1d0e412b2444f540b04110797/tmalsburg},
  interhash = {61354795a6accb6407bfdbf04753a683},
  intrahash = {097dc4d1d0e412b2444f540b04110797},
  keywords = {markovchains probabilitytheory textbook},
  publisher = {American Mathematical Society},
  timestamp = {2010-01-19T17:51:27.000+0100},
  title = {{Markov chains and mixing times}},
  year = 2006
}

@book{Horn_Johnson_2012_Matrix, place={Cambridge}, edition={2}, title={Matrix Analysis}, publisher={Cambridge University Press}, author={Horn, Roger A. and Johnson, Charles R.}, year={2012}}

@ARTICLE{chow1968tree,
  author={Chow, C. and Liu, C.},
  journal={IEEE Transactions on Information Theory}, 
  title={Approximating discrete probability distributions with dependence trees}, 
  year={1968},
  volume={14},
  number={3},
  pages={462-467},
  keywords={},
  doi={10.1109/TIT.1968.1054142}}

@ARTICLE{chow1973polytrees,
  author={Chow, C. and Wagner, T.},
  journal={IEEE Transactions on Information Theory}, 
  title={Consistency of an estimate of tree-dependent probability distributions (Corresp.)}, 
  year={1973},
  volume={19},
  number={3},
  pages={369-371},
  keywords={},
  doi={10.1109/TIT.1973.1055013}}

@article{Yuan2007Gaussian,
    author = {Yuan, Ming and Lin, Yi},
    title = "{Model selection and estimation in the Gaussian graphical model}",
    journal = {Biometrika},
    volume = {94},
    number = {1},
    pages = {19-35},
    year = {2007},
    month = {03},
    issn = {0006-3444},
    doi = {10.1093/biomet/asm018},
    url = {https://doi.org/10.1093/biomet/asm018},
    eprint = {https://academic.oup.com/biomet/article-pdf/94/1/19/617853/asm018.pdf},
}

@misc{banerjee2007model,
      title={Model Selection Through Sparse Maximum Likelihood Estimation}, 
      author={Onureena Banerjee and Laurent El Ghaoui and Alexandre d'Aspremont},
      year={2007},
      eprint={0707.0704},
      archivePrefix={arXiv},
      primaryClass={cs.AI}
}

@article{Meinshausen2006Neighborhood,
   title={High-dimensional graphs and variable selection with the Lasso},
   volume={34},
   ISSN={0090-5364},
   url={http://dx.doi.org/10.1214/009053606000000281},
   DOI={10.1214/009053606000000281},
   number={3},
   journal={The Annals of Statistics},
   publisher={Institute of Mathematical Statistics},
   author={Meinshausen, Nicolai and Bühlmann, Peter},
   year={2006},
   month=jun }

@article{
Montanari2010InformationSpread,
author = {Andrea Montanari  and Amin Saberi },
title = {The spread of innovations in social networks},
journal = {Proceedings of the National Academy of Sciences},
volume = {107},
number = {47},
pages = {20196-20201},
year = {2010},
doi = {10.1073/pnas.1004098107},
URL = {https://www.pnas.org/doi/abs/10.1073/pnas.1004098107},
eprint = {https://www.pnas.org/doi/pdf/10.1073/pnas.1004098107},
}

@article{Lara2019GlauberEpidemicModel,
author = {Lara, Cristina and Massad, Eduardo and Fernandez Lopez, Luis and Amaku, Marcos},
year = {2019},
month = {01},
pages = {1052-1066},
title = {Analogy between the Formulation of Ising-Glauber Model and Si Epidemiological Model},
volume = {07},
journal = {Journal of Applied Mathematics and Physics},
doi = {10.4236/jamp.2019.75071}
}

@article{Glauber1963Ising, title={Time-Dependent Statistics of the Ising Model}, volume={4}, ISSN={1089-7658}, url={http://dx.doi.org/10.1063/1.1703954}, DOI={10.1063/1.1703954}, number={2}, journal={Journal of Mathematical Physics}, publisher={AIP Publishing}, author={Glauber, Roy J.}, year={1963}, month=feb, pages={294–307} }

@article{Auletta2017,
  title = {Metastability of Logit Dynamics for Coordination Games},
  volume = {80},
  ISSN = {1432-0541},
  url = {http://dx.doi.org/10.1007/s00453-017-0371-8},
  DOI = {10.1007/s00453-017-0371-8},
  number = {11},
  journal = {Algorithmica},
  publisher = {Springer Science and Business Media LLC},
  author = {Auletta,  Vincenzo and Ferraioli,  Diodato and Pasquale,  Francesco and Persiano,  Giuseppe},
  year = {2017},
  month = sep,
  pages = {3078–3131}
}

@misc{zhan2021graphical,
      title={Graphical Models for Financial Time Series and Portfolio Selection}, 
      author={Ni Zhan and Yijia Sun and Aman Jakhar and He Liu},
      year={2021},
      eprint={2101.09214},
      archivePrefix={arXiv},
      primaryClass={cs.LG}
}

@article{Farasat2015,
	author = {Farasat, Alireza and Nikolaev, Alexander and Srihari, Sargur N. and Blair, Rachael Hageman},
	copyright = {Springer-Verlag Wien},
	doi = {10.1007/s13278-015-0289-6},
	journal = {Social Network Analysis and Mining},
	keywords = {Probabilistic graphical modeling; Social network analysis; Bayesian networks; Markov networks; Exponential random graph models; Markov logic networks; Social influence; Network sampling},
	language = {English},
	number = {1},
	pages = {1-18},
	title = {Probabilistic graphical models in modern social network analysis},
	volume = {5},
	year = {2015},
}

@article{Murad2021Biology,
	address = {Center for Molecular Biology and Genetic Engineering, State University of Campinas, Campinas, S{\~a}o Paulo, Brazil.; Center for Molecular Biology and Genetic Engineering, State University of Campinas, Campinas, S{\~a}o Paulo, Brazil. brandaom@unicamp.br.},
	author = {Murad, Natalia Faraj and Brand{\~a}o, Marcelo Mendes},
	copyright = {{\copyright}2021. Springer Nature Switzerland AG.},
	crdt = {2022/02/03 12:18},
	date = {2021},
	date-added = {2024-05-30 11:43:25 -0700},
	date-modified = {2024-05-30 11:43:25 -0700},
	dcom = {20220207},
	doi = {10.1007/978-3-030-80352-0{\_}7},
	edat = {2022/02/04 06:00},
	issn = {0065-2598 (Print); 0065-2598 (Linking)},
	jid = {0121103},
	journal = {Adv Exp Med Biol},
	jt = {Advances in experimental medicine and biology},
	keywords = {Bioinformatics; Biological networks; System biology},
	language = {eng},
	lid = {10.1007/978-3-030-80352-0{\_}7 {$[$}doi{$]$}},
	lr = {20220504},
	mh = {*Algorithms; Computational Biology; Gene Regulatory Networks; Information Storage and Retrieval; Models, Biological; *Models, Statistical},
	mhda = {2022/02/08 06:00},
	oto = {NOTNLM},
	own = {NLM},
	pages = {119--130},
	phst = {2022/02/03 12:18 {$[$}entrez{$]$}; 2022/02/04 06:00 {$[$}pubmed{$]$}; 2022/02/08 06:00 {$[$}medline{$]$}},
	pl = {United States},
	pmid = {35113399},
	pst = {ppublish},
	pt = {Journal Article},
	sb = {IM},
	status = {MEDLINE},
	title = {Probabilistic Graphical Models Applied to Biological Networks.},
	volume = {1346},
	year = {2021},
	bdsk-url-1 = {https://doi.org/10.1007/978-3-030-80352-0%7B%5C_%7D7}}

@article{Dennis2022Markov,
	author = {Dennis, Louise A and Fu, Yu and Slavkovik, Marija},
	journal = {Journal of Logic and Computation},
	month = {03},
	number = {6},
	pages = {1195-1211},
	title = {{Markov chain model representation of information diffusion in social networks}},
	volume = {32},
	year = {2022}}

@inbook{Martinelli1999,
  title = {Lectures on Glauber Dynamics for Discrete Spin Models},
  ISBN = {9783540481157},
  ISSN = {1617-9692},
  url = {http://dx.doi.org/10.1007/978-3-540-48115-7_2},
  DOI = {10.1007/978-3-540-48115-7_2},
  booktitle = {Lectures on Probability Theory and Statistics},
  publisher = {Springer Berlin Heidelberg},
  author = {Martinelli,  Fabio},
  year = {1999},
  pages = {93–191}
}

@book{rue2005gaussian,
  added-at = {2012-09-05T22:47:25.000+0200},
  address = {London},
  author = {Rue, H. and Held, L.},
  biburl = {https://www.bibsonomy.org/bibtex/2540f64f277858061c27476be6520d2de/peter.ralph},
  interhash = {1d18fabb8705c471e5c2aacb8495456d},
  intrahash = {540f64f277858061c27476be6520d2de},
  keywords = {Gaussian_random_field book methods statistics},
  publisher = {Chapman \& Hall},
  series = {Monographs on Statistics and Applied Probability},
  timestamp = {2012-09-05T22:47:25.000+0200},
  title = {Gaussian {M}arkov Random Fields: {T}heory and Applications},
  url = {http://dx.doi.org/10.1201/9780203492024},
  volume = 104,
  year = 2005
}

@misc{gaitonde2023unifiedapproachlearningising,
      title={A Unified Approach to Learning Ising Models: Beyond Independence and Bounded Width}, 
      author={Jason Gaitonde and Elchanan Mossel},
      year={2023},
      eprint={2311.09197},
      archivePrefix={arXiv},
      primaryClass={cs.LG},
      url={https://arxiv.org/abs/2311.09197}, 
}

@misc{dutt2021exponentialreductionsamplecomplexity,
      title={Exponential Reduction in Sample Complexity with Learning of Ising Model Dynamics}, 
      author={Arkopal Dutt and Andrey Y. Lokhov and Marc Vuffray and Sidhant Misra},
      year={2021},
      eprint={2104.00995},
      archivePrefix={arXiv},
      primaryClass={cs.LG},
      url={https://arxiv.org/abs/2104.00995}, 
}

@misc{gaitonde2024efficientlylearningmarkovrandom,
      title={Efficiently Learning Markov Random Fields from Dynamics}, 
      author={Jason Gaitonde and Ankur Moitra and Elchanan Mossel},
      year={2024},
      eprint={2409.05284},
      archivePrefix={arXiv},
      primaryClass={cs.LG},
      url={https://arxiv.org/abs/2409.05284}, 
}

@article{AMIT199182,
	author = {Yali Amit},
	journal = {Journal of Multivariate Analysis},
	number = {1},
	pages = {82-99},
	title = {On rates of convergence of stochastic relaxation for Gaussian and non-Gaussian distributions},
	volume = {38},
	year = {1991}}

@misc{zhou2011structurelearningprobabilisticgraphical,
      title={Structure Learning of Probabilistic Graphical Models: A Comprehensive Survey}, 
      author={Yang Zhou},
      year={2011},
      eprint={1111.6925},
      archivePrefix={arXiv},
      primaryClass={stat.ML},
      url={https://arxiv.org/abs/1111.6925}, 
}

@article{degroot,
 ISSN = {01621459, 1537274X},
 URL = {http://www.jstor.org/stable/2285509},
 abstract = {Consider a group of individuals who must act together as a team or committee, and suppose that each individual in the group has his own subjective probability distribution for the unknown value of some parameter. A model is presented which describes how the group might reach agreement on a common subjective probability distribution for the parameter by pooling their individual opinions. The process leading to the consensus is explicitly described and the common distribution that is reached is explicitly determined. The model can also be applied to problems of reaching a consensus when the opinion of each member of the group is represented simply as a point estimate of the parameter rather than as a probability distribution.},
 author = {Morris H. DeGroot},
 journal = {Journal of the American Statistical Association},
 number = {345},
 pages = {118--121},
 publisher = {[American Statistical Association, Taylor & Francis, Ltd.]},
 title = {Reaching a Consensus},
 urldate = {2024-10-10},
 volume = {69},
 year = {1974}
}

@article{ChungConcentration,
	author = {Fan Chung and Linyuan Lu},
	journal = {Internet Mathematics},
	number = {1},
	pages = {79 -- 127},
	title = {{Concentration inequalities and martingale inequalities: a survey}},
	volume = {3},
	year = {2006}}

@article{roberts1997updating,
	author = {Roberts, G. O. and Sahu, S. K.},
	journal = {Journal of the Royal Statistical Society: Series B (Statistical Methodology)},
	number = {2},
	pages = {291-317},
	title = {Updating Schemes, Correlation Structure, Blocking and Parameterization for the Gibbs Sampler},
	volume = {59},
	year = {1997}}

@article{liu1994covariance,
	author = {Jun S. Liu and Wing Hung Wong and Augustine Kong},
	journal = {Biometrika},
	number = {1},
	pages = {27--40},
	title = {Covariance Structure of the Gibbs Sampler with Applications to the Comparisons of Estimators and Augmentation Schemes},
	volume = {81},
	year = {1994}}

@article{shamir2011variant,
  title={A variant of azuma's inequality for martingales with subgaussian tails},
  author={Shamir, Ohad},
  journal={arXiv preprint arXiv:1110.2392},
  year={2011}
}

@article{friedman2008sparse,
  title={Sparse inverse covariance estimation with the graphical lasso},
  author={Friedman, Jerome and Hastie, Trevor and Tibshirani, Robert},
  journal={Biostatistics},
  volume={9},
  number={3},
  pages={432--441},
  year={2008},
  publisher={Oxford University Press}
}

@book{10.7551/mitpress/1754.001.0001,
    author = {Spirtes, Peter and Glymour, Clark and Scheines, Richard},
    title = {Causation, Prediction, and Search},
    publisher = {The MIT Press},
    year = {2001},
    month = {01},
    abstract = {The authors address the assumptions and methods that allow us to turn observations into causal knowledge, and use even incomplete causal knowledge in planning and prediction to influence and control our environment.What assumptions and methods allow us to turn observations into causal knowledge, and how can even incomplete causal knowledge be used in planning and prediction to influence and control our environment? In this book Peter Spirtes, Clark Glymour, and Richard Scheines address these questions using the formalism of Bayes networks, with results that have been applied in diverse areas of research in the social, behavioral, and physical sciences.The authors show that although experimental and observational study designs may not always permit the same inferences, they are subject to uniform principles. They axiomatize the connection between causal structure and probabilistic independence, explore several varieties of causal indistinguishability, formulate a theory of manipulation, and develop asymptotically reliable procedures for searching over equivalence classes of causal models, including models of categorical data and structural equation models with and without latent variables.The authors show that the relationship between causality and probability can also help to clarify such diverse topics in statistics as the comparative power of experimentation versus observation, Simpson's paradox, errors in regression models, retrospective versus prospective sampling, and variable selection.The second edition contains a new introduction and an extensive survey of advances and applications that have appeared since the first edition was published in 1993.Bradford Books imprint},
    isbn = {9780262284158},
    doi = {10.7551/mitpress/1754.001.0001},
    url = {https://doi.org/10.7551/mitpress/1754.001.0001},
}

@article{wang2014convergence,
  title={Convergence rate and concentration inequalities for Gibbs sampling in high dimension},
  author={Wang, Neng-Yi and Wu, Liming},
  year={2014}
}

@article{wang2017convergence,
  title={Convergence rates of the random scan Gibbs sampler under the Dobrushin's uniqueness condition},
  author={Wang, Neng-Yi},
  year={2017}
}

@inproceedings{TRD25,
  title={Structure learning in gaussian graphical models from glauber dynamics},
  author={Tirukkonda, Vignesh and Rayas, Anirudh and Dasarathy, Gautam},
  booktitle={2025 IEEE International Symposium on Information Theory (ISIT)},
  pages={1--6},
  year={2025},
  organization={IEEE}
}

@inproceedings{SWMM26,
  title={Learning Gaussian Graphical Models from a Glauber Trajectory Without Mixing},
  author={Shen, Eric and Wu, Tony and Majid, Mahbod and Moitra, Ankur},
  booktitle={Proceedings of the 43rd International Conference on Machine Learning},
  year={2026},
  note={To appear}
}

@inproceedings{misra2020information,
  title={Information theoretic optimal learning of gaussian graphical models},
  author={Misra, Sidhant and Vuffray, Marc and Lokhov, Andrey Y},
  booktitle={Conference on Learning Theory},
  pages={2888--2909},
  year={2020},
  organization={PMLR}
}

@article{jones2004sufficient,
  title={Sufficient burn-in for Gibbs samplers for a hierarchical random effects model},
  author={Jones, Galin L and Hobert, James P},
  year={2004}
}

@article{jones2001honest,
  title={Honest exploration of intractable probability distributions via Markov chain Monte Carlo},
  author={Jones, Galin L and Hobert, James P},
  journal={Statistical Science},
  pages={312--334},
  year={2001},
  publisher={JSTOR}
}

@article{maceachern1994subsampling,
  title={Subsampling the Gibbs sampler},
  author={MacEachern, Steven N and Berliner, L Mark},
  journal={The American Statistician},
  volume={48},
  number={3},
  pages={188--190},
  year={1994},
  publisher={Taylor \& Francis}
}

@article{link2012thinning,
  title={On thinning of chains in MCMC},
  author={Link, William A and Eaton, Mitchell J},
  journal={Methods in ecology and evolution},
  volume={3},
  number={1},
  pages={112--115},
  year={2012},
  publisher={Wiley Online Library}
}

@book{gamerman2006markov,
  title={Markov chain Monte Carlo: stochastic simulation for Bayesian inference},
  author={Gamerman, Dani and Lopes, Hedibert F},
  year={2006},
  publisher={Chapman and Hall/CRC}
}

@article{owen2017statistically,
  title={Statistically efficient thinning of a Markov chain sampler},
  author={Owen, Art B},
  journal={Journal of Computational and Graphical Statistics},
  volume={26},
  number={3},
  pages={738--744},
  year={2017},
  publisher={Taylor \& Francis}
}

@article{riabiz2022optimal,
  title={Optimal thinning of MCMC output},
  author={Riabiz, Marina and Chen, Wilson Ye and Cockayne, Jon and Swietach, Pawel and Niederer, Steven A and Mackey, Lester and Oates, Chris J},
  journal={Journal of the Royal Statistical Society Series B: Statistical Methodology},
  volume={84},
  number={4},
  pages={1059--1081},
  year={2022},
  publisher={Oxford University Press}
}

@article{rosenthal1995minorization,
  title={Minorization conditions and convergence rates for Markov chain Monte Carlo},
  author={Rosenthal, Jeffrey S},
  journal={Journal of the American Statistical Association},
  volume={90},
  number={430},
  pages={558--566},
  year={1995},
  publisher={Taylor \& Francis}
}

@article{meyn1994computable,
  author  = {Meyn, Sean P. and Tweedie, Richard L.},
  title   = {Computable bounds for geometric convergence rates of {M}arkov chains},
  journal = {Annals of Applied Probability},
  volume  = {4},
  number  = {4},
  pages   = {981--1011},
  year    = {1994}
}

@article{rajaratnam2015mcmc,
  author  = {Rajaratnam, Bala and Sparks, Doug},
  title   = {{MCMC}-based inference in the era of big data: A fundamental analysis of the convergence complexity of high-dimensional chains},
  journal = {arXiv preprint arXiv:1508.00947},
  year    = {2015}
}

@article{qin2022wasserstein,
  author  = {Qin, Qian and Hobert, James P.},
  title   = {{W}asserstein-based methods for convergence complexity analysis of {MCMC} with applications},
  journal = {Annals of Applied Probability},
  volume  = {32},
  number  = {1},
  pages   = {124--166},
  year    = {2022},
  doi     = {10.1214/21-AAP1673}
}

@article{ascolani2024entropy,
  author  = {Ascolani, Filippo and Lavenant, Hugo and Zanella, Giacomo},
  title   = {Entropy contraction of the {G}ibbs sampler under log-concavity},
  journal = {arXiv preprint arXiv:2410.00858},
  year    = {2024}
}

@article{wadia2024gibbs,
  author  = {Wadia, Neha S.},
  title   = {A mixing time bound for {G}ibbs sampling from log-smooth log-concave distributions},
  journal = {arXiv preprint arXiv:2412.17899},
  year    = {2024}
}

@book{Villani2009,
  author    = {Villani, C{\'e}dric},
  title     = {Optimal Transport: Old and New},
  publisher = {Springer},
  address   = {Berlin},
  year      = {2009}
}

@article{RobertsRosenthal2002,
  author  = {Roberts, Gareth O. and Rosenthal, Jeffrey S.},
  title   = {One-shot coupling for certain stochastic recursive sequences},
  journal = {Stochastic Processes and their Applications},
  volume  = {99},
  number  = {2},
  pages   = {195--208},
  year    = {2002}
}

@article{MadrasSezer2010,
  author  = {Madras, Neal and Sezer, Deniz},
  title   = {Quantitative bounds for {M}arkov chain convergence: {W}asserstein and total variation distances},
  journal = {Bernoulli},
  volume  = {16},
  number  = {3},
  pages   = {882--908},
  year    = {2010}
}

@article{GibbsSu2002,
  author  = {Gibbs, Alison L. and Su, Francis Edward},
  title   = {On choosing and bounding probability metrics},
  journal = {International Statistical Review},
  volume  = {70},
  number  = {3},
  pages   = {419--435},
  year    = {2002}
}

\appendix
    
\section{Proofs of Theorems}

In this section, we will state and prove Theorem 1 and Theorem 2 under the assumptions \ref{assumption: Bounded edge strength}-\ref{assumption: bounded degree and sample decay} specified in Section~\ref{sec: theorem}. The statements and proofs of all the lemmas used to establish the theorems are provided in Appendix~\ref{sec: proofs of lemmas}.  

\subsection{Theorem~\ref{thm: final theorem}: Statement \& proof}\label{appsubsec: theorem 1 statement and proof}

The first theorem establishes achievability results for the observation time needed for model selection by characterizing the performance of \algone{}.
\setcounter{theorem}{0}

\begin{theorem}\label{appsecthm: final theorem}
    Consider observations from Glauber dynamics $\{\mathbf{Y}^{(t)}\}_{t=0}^{T}$ associated with a Gaussian graphical model $(G,f_{\mathbf{X}})$. Under the assumptions~\ref{assumption: Bounded edge strength}-\ref{assumption: bounded degree and sample decay}, for any $\delta \in (0,1/2)$  and sufficiently large $p$ there exists constants $C_{2}, C_{3}$ and $C_{4}$ such that if the observation time $T$ satisfies ($T_{\max}$ is as in Lemma~\ref{lemma: High probability event B})
    \begin{align}
        T_{\max} \geq T \geq  \frac{C_{2}}{(\bmin\sigmamin)^5}d^{3}\log\left(\frac{p}{\delta}\right)^{5},  
    \end{align}
    
    and the interval length $\tau$ and threshold $\rho$ are set as 
    \begin{align*}
        \tau = d^{-1}\left[\frac{C_{3}{\log\left(\frac{p}{\delta}\right)}}{\sigmamin\bmin} + 1\right]^{-1}, \qquad
        \rho = C_{4}\tau {\log\left(\frac{p}{\delta}\right)},
    \end{align*}
    then \algone{} recovers the true edge set with probability at least $1-\delta$.
\end{theorem}

\begin{proof}
To bound the probability of error, which is the probability of the event that the estimated edge set is not equal to the true edge set, we first condition on the high probability event $B_{\delta}$ and its complement as defined in Section~\ref{sec: the algorithm}. As the algorithm returns an empty edge set when $B_{\delta}^{c}$ occurs, the error probability, in this case, is $\prob[]{B_{\delta}^c} < \delta/2$. In the sequel, we analyze the probability of error when the event $B_{\delta}$ has occurred and show that it is also less than $\delta/2$.

Considering each pair $\{i,j\}$ separately, we consider the cases where $\{i,j\} \in E$ and $\{i,j\} \notin E$. We show in Lemma~\ref{appseclemma: separation2} that for an appropriately chosen $\tau$, when the edge does not exist, the expectation of the test statistic is bounded around zero and is bounded away from zero when the edge exists. Furthermore,  the bounds are such that there is a separation between them, allowing us to set a threshold between the cases. 

Error occurs, when the algorithm claims the existence of an edge between nodes $i,j$ when there is none and vice versa. However, due to separation, with enough samples, we can guarantee the recovery of the true edge set. We use a Bernstein-style martingale concentration inequality, Lemma~\ref{appseclemma: Bernstein style} to establish bounds on the probability of errors. To do this, we need our test statistic to be bounded, almost surely, and we need a bound on the conditional variance of the submartingale. Conditioning on the event $B_{\delta}$ allows us to do both. We expect to be able to strengthen these results using a sub-Gaussian version of martingale concentration~\citep{shamir2011variant} and using an appropriate stopping time construction, however, we leave this for future work.

We begin implementing the above strategy below. We will use the same notation $\mathbb{P}_{B}[\cdot] = \prob[]{\cdot|B_{\delta}}$ as in Section~\ref{sec: the algorithm} for the conditional measure with respect to the event $B_{\delta}$. Fix any $\delta>0$. Then the probability of error decomposes as follows
\begin{align}\label{eqn: error decomposition}
    \prob[]{\hat{E} \neq E} &=\prob[B]{\hat{E} \neq E}\prob[]{B_{\delta}} + \prob[B^{c}]{\hat{E} \neq E } \prob[] {B_{\delta}^{c}} \nonumber\\
    &\leq \prob[B]{\hat{E} \neq E} + \prob[] {B_{\delta}^{c}}.
\end{align} 
We show in Lemma~\ref{lemma: High probability event B} that the event $B_{\delta}$ is a high probability event and therefore $\prob[]{B_{\delta}^{c}} < \delta/2$ holds. Now we focus on bounding the other term in \ref{eqn: error decomposition} which is the probability of error conditioned on $B_{\delta}$. 

Using union bound we write the error as a sum of the probability of error for every $\{i,j\} \in \binom{V}{2}$, that is
\begin{align}\label{eqn: errors type1 and type2}
    \prob[B]{\hat{E}\neq E} & = \prob[B]{\exists \{i,j\} \in \binom{V}{2} : \mathds{1}_{\hat{E}}(i,j) \neq \mathds{1}_{E}(i,j)} \nonumber\\
    & \leq \sum_{\{i,j\}} \prob[B]{\mathds{1}_{\hat{E}}(i,j) \neq \mathds{1}_{E}(i,j)} \nonumber\\
    & = \sum_{\{i,j\}} \mathds{1}_{E}(i,j)\prob[B]{\mathds{1}_{\hat{E}}(i,j) = 0}  + \mathds{1}_{E^{c}}(i,j)\prob[B]{\mathds{1}_{\hat{E}}(i,j) = 1} \nonumber\\
    & = \sum_{\{i,j\}} \Bigg( \mathds{1}_{E}(i,j)\prob[B]{ \abs{T_{ij}} \leq \rho \text{ and } \abs{T_{ji}} \leq \rho} + 
    \mathds{1}_{E^{c}}(i,j)\cdot\prob[B]{\abs{T_{ij}} > \rho \text{ or } \abs{T_{ji}} > \rho} \Bigg). \nonumber\\
    & \leq 2\sum_{\{i,j\}} \Bigg( \mathds{1}_{E}(i,j)\prob[B]{\abs{T_{ij}} \leq \rho} + 
    \mathds{1}_{E^{c}}(i,j)\cdot\prob[B]{\abs{T_{ij}} > \rho} \Bigg). \nonumber
\end{align}

The last equality follows from the definition of \algone{} as it asserts the existence of an edge $\{i,j\}$ if and only if $\max\{\abs{T_{ij}}, \abs{T_{ji}}\} > \rho$. Here $\rho$ is a threshold parameter taken as input by the algorithm. Next, we show how to choose this threshold parameter.

Recall that from Section~\ref{sec: the algorithm} under the event $B_{\delta}$ we have, $\max_{i \in [p], t \leq T_{\max}} \abs{Y_{i}^{(t)}} \leq \ymax $. As discussed in the proof strategy, selecting a threshold involves selecting an appropriate interval length $\tau$ such that a separation exists. In Lemma~\ref{appseclemma: separation2} we establish that if we take  
\begin{align*}
     \tau = d^{-1}\left[\frac{12\ymax^{2}}{\sigmamin\bmin} + 1\right]^{-1},
\end{align*}
then we have the following for every $k \in [k_{\max}]$ and $i,j \in [p]$:
\begin{enumerate}
    \item if $\{i,j\} \notin E$, then $\abs{\E[B]{T_{ij}^{k} \middle | \mathcal{F}_{k-1}}} \leq \eta$ where $\eta = 4\ymax^{2}q\tau d > 0$, and
    \item if $\{i,j\} \in E$, then $ \operatorname{sign}(\beta_{ij})\E[B]{T_{ij}^{k} \middle | \mathcal{F}_{k-1}} \geq \operatorname{sign}(\beta_{ij})\eta'$ where $\eta' = 2(4\ymax^{2}q\tau d) > 0$
\end{enumerate}
where we define the sigma algebra $\mathcal{F}_{k-1} = \sigma\left(\left\{\mathbf{Y}^{(t)}\right\}_{0 \leq t < (k-1)\tau}\right)$. We therefore take the threshold to be 
\begin{align*}
    \rho = \frac{3\eta}{2} = 6\ymax^{2}q\tau d.
\end{align*}

We will now consider the cases of the pair $\{i,j\}$ being connected and not connected separately and bound the terms in \ref{eqn: errors type1 and type2}.

\underline{\bf Case 1:  $\{i,j\} \in E$.} 
We begin by observing that the sequence $\{Z_{k}\}_{k \geq 1}$ defined below is adapted to the filtration $\{\mathcal{F}_{k}\}_{k \geq 1} = \left(\sigma(\{\mathbf{Y}^{(t)}\}_{t=0}^{k\tau}) \right)_{k \geq 1}$ as $Z_k$ is $\mathcal{F}_k$-measureable.
\begin{align*}
    \{Z_{k}\}_{k \geq 1} = \left\{\sum_{l=1}^{k}\operatorname{sign}(\beta_{ij})T_{ij}^{k} - k\eta'\right\}_{k \geq 1}
\end{align*}
Specifically, $T_{ij}^k$ is deterministic given the data in the $k$-th interval. Furthermore, from our choice of $\tau$, the following holds due to Lemma~\ref{appseclemma: separation2}:
\begin{align*}
    \E[B]{Z_{k} \middle | \mathcal{F}_{k-1}} & = \E[B]{Z_{k-1} + \operatorname{sign}(\beta_{ij})T_{ij}^{k} - \eta' \middle | \mathcal{F}_{k-1}} \\
    &  \geq Z_{k-1}
\end{align*}  
as  $\E[B]{\operatorname{sign}(\beta_{ij})T_{ij}^{k} - \eta' \middle | \mathcal{F}_{k-1}} > 0$. Therefore, $\{Z_{k}\}$ is the submartingale adapted to the filtration $\{\mathcal{F}_{k}\}_{k \geq 1}$. We will next show that given enough samples we can control the probability of error. To do this we note that the sequence has bounded difference (a.s.) which follows from Lemma~\ref{appseclemma: Test statistic bounded as}. Indeed, 
\begin{align*}
    \abs{\operatorname{sign}(\beta_{ij})T_{ij}^{k} - \eta'} \leq \abs{T_{ij}^{k}} + \eta' \leq 4\ymax^{2}(1 + 2q\tau d) = c.
\end{align*}

Next, we show in Lemma~\ref{appseclemma: variance bound} we have a bound on the variance of the appropriately conditioned submartingale sequence, i.e $\operatorname{var}\left(Z_{k} | \mathcal{F}_{k-1}\right) \leq 16\ymax^{4}q = s^{2}$. The probability of error can now be written as 
\begin{align*}
    \prob[B]{\abs{\frac{\sum_{k=1}^{k_{\max}}T_{ij}^{k}}{k_{\max}}} \leq \rho} & \leq \prob[B]{\frac{\sum_{k=1}^{k_{\max}}\operatorname{sign}(\beta_{ij})T_{ij}^{k}}{k_{\max}} \leq \rho}.
\end{align*}
Multiplying $k_{\max}$, then subtracting $\eta'k_{\max}$ both sides allows us to apply the Bernstein style martingale concentration inequality (stated in Lemma~\ref{appseclemma: Bernstein style}) with $Z_0 = 0$. This gives us the following. 
\begin{align}
    \prob[B]{Z_{k} \leq (\rho - \eta') k_{\max}} & =  \prob[B]{Z_{k} \leq -\frac{\eta k_{\max}}{2}}\\
           &\leq \exp\left(-\frac{\eta^{2}k_{\max}^{2}/4}{2k_{\max}s^{2} + c\eta k_{\max}/6}\right) \\
        &= \exp\left(-\frac{k_{\max}}{8(s/\eta)^{2} + 2/3(c/\eta)}\right).
\end{align}
Taking the union bound over $p^{2}$ edges, setting $\delta/2$ as the upper bound and solving for $k_{\max}$we get, 
\begin{align*}
\left[8\left(\frac{s}{\eta}\right)^{2} + \frac{2}{3}\left(\frac{c}{\eta}\right)\right]\log\left(\frac{2p^{2}}{\delta}\right) 
        &\leq 16\left(\frac{1}{q\tau^{2}d^{2}} + \frac{1}{q\tau d}\right)\log\left(\frac{p}{\delta}\right)  \\ 
        &\leq 32\left(\frac{1}{q\tau^{2}d^{2}}\right)\log\left(\frac{p}{\delta}\right) \\
        & \leq k_{\max}.
    \end{align*}
The first inequality follows from the fact that $\log\left(\frac{2p^{2}}{\delta}\right) < 2\log\left(\frac{p}{\delta}\right)$ when $\delta < 1/2$ and using $(1) : \left(\frac{c}{\eta}\right) = 1/q\tau d + 2 < 2/q\tau d$ as both $q<1, \tau d < 1$ and $(2): \left(\frac{s}{\eta}\right)^{2} = \frac{q}{(q\tau d)^{2}} = \frac{1}{q\tau^{2}d^{2}}$.  To convert this bound for $k_{\max}$ to $T$ we use $k_{\max} = \floor{\frac{T}{\tau}}$ we get
\begin{align*}
    32\left(\frac{1}{q\tau d^{2}}\right)\log\left(\frac{p}{\delta}\right) 
        &\stackrel{(a)}{\leq} 2^{13}e\left(\frac{1}{\tau^{5}d^{2}}\right)\log\left(\frac{p} {\delta}\right) \\ 
        &\stackrel{(b)}{\leq} 2^{13}e \cdot d^{3}\left(\frac{12\ymax^{2}}{ \bmin \sigmamin} + 1\right)^{5}\log\left(\frac{p} {\delta}\right) \\ 
        &\stackrel{(c)}{\leq}\frac{2^{13}e24^{5}}{b^{4}} \cdot d^{3}\left(\frac{\ymax^{2}}{\bmin \sigmamin}\right)^{5}\log\left(\frac{p}{\delta}\right) \\
        &\stackrel{(d)}{\leq} \frac{C_{2}}{(\bmin\sigmamin)^5}d^{3}\log\left(\frac{p}{\delta}\right)^{5} < T.
\end{align*}
Recall that $q = [(1-e^{-\tau/4})e^{-\tau/4}]^{4}$. For (a) we use $1-e^{-x} > xe^{-a}$ for $x \in [0,a]$ with $\tau < 1/2$. Therefore $q > \tau^{4}e^{-0.5}e^{-\tau}/(4^{4})$ and $e^{\tau} < e^{1/2}$. For (c), recall the choice for $\tau = d^{-1}\left(\frac{12\ymax^{2}}{\bmin\sigmamin} + 1\right)^{-1} < d^{-1}\left(\frac{24\ymax^{2}}{\bmin\sigmamin}\right)^{-1}$. Finally for (d) using $\ymax = C_{1}\sigma_{\max}\sqrt{\log\left(\frac{p}{\delta}\right)}$ we get the final bound for $T$.

\underline{\bf Case 2:  $\{i,j\} \notin E$.} Consider the sequences (1): $\{Z_{k}\}_{k \geq 1} = \{\sum_{l=1}^{k} T_{ij}^{l} - k\eta\}_{k\geq1}$ and (2): $\{\tilde{Z}_{k}\}_{k \geq 1} = \{\sum_{l=1}^{k}T_{ij}^{l} + k\eta\}_{k \geq 1}$. From Lemma~\ref{appseclemma: separation2}, it again follows that these are supermartingale and submartingale sequences respectively, adapted to the filtration $\{\mathcal{F}_{k}\}_{k \geq 1}$ as defined before. Using $\eta' > \eta$, we indeed have the same bound as in case 1 for the difference sequences associated with $\{Z_{k}\}$ and $\{\tilde{Z}_{k}\}$. That is, 
\begin{align*}
    \abs{T_{ij}^{k} - \eta} \leq \abs{T_{ij}^{k}} + \abs{\eta}\leq \abs{T_{ij}^{k}} + \eta' \leq 4\ymax^{2} + \eta' = c.
\end{align*} 
From Lemma~\ref{appseclemma: variance bound}, we have $\var{Z_{k} | \mathcal{F}_{k-1}} = \var{\tilde{Z_{k}} | \mathcal{F}_{k-1}}=16\ymax^{4}q = s^{2}$.

The probability of error in this case can be written as,
\begin{align*}
    \prob[B]{\abs{\frac{\sum_{k=1}^{k_{\max}}T_{ij}^{k}}{k_{\max}}} > \rho} &= \prob[B]{Z_{k} > (\rho - \eta)k_{\max}} + \prob[B]{\tilde{Z}_{k} < (\eta -\rho)k_{\max}}
\end{align*}
and from Lemma~\ref{appseclemma: Bernstein style} we have,
\begin{align}
    \prob[B]{\abs{\frac{\sum_{k=1}^{k_{\max}}T_{ij}^{k}}{k_{\max}}} > \rho} 
        & \leq 2\exp\left(-\frac{\eta^{2}k_{\max}^{2}/4}{2k_{\max}s^{2} + c\eta k_{\max}/6}\right) 
        = 2\exp\left(-\frac{k_{\max}}{8(s/\eta)^{2} + 2/3(c/\eta)}\right).
\end{align}

Taking the union bound over $\binom{p}{2} \leq \frac{p^{2}}{2}$ edges, setting $\delta/2$ as the upper bound and solving for $k_{\max}$ we get the same bound for T as in case 1.

\end{proof}

\subsection{Proof of Theorem~\ref{thm:algorithm-2-pac}}\label{appsubsec:algorithm-2-proof}

\begin{theorem}[Sample complexity guarantee for \algtwo{}]
Consider observations $\mathbf{X}_0^{n-1}$ from the Gaussian Glauber dynamics above,
started at $\mathbf{X}^{(0)}=x$.  Suppose Assumptions~\ref{assumption: Bounded edge strength}--\ref{assumption: bounded degree and sample decay} hold.  Then for every $\delta\in(0,1)$,
\algtwo{} satisfies
\begin{align}
\prob{\widetilde{\mathcal A}_{\mathfrak b,\mathfrak t}(\mathbf{X}_0^{n-1})\neq G}
\le \delta,
\end{align}
if the number of samples satisfies
\begin{align}
    n = \mathcal O \left(\frac{d\log(p/\delta)\,p}{\kappa^2(1-r)}\log\!\left(\frac{d\log(p/\delta)\,p^{3/2}\sqrt{\log p}}{\kappa^2\delta}\right)\right)
\end{align}
and we choose the burn-in and thinning lengths as
\begin{align}
    \mathfrak b,\mathfrak t
    \ge
    C_{x,\Theta,\mathcal{A}}\,\frac{p}{1-r}
    \log\!\left(\frac{d\log(p/\delta)\,p^{3/2}\sqrt{\log p}}{\kappa^2\delta}\right),
\end{align}
for a constant $C_{x,\Theta,\mathcal{A}}$ depending on the initialization, model
parameters, and the universal constant of the base learner $\mathcal{A}$, but not on $p,d,\kappa$ or $\delta$ except through the displayed terms.
\end{theorem}

\begin{remark}
Equivalently, in normalized Glauber time, this corresponds to
\begin{align}
    T = \mathcal O \left(\frac{d\log(p/\delta)}{\kappa^2(1-r)}\log\left(\frac{d\log(p/\delta)\,p^{3/2}\sqrt{\log p}}{\kappa^2\delta}\right)\right)
\end{align}
time units.
\end{remark}

We prove the theorem in four steps. First, total variation reduces the error of
\algtwo{} to the error of the base i.i.d. algorithm.  Second, the total
variation distance between the burned-in/thinned chain and an i.i.d. chain is
decomposed into burn-in and thinning errors.  Third, these errors are bounded by
a total-variation mixing estimate for the Gaussian Gibbs sampler.  Finally, the
parameters are chosen so that the two sources of error sum to at most $\delta$.
The proofs of the supporting lemmas are deferred to Section~\ref{subsec:supporting-lemmas}.

The following lemma shows the first step:
\begin{lemma}[Total variation controls the change in error probability]
~\label{applem:tv-controls-error}
Let $\mathcal A$ be any structure learning algorithm. Then
\begin{align}
    &\abs{\prob{\mathcal A(\mathbf{Y}_0^{m-1})\neq G} - \prob{\mathcal A(\mathbf{Z}_0^{m-1})\neq G}} \le \norm{\mathcal L(\mathbf{Y}_0^{m-1}\mid \mathbf{X}^{(0)}=x)-\pi^{\otimes m}}.
\end{align}
\end{lemma}

Next, we decompose the total variation into two error terms; namely burn-in error and thinning error.
\begin{lemma}[Burn-in and thinning decomposition]
~\label{lem:burn-thin-decomposition}
For the thinned sequence in~\eqref{eq:burned-thinned-samples},
\begin{align}
&\norm{\mathcal L(\mathbf{Y}_0^{m-1}\mid \mathbf{X}^{(0)}=x)-\pi^{\otimes m}} \le \norm{K^{\mathfrak b}(x,\cdot)-\pi}
+
(m-1)\E[\mathbf{Z}\sim\pi]{\norm{K^{\mathfrak t}(\mathbf{Z},\cdot)-\pi}}.
\label{eq:burn-thin-decomposition}
\end{align}
\end{lemma}

We next record the mixing estimate used to choose $\mathfrak b$ and $\mathfrak t$.  We work on the product space $\mathbb{R}^{p}$, equip each site with the absolute-difference metric $\mathsf{d}(x,y) := |x-y|,$
and equip $\mathbb{R}^{p}$ with the product metric $\mathsf{d}_{L^1}(x,y) := \sum_{i=1}^p \mathsf{d}(x_i,y_i), x,y\in \mathbb{R}^{p}$.

For two probability measures $\nu_1$ and $\nu_2$ on $\mathbb{R}^{p}$, define the
$L^1$-Wasserstein distance by
\begin{align*}
W_{1,\mathsf{d}_{L^1}}(\nu_1,\nu_2)
:=
\inf_{(U,V)} \mathbb{E}\bigl[\mathsf{d}_{L^1}(U,V)\bigr],
\end{align*}
where the infimum runs over all couplings $(U,V)$ of $(\nu_1,\nu_2)$. By the
Kantorovich--Rubinstein duality,
\begin{align*}
W_{1,\mathsf{d}_{L^1}}(\nu_1,\nu_2)
=
\sup_{\|f\|_{\mathrm{Lip}(\mathsf{d}_{L^1})}\le 1}
\int f \, d(\nu_1-\nu_2), \qquad \norm{f}_{\operatorname{Lip}(\mathsf{d}_{L^1})} := \sup_{x \neq y} \frac{\abs{f(x) - f(y)}}{\mathsf{d}_{L^1}(x,y)}
\end{align*}

Finally, define the $d$-Dobrushin interdependence coefficients $C \coloneq (c_{ij})_{i,j = 1,\dots,p}$ as in \citep{wang2014convergence}.  Define the Dobrushin interdependence matrix norm
\begin{align*}
r := \max_{i \in [p]} \sum_{j \neq i} c_{ij}.
\end{align*}
The Dobrushin uniqueness condition requires $r < 1$ (Assumption~\ref{assumption: bounded degree and sample decay}). This is the row-sum (equivalently, $\ell_\infty$ operator-norm) form of the Dobrushin condition --- condition (H1) in~\citet{wang2017convergence}. An alternative column-sum form (H2), with $r_1 := \max_j \sum_i c_{ij}$, is also available there but is not the version we use below.

For the Gaussian conditional laws, the Dobrushin coefficient is exactly $c_{ij} = |\beta_{ij}|$ for $j \neq i$ (with $c_{ii} = 0$, since $\mu_i(\cdot \mid x)$ does not depend on $x^i$), because changing coordinate $j$ shifts the conditional mean of coordinate $i$ by $\beta_{ij}(u_j - v_j)$ while leaving the conditional variance fixed.

Under the Dobrushin uniqueness condition (H1),~\citet[Theorem~2.1(b)]{wang2017convergence} establishes the following Wasserstein convergence rate for the random-scan Gibbs sampler with one-step kernel $K$: for any initial distribution $\nu$ on $\mathbb{R}^p$ and any $t \ge 1$,
\begin{align}
W_{1,\mathsf{d}_{L^1}}(\nu K^t, \pi)
\le
p\left(1 - \frac{1-r}{p}\right)^t \max_{1 \le i \le p} \E[]{\mathsf{d}(X_i^{(0)}, Y_i^{(0)})},
\label{eq:wang-random-scan}
\end{align}
where $(X^{(0)}, Y^{(0)})$ is any coupling of $(\nu, \pi)$.

\begin{lemma}[Dobrushin contraction in Wasserstein distance]
\label{lem:dobrushin-wasserstein}
Under Assumption~\ref{assumption: bounded degree and sample decay}, for an initialization $\mathbf{X}^{(0)} \sim \delta_{x}$, $x \in \mathbb{R}^{p}$, and any $t \ge 1$,
\begin{align}
W_{1,\mathsf{d}_{L^1}}(\delta_{x} K^t,\pi)
\le
p\left(1-\frac{1-r}{p}\right)^t \left( \norm{x}_{\infty} + \Theta_{\min}^{-1/2}\sqrt{\frac{2}{\pi}}\right).
\label{eq:wasserstein-contraction}
\end{align}
\end{lemma}

It is well known that under the discrete site metric, i.e. $d(x,y) = \mathds{1}_{x \neq y}$ the Wasserstein distance and total variation distance are equivalent. However using this metric does not satisfy the Dobrushin uniqness condition ($r < 1$) which leads to a vacuous upper bound.

We will now define the notion of ``approximate-Lipschitzness'' under which this Wasserstein distance can be converted to total variation for a high-dimensional result.

\begin{definition}[Approximate Lipschitz continuity]
\label{def:approx-lipschitz}
For $L,\varepsilon\ge 0$, a function
$\Phi:\mathbb{R}^p\to\mathbb{R}$ is called $(L,\varepsilon)$-approximately
Lipschitz if
\begin{align}
\abs{\Phi(x)-\Phi(y)}
\le
L\,\mathsf{d}_{L^1}(x,y)+\varepsilon,
\qquad x,y\in\mathbb{R}^p.
\label{eq:approx-lipschitz}
\end{align}
\end{definition}

Thus ordinary Lipschitz continuity is the special case $\varepsilon=0$.
The parameter $\varepsilon$ measures a uniform defect: the function may fail to
be exactly Lipschitz, but only up to an additive error that is independent of
the pair $(x,y)$.  This is useful for Markov kernels because a transition block
may smooth a bounded measurable test function except on a small exceptional
event, such as the event that not all coordinates have been refreshed.

\begin{lemma}[Approximate Lipschitz smoothing]
\label{lem:approx-lipschitz-smoothing}
For $t \ge 1$, let $\delta(t) := pe^{-t/p}$.  Then for any bounded measurable $f : \mathbb{R}^p \to \mathbb{R}$, the function $K^t f$ is $(L_f,\, 2\norm{f}_\infty\,\delta(t))$-approximately Lipschitz, that is,
\begin{align}
\abs{K^t f(u)-K^t f(v)}
\le
L_f\,\mathsf d_{L^1}(u,v)
+
2\norm{f}_\infty\,\delta(t),
\label{eq:approx-lipschitz-smoothing}
\end{align}
where one may take
\begin{align}
L_f = \sqrt{\frac{2p}{\pi}}\,\norm{f}_\infty\,\sqrt{\Theta_{\max}}\,d\bmax.
\end{align}
\end{lemma}

Now using the above bounds we can select the burn-in and thinning parameters such that the respective approximation errors are small.

\begin{lemma}[Total variation mixing bound]
\label{lem:tv-mixing-bound}
Under the assumptions of Theorem~\ref{thm:algorithm-2-pac}:
\begin{enumerate}
    \item[\textup{(i)}] \textbf{(Deterministic start.)} There is a constant
    $C_{x,\Theta}$ depending on the initialization and model parameters but not on $p$, $\kappa$, or $\delta$, such that for every $\varepsilon\in(0,1)$, the choice
    \begin{align}
    t\ge
    C_{x,\Theta}\,\frac{p}{1-r}\log\!\left(\frac{p^{3/2}}{\varepsilon}\right)
    \label{eq:one-block-tv-choice}
    \end{align}
    implies $\norm{K^t(x,\cdot)-\pi}\le \varepsilon$.

    \item[\textup{(ii)}] \textbf{(Stationary start.)} There is a constant
    $C_{\Theta}$ depending on the model parameters but not on $p$, $\kappa$, or $\delta$, such that for every $\varepsilon\in(0,1)$, the choice
    \begin{align}
    t\ge
    C_{\Theta}\,\frac{p}{1-r}\log\!\left(\frac{p^{3/2}\sqrt{\log p}}{\varepsilon}\right)
    \label{eq:one-block-tv-choice-stationary}
    \end{align}
    implies $\E[\mathbf{Z}\sim\pi]{\norm{K^t(\mathbf{Z},\cdot)-\pi}}\le \varepsilon$.
\end{enumerate}
\end{lemma}

\begin{proof}[Proof of Theorem~\ref{thm:algorithm-2-pac}]
By Lemma~\ref{applem:tv-controls-error},
\begin{align}
    \prob{\widetilde{\mathcal A}_{\mathfrak b,\mathfrak t}(\mathbf{X}_0^{n-1})\neq G} &\le \prob{\mathcal A(\mathbf{Z}_0^{m-1})\neq G} + \norm{\mathcal L(\mathbf{Y}_0^{m-1}\mid \mathbf{X}^{(0)}=x)-\pi^{\otimes m}}.
\end{align}
By the DICE i.i.d.\ sample complexity~\citep{misra2020information}, choosing
$m\ge C_{\rm DICE}\,d\log(2p/\delta)/\kappa^{2}$ makes the first term at most $\delta/2$.
By Lemma~\ref{lem:burn-thin-decomposition}, the second term satisfies
\begin{align}
    \norm{\mathcal L(\mathbf{Y}_0^{m-1}\mid \mathbf{X}^{(0)}=x)-\pi^{\otimes m}}
    \le \norm{K^{\mathfrak b}(x,\cdot)-\pi} + (m-1)\,\E[\mathbf{Z}\sim\pi]{\norm{K^{\mathfrak t}(\mathbf{Z},\cdot)-\pi}}.
\end{align}

It remains to bound the right-hand side by $\delta/2$.  Apply
Lemma~\ref{lem:tv-mixing-bound}(i) with accuracy $\varepsilon = \delta/(2m)$ to the burn-in term
and Lemma~\ref{lem:tv-mixing-bound}(ii) to each of the $m-1$ thinning terms.  Since the stationary-start condition~\eqref{eq:one-block-tv-choice-stationary} is the more stringent of the two (it has the additional $\sqrt{\log p}$ factor), both are ensured by
\begin{align*}
    \mathfrak b,\mathfrak t \ge C_{\Theta}\,\frac{p}{1-r}\log\!\left(\frac{2C_{\mathrm{DICE}}\,d\log(p/\delta)\,p^{3/2}\sqrt{\log p}}{\kappa^2\delta}\right),
\end{align*}
where the factor $2C_{\mathrm{DICE}}$ inside the logarithm arises from substituting $\varepsilon = \delta/(2m)$ with $m = C_{\mathrm{DICE}}\,d\log(p/\delta)/\kappa^2$.  Since $\log(2C_{\mathrm{DICE}})$ is a universal constant, the same absorption argument as in the proof of Lemma~\ref{lem:tv-mixing-bound} yields
\begin{align}
    \mathfrak b,\mathfrak t \ge C_{x,\Theta,\mathcal{A}}\,\frac{p}{1-r}\log\!\left(\frac{d\log(p/\delta)\,p^{3/2}\sqrt{\log p}}{\kappa^2\delta}\right),
\end{align}
where $C_{x,\Theta,\mathcal{A}} := (1 + \log(2C_{\mathrm{DICE}}))\max(C_{x,\Theta},\, C_{\Theta})$ combines the mixing constants from Lemma~\ref{lem:tv-mixing-bound} with the universal constant of DICE.  This gives
\begin{align}
    \norm{K^{\mathfrak b}(x,\cdot)-\pi} + (m-1)\,\E[\mathbf{Z}\sim\pi]{\norm{K^{\mathfrak t}(\mathbf{Z},\cdot)-\pi}} \le \frac{\delta}{2m} + (m-1)\cdot\frac{\delta}{2m} \le \frac{\delta}{2}.
\end{align}
Combining the two bounds yields
\begin{align}
    \prob{\widetilde{\mathcal A}_{\mathfrak b,\mathfrak t}(\mathbf{X}_0^{n-1})\neq G} \le \delta.
\end{align}
The observation length condition follows from the requirement that all retained samples lie in the observed trajectory: $n\ge \mathfrak b+(m-1)\mathfrak t+1$.
\end{proof}

\subsection{Proofs of supporting lemmas}
\label{subsec:supporting-lemmas}

\begin{proof}[Proof of Lemma~\ref{applem:tv-controls-error}]
Let $\mu_1 = \mathcal L(\mathbf{Y}_0^{m-1}\mid \mathbf{X}^{(0)}=x)$ and $\mu_2 = \pi^{\otimes m}$.
Define the measurable set $A = \{y_0^{m-1} \in \mathbb{R}^{p \times m} : \mathcal A(y_0^{m-1})\neq G\}$.
Then
\begin{align*}
    \abs{\prob{\mathcal A(\mathbf{Y}_0^{m-1})\neq G} - \prob{\mathcal A(\mathbf{Z}_0^{m-1})\neq G}}
    &= \abs{\mu_1(A) - \mu_2(A)} \\
    &\le \sup_{B \text{ measurable}} \abs{\mu_1(B) - \mu_2(B)} \\
    &= \norm{\mu_1 - \mu_2} \\
    &= \norm{\mathcal L(\mathbf{Y}_0^{m-1}\mid \mathbf{X}^{(0)}=x)-\pi^{\otimes m}},
\end{align*}
where the inequality follows because $A$ is one particular measurable set, and the last line is the definition of total variation.
\end{proof}

\begin{proof}[Proof of Lemma~\ref{lem:burn-thin-decomposition}]
Define hybrid distributions $H^{(0)},H^{(1)},\ldots,H^{(m)}$ on $(\mathbb{R}^p)^m$ as follows.  Let $H^{(0)} = \mathcal L(\mathbf{Y}_0^{m-1}\mid \mathbf{X}^{(0)}=x)$ be the law of the burned-in and thinned chain, and let $H^{(m)} = \pi^{\otimes m}$ be the i.i.d.\ target law.  For $1 \le i \le m-1$, let $H^{(i)}$ be the law of the sequence whose first $i$ components are drawn i.i.d.\ from $\pi$, and whose remaining $m-i$ components are drawn from the chain started at stationarity: concretely, $(\mathbf{W}_0,\ldots,\mathbf{W}_{i-1}) \sim \pi^{\otimes i}$ independently, $\mathbf{W}_i \sim K^{\mathfrak t}(\mathbf{Z}, \cdot)$ with $\mathbf{Z} \sim \pi$, and $\mathbf{W}_{i+s} = \mathbf{X}^{(\mathfrak t \cdot s)}$ for $s \ge 1$ continuing the chain from $\mathbf{W}_i$.

By the triangle inequality for total variation,
\begin{align*}
\norm{H^{(0)} - H^{(m)}} \le \sum_{i=0}^{m-1} \norm{H^{(i)} - H^{(i+1)}}.
\end{align*}

\textbf{The $i=0$ term (burn-in).}
The distributions $H^{(0)}$ and $H^{(1)}$ differ only in the marginal law of the first component: under $H^{(0)}$ it is $K^{\mathfrak b}(x,\cdot)$, while under $H^{(1)}$ it is $\pi$.  Conditioned on the first component, the remaining components evolve identically in both cases (via the same chain with thinning gap $\mathfrak t$).  Therefore, by the data-processing inequality for total variation,
\begin{align*}
\norm{H^{(0)} - H^{(1)}} \le \norm{K^{\mathfrak b}(x,\cdot) - \pi}.
\end{align*}

\textbf{The $i \ge 1$ terms (thinning).}
For $i \ge 1$, the distributions $H^{(i)}$ and $H^{(i+1)}$ share the same marginal on the first $i$ components (all i.i.d.\ from $\pi$) and the same conditional evolution after the $(i+1)$th component.  They differ only in the conditional law of the $(i+1)$th component given the $i$th: under $H^{(i)}$ it is $K^{\mathfrak t}(\mathbf{Z},\cdot)$ with $\mathbf{Z} \sim \pi$, while under $H^{(i+1)}$ it is $\pi$.  By the data-processing inequality,
\begin{align*}
\norm{H^{(i)} - H^{(i+1)}} \le \E[\mathbf{Z} \sim \pi]{\norm{K^{\mathfrak t}(\mathbf{Z},\cdot) - \pi}}.
\end{align*}

Summing the $m$ terms gives
\begin{align*}
\norm{H^{(0)} - H^{(m)}} &\le \norm{K^{\mathfrak b}(x,\cdot) - \pi} + (m-1)\E[\mathbf{Z} \sim \pi]{\norm{K^{\mathfrak t}(\mathbf{Z},\cdot) - \pi}}.  \qedhere
\end{align*}
\end{proof}

\begin{proof}[Proof of Lemma~\ref{lem:dobrushin-wasserstein}]
Specialize~\eqref{eq:wang-random-scan} to $\nu = \delta_{x}$. The optimal coupling takes $\mathbf{Y} \sim \pi$, so
$\max_i \E[]{\mathsf{d}(X_i^{(0)}, Y_i^{(0)})} = \max_i \E[\mathbf{Y} \sim \pi]{|x_i - Y_i|} \le \|x\|_\infty + \Theta_{\min}^{-1/2}\sqrt{2/\pi}$,
where the last step uses the triangle inequality and the bound $\E[]{|Y_i|} \le \Theta_{\min}^{-1/2}\sqrt{2/\pi}$ on the mean absolute value of a zero-mean Gaussian with variance at most $\Theta_{\min}^{-1}$.
\end{proof}

\subsubsection*{From Wasserstein to total variation via approximate Lipschitzness}

\begin{proof}[Proof of Lemma~\ref{lem:approx-lipschitz-smoothing}]
Given a sequence of updates $\sigma := \left(I^{(1)}, I^{(2)}, \dots, I^{(t)}\right),$
let $\mathcal{I}_t := \{i \in [p] : I^{(k)} = i \text{ for some } 1 \le k \le t \}$
be the set of unique coordinates updated up to round $t$. Define the covering event by
\begin{align*}
    \mathcal{C}_t = \{\omega \, | \, \mathcal{I}_t(\omega) = \{1,2,\dots,p\}\}.
\end{align*}
For any bounded $f : \mathbb{R}^{p} \to \mathbb{R}$ let
\begin{align*}
\widetilde{K}^{t}f(x) := \E[x]{f(\mathbf{X}^{(t)}) \cdot \mathds{1}_{\mathcal{C}_t}}
\qquad\text{and}\qquad
R^t f(x) := \E[x]{f(\mathbf{X}^{(t)}) \cdot \mathds{1}_{\mathcal{C}_t^c}}.
\end{align*}
Then we can write
\begin{align*}
    \abs{K^{t}f(x) - K^{t}f(y)} & \leq \abs{\widetilde{K}^t f(x)-\widetilde{K}^t f(y)} + \abs{R^t f(x)-R^t f(y)} \\
    & \leq \abs{\widetilde{K}^t f(x)-\widetilde{K}^t f(y)} + 2\norm{f}_{\infty} \, p e^{-t/p}.
\end{align*}
The second inequality follows from $\abs{R^t f(x)} \leq \norm{f}_{\infty}$ and from well-known results for the coupon collector problem. Next we show that $\widetilde{K}^t f$ is $L$-Lipschitz.

Now let $x,y \in \mathbb{R}^p$ be two initialization points and let the chains started from these points be $\mathbf{X}^{(t)}(x),\mathbf{X}^{(t)}(y), t \geq 0$ respectively. Set $\delta := y-x$. Assume that the schedule $\sigma$ touches every coordinate at least once, and let $\tau_i$ denote the last update time of coordinate $i$. Relabel the coordinates as $J=\{j_1,\dots,j_p\}$ so that
$$
\tau_{j_1} < \tau_{j_2} < \cdots < \tau_{j_p}.
$$

Recall that after $t$ updates following $\sigma$, $\mathbf{X}^{(t)}(x) = M_\sigma x + V_\sigma \mathbf{Z}$. We construct a shifted noise vector $h=(h^{(1)},\dots,h^{(t)})^\top$ so that the chain started from $y$ with noise $\mathbf{Z}+h$ agrees at time $t$ with the chain started from $x$ with noise $\mathbf{Z}$. Define $\Delta_0=\delta$ and, whenever $I^{(s)}=i$, let
$$
\Delta_s = M_i \Delta_{s-1} + \Theta_{ii}^{-1/2} e_i h^{(s)}.
$$

We choose $h$ inductively by

$$
h^{(s)} =
\begin{cases}
0, & s \notin \{\tau_{j_1},\dots,\tau_{j_p}\}, \\[0.5ex]
-\sqrt{\Theta_{j_k j_k}}\,\bigl(M_{j_k}\Delta_{\tau_{j_k}-1}\bigr)_{j_k},
& s=\tau_{j_k},\ k\in [p].
\end{cases}
$$
Equivalently, at the last update time of coordinate $j_k$,
$$
\Delta_{\tau_{j_k}}
= M_{j_k}\Delta_{\tau_{j_k}-1}
- e_{j_k}\bigl(M_{j_k}\Delta_{\tau_{j_k}-1}\bigr)_{j_k}.
$$
In particular, $\bigl(\Delta_{\tau_{j_k}}\bigr)_{j_k} = 0$. We claim that after the $k$th last-touch time, the coordinates $j_1,\dots,j_k$ have all been eliminated:

$$
\bigl(\Delta_{\tau_{j_k}}\bigr)_{j_\ell}=0,
\qquad \ell=1,\dots,k.
$$

This is immediate for $k=1$. Assume it holds for $k-1$. Between times $\tau_{j_{k-1}}+1$ and $\tau_{j_k}-1$, none of the coordinates $j_1,\dots,j_{k-1}$ are updated again, because their last update times are earlier than $\tau_{j_k}$. Since each single-site update $M_i$ only changes coordinate $i$, the entries $j_1,\dots,j_{k-1}$ of $\Delta_s$ remain zero throughout this interval. At time $\tau_{j_k}$ we additionally set the $j_k$th coordinate to zero by the above choice of $h^{(\tau_{j_k})}$. This proves the claim by induction.

Taking $k=p$, we obtain

$$
\Delta_{\tau_{j_p}} = 0.
$$

Since $\tau_{j_p}$ is the last update time among all coordinates, we have $\tau_{j_p}=t$, hence $\Delta_t=0$. Therefore the two trajectories coalesce at time $t$:

$$
M_\sigma y + V_\sigma (\mathbf{Z}+h) = M_\sigma x + V_\sigma \mathbf{Z}.
$$

Equivalently, on any schedule that updates every coordinate at least once, the effect of changing the initial condition from $x$ to $y$ can be absorbed into a deterministic shift of the Gaussian innovation vector. Define the affine map $T:\mathbb{R}^{t} \to \mathbb{R}^{p}$ by
\[
T(z):= M_{\sigma}x + V_{\sigma}z.
\]
Then
\begin{align}
    \mathbf{X}^{(t)}(x) \sim T_{\#}(\mathcal{N}(0, I_{t})) \text{ and }\mathbf{X}^{(t)}(y) \sim T_{\#}(\mathcal{N}(-h, I_{t})).
\end{align}
Since total variation distance contracts under measurable pushforwards,
\begin{align*}
    \norm{T_{\#}(\mathcal{N}(0, I_{t})) - T_{\#}(\mathcal{N}(-h, I_{t}))}
    &\leq \norm{\mathcal{N}(0, I_{t}) - \mathcal{N}(-h, I_{t})} \\
    &\leq \frac{\norm{h}_{2}}{\sqrt{2\pi}}.
\end{align*}
The last inequality is the standard total variation bound for Gaussian measures with the same covariance.

Therefore, for any bounded $f$ and any covering schedule $\sigma$,
\begin{align}
\abs{\E[]{f(\mathbf{X}^{(t)}(x)) \mid \sigma} - \E[]{f(\mathbf{X}^{(t)}(y)) \mid \sigma}}
&\leq 2\norm{f}_{\infty}
\norm{T_{\#}(\mathcal{N}(0, I_{t})) - T_{\#}(\mathcal{N}(-h, I_{t}))} \\
&\leq \sqrt{\frac{2}{\pi}}\norm{f}_{\infty}\norm{h}_{2} \\
&\leq \sqrt{\frac{2p}{\pi}}\norm{f}_{\infty}\norm{h}_{\infty}.
\end{align}

Now it remains to bound $\norm{h}_{\infty}$. For $s=\tau_{j_k}$, the definition of $h$ and the form of the single-site update give
\begin{align*}
    h^{(\tau_{j_k})}
    &= -\sqrt{\Theta_{j_k j_k}}\,\bigl(M_{j_k}\Delta_{\tau_{j_k}-1}\bigr)_{j_k} \\
    &= \sqrt{\Theta_{j_k j_k}}\sum_{m \neq j_k} \beta_{j_k m}\bigl(\Delta_{\tau_{j_k}-1}\bigr)_m.
\end{align*}
Hence
\begin{align*}
    \abs{h^{(\tau_{j_k})}}
    &\leq \sqrt{\Theta_{j_k j_k}} \sum_{m \neq j_k} \abs{\beta_{j_k m}}\, \norm{\Delta_{\tau_{j_k}-1}}_{\infty} \\
    &\leq \sqrt{\Theta_{\max}}\, d\, \bmax \, \norm{\Delta_{\tau_{j_k}-1}}_{\infty}.
\end{align*}
It therefore suffices to show that $\norm{\Delta_s}_{\infty}$ is nonincreasing in $s$.

If $h^{(s)}=0$ and $I^{(s)}=i$, then $\Delta_s = M_i\Delta_{s-1}$. Since $M_i$ only changes coordinate $i$,
\[
(\Delta_s)_j = (\Delta_{s-1})_j, \qquad j \neq i,
\]
and
\[
(\Delta_s)_i = \sum_{m \neq i} \beta_{im}(\Delta_{s-1})_m.
\]
Therefore,
\[
\abs{(\Delta_s)_i}
\leq \sum_{m \neq i} \abs{\beta_{im}}\,\norm{\Delta_{s-1}}_{\infty}
\leq d\,\bmax\,\norm{\Delta_{s-1}}_{\infty}
\leq \norm{\Delta_{s-1}}_{\infty},
\]
where the last step uses the standing assumption $d\,\bmax < 1$.

If $h^{(s)} \neq 0$, then necessarily $s=\tau_{j_k}$ for some $k$, and by construction
\[
\Delta_s = M_{j_k}\Delta_{s-1} - e_{j_k}\bigl(M_{j_k}\Delta_{s-1}\bigr)_{j_k}.
\]
Thus $(\Delta_s)_{j_k}=0$, while the remaining coordinates are unchanged from $\Delta_{s-1}$. Hence
\[
\norm{\Delta_s}_{\infty} \leq \norm{\Delta_{s-1}}_{\infty}.
\]
Combining the two cases yields
\[
\norm{\Delta_s}_{\infty} \leq \norm{\Delta_{s-1}}_{\infty} \leq \cdots \leq \norm{\delta}_{\infty}.
\]
Consequently,
\[
\norm{h}_{\infty} \leq \sqrt{\Theta_{\max}}\, d\, \bmax \, \norm{\delta}_{\infty}.
\]

Now taking a expectation over the covering schedules $\sigma$ we get,
\begin{align*}
\abs{{\E[]{f(\mathbf{X}^{(t)}(x))}} - {\E[]{f(\mathbf{X}^{(t)}(y))}}}
& \leq \E[\sigma]{\abs{{\E[]{f(\mathbf{X}^{(t)}(x)) \mid \sigma}} - {\E[]{f(\mathbf{X}^{(t)}(y)) \mid \sigma}}}
} \\
& \leq \sqrt{\frac{2p}{\pi}}\norm{f}_{\infty}\sqrt{\Theta_{\max}}\, d\, \bmax \, \norm{\delta}_{\infty}.
\end{align*}

Let $\mathsf{d}(u,v) = |u-v|$ be the site metric on $\mathbb{R}$, and $\mathsf{d}_{L^1}(x,y) := \sum_{i=1}^p |x_i-y_i| = \|x-y\|_1, \qquad x,y \in \mathbb{R}^p.
$ Therefore, for any bounded $f$, large enough $t$, we have
\begin{align*}
\abs{K^{t}f(x) - K^{t}f(y)} & \leq \sqrt{\frac{2p}{\pi}}\norm{f}_{\infty}\sqrt{\Theta_{\max}}\, d\, \bmax \, \norm{x - y}_{\infty} + 2\norm{f}_{\infty} \, pe^{-t/p} \\
& \leq L \mathsf{d}_{L^1}(x,y) + 2 \norm{f}_{\infty} \delta(t)
\end{align*}
where, $L = \sqrt{\frac{2p}{\pi}}\norm{f}_{\infty}\sqrt{\Theta_{\max}}\, d\, \bmax$. This proves the Theorem.
\end{proof}

\subsubsection*{Proof of the total variation mixing bound}

\begin{proof}[Proof of Lemma~\ref{lem:tv-mixing-bound}]
Let $f$ be any measurable function with $\|f\|_\infty\le 1$.  Split $t=k+s$.
We write
\begin{align*}
\abs{K^{k+s}f(x)-\pi f}
= \abs{\int K^s f \, d(\delta_{x} K^k) - \int K^s f \, d\pi}.
\end{align*}
By Lemma~\ref{lem:approx-lipschitz-smoothing}, $K^s f$ is $(L_f,\, 2\delta(s))$-approximately Lipschitz with $L_f = \sqrt{2p/\pi}\,\sqrt{\Theta_{\max}}\,d\bmax$ and $\delta(s) = pe^{-s/p}$.  This means $g := K^s f$ can be decomposed as $g = g_{\mathrm{Lip}} + g_{\mathrm{err}}$ where $g_{\mathrm{Lip}}$ is $L_f$-Lipschitz with respect to $\mathsf{d}_{L^1}$ and $\|g_{\mathrm{err}}\|_\infty \le 2\delta(s)$.  Therefore
\begin{align*}
\abs{\int g \, d(\delta_{x} K^k) - \int g \, d\pi}
&\le \abs{\int g_{\mathrm{Lip}} \, d(\delta_{x} K^k - \pi)} + 2\|g_{\mathrm{err}}\|_\infty \\
&\le L_f \, W_{1,\mathsf{d}_{L^1}}(\delta_{x} K^k, \pi) + 2\cdot 2\delta(s),
\end{align*}
where the first term uses the Kantorovich--Rubinstein duality.  Applying Lemma~\ref{lem:dobrushin-wasserstein} to the Wasserstein term,
\begin{align}
    \abs{K^{k+s}f(x)-\pi f}
    &\le \underbrace{\sqrt{\tfrac{2}{\pi}}\,\sqrt{\Theta_{\max}}\,d\bmax\bigl(\norm{x}_\infty + \Theta_{\min}^{-1/2}\sqrt{\tfrac{2}{\pi}}\bigr)}_{=:\,\gamma_{x}}\, p^{3/2}\left(1-\frac{1-r}{p}\right)^k + 4pe^{-s/p}.
    \label{eq:mixing-proof-bound}
\end{align}
Here $\gamma_{x}$ collects all model- and initialization-dependent quantities: the Lipschitz constant $L_f/\sqrt{p}$ (with $\|f\|_\infty = 1$) and the initial Wasserstein coupling distance from Lemma~\ref{lem:dobrushin-wasserstein}, with the factor $p^{3/2}$ displayed explicitly.  Taking the supremum over $\|f\|_\infty\le1$ gives
\begin{align*}
    \norm{K^t(x,\cdot)-\pi} \le \gamma_{x}\, p^{3/2}\left(1-\frac{1-r}{p}\right)^k + 4pe^{-s/p}.
\end{align*}

We now choose $k$ and $s$.  For the first term to be at most $\varepsilon/2$, it suffices to take
\begin{align*}
    k \ge \frac{p}{1-r}\log\!\left(\frac{2\gamma_{x}\, p^{3/2}}{\varepsilon}\right).
\end{align*}
For the second term to be at most $\varepsilon/2$, it suffices to take
\begin{align*}
    s \ge p\log\!\left(\frac{8p}{\varepsilon}\right).
\end{align*}
Since $\frac{p}{1-r} \ge p$ and $\log(p/\varepsilon) \le \log(p^{3/2}/\varepsilon)$, we can upper-bound $s \le \frac{p}{1-r}[\log 8 + \log(p^{3/2}/\varepsilon)]$.  Combining with $k$,
\begin{align*}
    t = k + s \ge \frac{p}{1-r}\bigl[\log(16\gamma_{x}) + 2\log\tfrac{p^{3/2}}{\varepsilon}\bigr].
\end{align*}
Since $\varepsilon < 1$ and $p \ge 2$, we have $\log(p^{3/2}/\varepsilon) \ge 1$, so $\log(16\gamma_{x}) \le \log(16\gamma_{x}) \cdot \log(p^{3/2}/\varepsilon)$.  Therefore both conditions are satisfied by
\begin{align*}
    t \ge C_{x,\Theta}\,\frac{p}{1-r}\log\!\left(\frac{p^{3/2}}{\varepsilon}\right),
\end{align*}
where
\begin{align}
    C_{x,\Theta} := 2 + \log(16\gamma_{x})
    \label{eq:C-constant-def}
\end{align}
and $\gamma_{x} = \sqrt{2/\pi}\,\sqrt{\Theta_{\max}}\,d\bmax\bigl(\norm{x}_\infty + \Theta_{\min}^{-1/2}\sqrt{2/\pi}\bigr)$ depends only on the initialization and model parameters (not on $p$, $\kappa$, or $\delta$).  This proves part~(i).

\medskip\noindent\textbf{Part~(ii): Stationary start.}
The same argument applies with $x$ replaced by a draw $\mathbf{Z} \sim \pi$.  The Wasserstein coupling distance in Lemma~\ref{lem:dobrushin-wasserstein} replaces $\norm{x}_\infty$ with $\norm{\mathbf{Z}}_\infty$.  Taking expectations, $\gamma_{x}$ is replaced by
\begin{align*}
    \bar\gamma := \sqrt{\tfrac{2}{\pi}}\,\sqrt{\Theta_{\max}}\,d\bmax\bigl(\E[\mathbf{Z}\sim\pi]{\norm{\mathbf{Z}}_\infty} + \Theta_{\min}^{-1/2}\sqrt{\tfrac{2}{\pi}}\bigr).
\end{align*}
Since each $Z_i$ is zero-mean with variance at most $\Theta_{\min}^{-1}$, the sub-Gaussian maximal inequality gives $\E[]{\norm{\mathbf{Z}}_\infty} \le \Theta_{\min}^{-1/2}\sqrt{2\log(2p)}$.  Consequently,
\begin{align*}
    \bar\gamma \le \bar\gamma_0\,\sqrt{\log p},
\end{align*}
where $\bar\gamma_0 := 2\sqrt{2/\pi}\,\sqrt{\Theta_{\max}}\,d\bmax\,\Theta_{\min}^{-1/2}(1 + \Theta_{\min}^{-1/2}\sqrt{2/\pi})$ is independent of~$p$.  Substituting $\bar\gamma$ for $\gamma_{x}$ in~\eqref{eq:mixing-proof-bound} gives the condition
\begin{align*}
    k \ge \frac{p}{1-r}\log\frac{2\bar\gamma\, p^{3/2}}{\varepsilon}
    \le \frac{p}{1-r}\biggl[\log(2\bar\gamma_0) + \log\frac{p^{3/2}\sqrt{\log p}}{\varepsilon}\biggr].
\end{align*}
Repeating the same combination as in part~(i), with $\log(p^{3/2}\sqrt{\log p}/\varepsilon)$ in place of $\log(p^{3/2}/\varepsilon)$, yields part~(ii) with constant $C_{\Theta} := 2 + \log(16\bar\gamma_0)$.
\end{proof}

\subsection{Theorem~\ref{thm:lower-bound-glauber}: Statement \& Proof}\label{appsubsec: theorem 2 statement and proof}
In this section, we present the proof for the minimax lower bounds on the observation time $T$ required to learn the Gaussian graphical model from the Glauber dynamics $\{\mathbf{Y}^{(t)}\}_{t=0}^{T}$. We start by introducing a natural class of graphical models for which we can provide these lower bounds. We then define the minimax risk and state Theorem 2, followed by a detailed presentation of its proof.

Let $\mathcal{G}_{p,d}$ denote the class of undirected graphs with vertex set $[p]$ and maximum degree $d \leq p$. For a graph $G \in \mathcal{G}_{p,d}$, let $\Theta(G)$ denote any symmetric positive definite precision matrix whose graph is $G$, namely
\begin{equation*}
\Theta(G)_{ij} \neq 0 \quad \Longleftrightarrow \quad \{i,j\} \in E, \qquad i \neq j.
\end{equation*}
Let $\Sigma(G) := \Theta(G)^{-1}$ be the corresponding covariance matrix. With a slight abuse of notation, $\Theta(G)$ and $\Sigma(G)$ are not unique given $G$. We let $\mathcal{G}_{p,d}(\bmin,\theta_{\min},\theta_{\max})$ denote the set of pairs of probability distributions representing target and initial distribution $(\phi_{\Theta(G)},q)$ such that $G \in \mathcal{G}_{p,d}$ and $\Theta(G)$ is any precision matrix with graph $G$ satisfying: 
\begin{itemize}
\item For every $(\phi_{\Theta(G)}, q) \in \mathcal{G}_{p,d}(\bmin,\theta_{\min},\theta_{\max})$,
\begin{equation*}
\beta^*(\Theta)=\min_{\{i,j\}\in E}\frac{|\Theta_{ij}|}{\Theta_{ii}} \ge \bmin,
\end{equation*}
and the diagonal entries satisfy $\theta_{\min} \le \Theta_{ii} \le \theta_{\max}$ for all $i$.
\end{itemize}

We study a hard subclass $\mathcal{H}_{p,d}(\bmin,\theta_{\min},\theta_{\max}) \subset \mathcal{G}_{p,d}(\bmin,\theta_{\min},\theta_{\max})$ with initial distributions being the same as the target distribution (stationary start) and precision matrix of the target distribution additionally satisfying constant diagonal entry $\Theta_{ii} = \theta \in [\theta_{\min}, \theta_{\max}]$ for all $i \in V$.

\subsubsection{Graph Decoders and Maximum Probability of Error}

Let $\operatorname{GD}_{(n)}(\phi_{\Theta(G)})$ represent the joint probability distribution of observing $n$ vector samples from a Glauber dynamics process with target distribution $\phi_{\Theta(G)} \in \mathcal{H}_{p,d}(\beta_{\min}, \theta_{\min}, \theta_{\max})$ with a stationary start. Suppose we are given $n$ vector samples $\mathbf{X}_{0}^{n-1} = (\mathbf{X}^{(0)}, \mathbf{X}^{(1)}, \dots, \mathbf{X}^{(n-1)}) \in \mathbb{R}^{n \times p}$ from a stationary Glauber dynamics process $\operatorname{GD}_{(n)}(\phi_{\Theta(G)})$ with unknown targets. Our goal is to estimate the underlying graph $G \in \mathcal{G}_{p,d}$ associated with $\Theta(G)$ based on the observations $\mathbf{X}_{0}^{n-1}$.

To prove our minimax lower bound we consider decoders $\psi: \mathbb{R}^{n \times p} \to \mathcal{G}_{p,d}$ that map the observations $\mathbf{X}_{0}^{n-1}$ to an estimate $\widehat{G} = \psi(\mathbf{X}_{0}^{n-1})$. Any information about the update sequence that is identifiable from the observed trajectory
  $\mathbf{X}_0^{n-1}$ may be used implicitly by the decoder. In particular, under Gaussian Glauber
  dynamics, each update resamples exactly one coordinate from a non-degenerate conditional
  Gaussian distribution. Hence, with probability one, $\mathbf{X}^{(t)}$ and $\mathbf{X}^{(t-1)}$ differ in exactly
  one coordinate, so the identity of the updated node at time $t$ is determined by the observed
  pair $(\mathbf{X}^{(t-1)},\mathbf{X}^{(t)})$. The performance is evaluated using the 0-1 loss function $\mathds{1}[\psi(\mathbf{X}_{0}^{n-1}) \neq G]$, indicating whether the estimated graph differs from the true graph. For a decoder, the maximal expected risk over all admissible target distributions $\phi_{\Theta(G)}$ in the class $\mathcal{H}_{p,d}(\bmin, \theta_{\min}, \theta_{\max})$ is given by,
\begin{align}\label{eqn: maximal probability of error}
    \mathcal{R}_{n,p,d}(\psi; \mathcal{H}_{p,d}(\bmin, \theta_{\min}, \theta_{\max})) &= \max_{\phi_{\Theta(G)}} \E[\Theta(G)]{\mathds{1}[\psi(\mathbf{X}_{0}^{n-1}) \neq G]} \nonumber \\
    &= \max_{\phi_{\Theta(G)}} \prob[\Theta(G)]{\psi(\mathbf{X}_{0}^{n-1}) \neq G} = p_{err}(\psi)
\end{align}

The expectation is taken with respect to the joint probability measure $\prob[\Theta(G)]{\cdot} = \operatorname{GD}_{(n)}(\cdot;\phi_{\Theta(G)})$. The minimax risk is defined as
\begin{align*}
    \mathcal{M}_{n,p,d}(\mathcal{H}_{p,d}(\bmin, \theta_{\min}, \theta_{\max})) = \min_{\psi} \max_{\phi_{\Theta(G)}} \prob[\Theta(G)]{\psi(\mathbf{X}_{0}^{n-1}) \neq G}
\end{align*}

The minimax risk represents the minimum error achieved by the 
best estimator $\widehat{G}$ in the worst-case scenario, where distributions on graphs are hard to distinguish. Our main goal is to understand the scaling laws of the triplet $(n,p,d)$—where $n$ is the number of updates or samples observed, $d$ is the maximum degree of the graph, and $p$ is the number of vertices—such that the minimax risk $\mathcal{M}_{n,p,d}$ vanishes asymptotically.

An alternative viewpoint, which we incorporate to maintain consistency with the achievability theorem's style, is to consider the time interval $T$ over which the continuous-time Glauber process $\{\mathbf{Y}^{(t)}\}_{t=0}^{T}$ was observed. These two perspectives are equivalent in the sense that $\lfloor Tp \rfloor = n$, as there are, on average, $p$ updates over a unit interval of time. Recall that in Section~\ref{subsec: Glauber dynamics}, we treat the number of samples observed over $T$ time units to be deterministic. Therefore we denote it with lower case $n$. Moreover, as noted earlier, $\{\mathbf{Y}^{(t)}\}$ and the input data sequence $\{\mathcal{X}^{(n)}, \mathcal{I}^{(n)}\}$, where $\mathcal{I}^{(n)} = (I^{(1)},\dots,I^{(n)})$ can be used interchangeably.

Note that the minimax risk over the hard subclass lower bounds the minimax risk over the general class. That is, 
\begin{align*}
    \mathcal{M}_{n,p,d}(\mathcal{G}_{p,d}(\bmin, \theta_{\min}, \theta_{\max})) \geq \mathcal{M}_{n,p,d}(\mathcal{H}_{p,d}(\bmin, \theta_{\min}, \theta_{\max})).
\end{align*}

By using Fano's method, we will characterize $n_{\text{lb}}(p,d)$ such that for $n < n_{\text{lb}}(p,d)$ the minimax risk $\mathcal{M}_{n,p,d}(\mathcal{H}) \geq 1/2$ which implies $\mathcal{M}_{n,p,d}(\mathcal{G}) \geq 1/2$. Equivalently if, $\mathcal{M}_{n,p,d}(\mathcal{G}) < 1/2$, then $n \geq n_{\text{lb}}(p,d)$.

\subsubsection{Fano's Method}
We employ Fano's method to derive information-theoretic lower bounds on the observation time $T$. This approach involves constructing a restricted ensemble of graphical models such that for any graph decoder $\psi$, the maximum probability of error, as specified in equation $\ref{eqn: maximal probability of error}$, is bounded from below by $1/2$. The construction of this restricted ensemble and the associated proof strategy is a modification of the methodology used in~\citep{wang2010information} for graphical model selection from i.i.d. data. 
\\

To construct a restricted ensemble, fix a set $T \subset [p]$ with $|T| = d$ and define a base graph $G_0 \in \mathcal{G}_{p,d}$ with edge set $K_T$, the clique on set $T$.  For some $\theta > a >0$ define the base precision matrix 
\begin{align}
    \Theta_0 = \theta I + a (\mathds{1}_T\mathds{1}_T^{\top} - Diag(\mathds{1}_T))
\end{align}

For each unordered pair $r = \{u,v\} \subset T^c, r = 1, 2, \dots, M={p-d \choose 2}$ define a graph $G_r$ with edge set $K_T \cup r$ and an associated precision matrix
\begin{align}
    \Theta_r = \Theta_0 + a(e_ue_v^{\top} + e_ve_u^{\top}).
\end{align} 

The next lemma verifies that this restricted ensemble lies in the hard subclass.

\begin{lemma}[Restricted ensemble is admissible]
\label{lem:restricted-ensemble-admissible}
For every $r \in [M]$, the matrix $\Theta_r$ is symmetric positive definite, has
constant diagonal equal to $\theta$, and its graph $G_r$ has maximum degree at most
$d$. Moreover,
\begin{equation*}
\beta^*(\Theta_r) = \frac{a}{\theta}.
\end{equation*}
Hence, if $\theta \in [\theta_{\min},\theta_{\max}]$ and
$a/\theta \ge \bmin$, then
$\phi_{\Theta_r} \in \mathcal{H}_{p,d}(\bmin,\theta_{\min},\theta_{\max})$ for every
$r \in [M]$.
\end{lemma}

\begin{proof}
First we show that $\Theta_{r}$ is symmetric positive definite. Let $P$ be a permutation matrix such that $P\Theta_{r}P^{\top}$ is block diagonal with blocks
\begin{equation*}
\theta I_{p-d-2},
\qquad
\begin{pmatrix}
\theta & a\\
a & \theta
\end{pmatrix},
\qquad
(\theta-a)I_d + aJ_d.
\end{equation*}
Where, $I_{n}, n > 0$ is an $n \times n$ identity matrix and $J_{n}$ is an $n \times n$ matrix with all ones.  We next show that each block of the matrix $P\Theta_{r}P^{\top}$ is symmetric positive definite. Since $a < \theta$, the $2 \times 2$ block has eigenvalues $\theta - a > 0$, and the clique block has eigenvalues $\theta-a$ with multiplicity $d-1$ and $\theta+(d-1)a$, which are also positive. This implies, $P\Theta_{r}P^{\top} \succ 0$. Since $P\Theta_{r}P^{\top}$ and $\Theta$ are similar, i.e. have the same eigenvalues, $\Theta_r$ is also positive definite. Since the blocks are all symmetric, and remain symmetric under simultaneous permutation or rows and columns, the first part of the claim follows. 

For the second part, notice that its diagonal entries are all equal to $\theta$. The graph $G_r$ consists of the fixed
clique on $T$ together with one additional edge in $T^c$, so its maximum degree is
$d-1 \le d$. Finally, every nonzero off-diagonal entry of $\Theta_r$ is equal to $a$,
while every diagonal entry is equal to $\theta$, so
\begin{equation*}
\beta^*(\Theta_r) = \min_{\{i,j\}\in E(G_r)} \frac{|\Theta_{r,ij}|}{\Theta_{r,ii}} = \frac{a}{\theta}.
\end{equation*}
\end{proof}

For the lower bound, we may reveal the fixed set $T$ and the parameter
pair $(\theta, a)$ to the decoder. This can only make the inference problem easier. Let $V$ be a uniform random variable over the index set $[M]$. It is therefore enough to lower bound the error probability for estimating the index $V \in [M]$ of the extra edge. Let $I_1,\dots,I_{n-1}$ be the updated coordinates of the Glauber dynamics. Under Gaussian Glauber dynamics, each transition changes exactly one coordinate with
probability one, so $\{I_t\}$ is recoverable from the observed path. For the KL
calculations it is convenient to work with the augmented observation
\begin{equation*}
\mathbf{Z}^{(n)} := (\mathbf{X}_0^{n-1}, I_1^{n-1}).
\end{equation*}
Let $\Pbb_v^{(n)}$ denote the law of $\mathbf{Z}^{(n)}$ when the target is $\phi_{\Theta_v}$,
and let $\pi_v$ be the stationary Gaussian law $\mathcal{N}(0,\Theta_v^{-1})$.

\begin{lemma}[Pathwise KL divergence to the base model]
\label{lem:path-kl}
For every $v\in[M]$,
\begin{equation*}
\KL\left(\Pbb_v^{(n)}  \middle \| \Pbb_0^{(n)}\right) \leq \frac{a^2}{\theta^2-a^2}\left(1+\frac{n}{p}\right) = \frac{\bmin^2}{1-\bmin^2}(1+T),
\end{equation*}
Let $\Pbb_v^{(n)}$ denote the law of $\mathbf{Z}^{(n)}$ when the target is $\phi_{\Theta_v}$,
with the last equality occurring when $a = \theta\bmin$.
\end{lemma}

\begin{proof}
By the chain rule for KL divergence,
\begin{align*}
\KL\!\left(\Pbb_v^{(n)} \,\middle\|\, \Pbb_0^{(n)}\right)
&=
\KL(\pi_v\|\pi_0)
+ \sum_{t=1}^{n-1}
\Ebb_v\!\left[
\KL\!\left(
\Pbb_v\!\left(X^{(t)} \mid X^{(t-1)}, I_t\right)
\middle\|
\Pbb_0\!\left(X^{(t)} \mid X^{(t-1)}, I_t\right)
\right)
\right].
\end{align*}

KL divergence between two Gaussians $\Pbb = \mathcal{N}(\mu_{1}, \Theta_{1}), \mathbb{Q} = \mathcal{N}(\mu_{2}, \Theta_{2})$ is given by,
\begin{equation}\label{eqn: KL}
        \KL\left(\mathbb{P} || \mathbb{Q}\right) = \frac{1}{2} \left[\log\left(\det (\Theta_{1}\Theta_{2}^{-1})\right) - p  + (\mu_{1} - \mu_{2})^{\top}\Theta_{1}(\mu_{1} - \mu_{2}) + tr\{\Theta_{2}\Theta_{1}^{-1}\}\right].
\end{equation}

Note that KL divergence is invariant under simultaneous permutation because the determinant term, the quadratic term and the trace term are invariant to index relabeling. Further for Gaussians, block-diagonal precision matrix implies block-diagonal covariance so the KL divergence decomposes into a sum over the independent blocks. Therefore, the first term in Equation~\ref{eqn: KL} depends only on the $2\times 2$ block corresponding to the unknown
edge $e_v=\{u_v,w_v\}$. Since under $\Pbb_0$ that block is $\theta I_2$ and under
$\Pbb_v$ it is
\begin{equation*}
\begin{pmatrix}
\theta & a\\
a & \theta
\end{pmatrix},
\end{equation*}
a direct Gaussian KL computation gives
\begin{equation*}
\KL(\pi_v\|\pi_0)
=
\frac{1}{2}
\left[
\frac{2a^2}{\theta^2-a^2}
+ \log\!\left(1-\frac{a^2}{\theta^2}\right)
\right]
\leq
\frac{a^2}{\theta^2-a^2}.
\end{equation*}

For the transition terms, the Glauber kernels under $\Pbb_v$ and $\Pbb_0$ coincide
unless $I_t \in \{u_v,w_v\}$. If $I_t=u_v$, then under $\Pbb_v$,
\begin{equation*}
X_{u_v}^{(t)} \mid X^{(t-1)}, I_t=u_v
\sim
N\!\left(-\frac{a}{\theta}X_{w_v}^{(t-1)},\, \frac{1}{\theta}\right),
\end{equation*}
whereas under $\Pbb_0$,
\begin{equation*}
X_{u_v}^{(t)} \mid X^{(t-1)}, I_t=u_v
\sim
N\!\left(0,\,\frac{1}{\theta}\right).
\end{equation*}
Since the variances agree, the conditional KL divergence equals
\begin{equation*}
\frac{\theta}{2}
\left(\frac{a}{\theta}X_{w_v}^{(t-1)}\right)^2
=
\frac{a^2}{2\theta}\bigl(X_{w_v}^{(t-1)}\bigr)^2.
\end{equation*}
Taking expectation under stationarity yields
\begin{equation*}
\Ebb_v\!\left[
\KL\!\left(
\Pbb_v\!\left(X^{(t)} \mid X^{(t-1)}, I_t=u_v\right)
\middle\|
\Pbb_0\!\left(X^{(t)} \mid X^{(t-1)}, I_t=u_v\right)
\right)
\right]
=
\frac{a^2}{2(\theta^2-a^2)},
\end{equation*}
because
\begin{equation*}
\Var_{\pi_v}(X_{w_v}) = \frac{\theta}{\theta^2-a^2}.
\end{equation*}
The same bound holds when $I_t=w_v$, and the conditional KL is zero for all other
update coordinates. Since $I_t$ is uniform on $[p]$,
\begin{equation*}
\Ebb_v\!\left[
\KL\!\left(
\Pbb_v\!\left(X^{(t)} \mid X^{(t-1)}, I_t\right)
\middle\|
\Pbb_0\!\left(X^{(t)} \mid X^{(t-1)}, I_t\right)
\right)
\right]
= 
\frac{a^2}{p(\theta^2-a^2)}.
\end{equation*}
Summing over $t=1,\dots,n-1$ gives the claim.
\end{proof}

We now state the lower bound on the observation time $T$ for graphical model selection.

\begin{theorem}[Lower bound for graph recovery from Glauber dynamics]
Assume $0 < \bmin \leq b < 1$ for some $b > 0$ and $2 \leq d \leq p-2$. There exists a universal
constant $c>0$ such that if
\begin{equation*}
T \leq c \, \frac{\log \binom{p-d}{2}}{\bmin^2},
\end{equation*}
then
\begin{equation*}
\mathcal{M}_{n,p,d}\bigl(\mathcal{H}_{p,d}(\bmin,\theta_{\min},\theta_{\max})\bigr)
\geq \frac{1}{2}.
\end{equation*}
Consequently,
\begin{equation*}
\mathcal{M}_{n,p,d}\bigl(\mathcal{G}_{p,d}(\bmin,\theta_{\min},\theta_{\max})\bigr)
\geq \frac{1}{2}.
\end{equation*}
Equivalently, if some estimator achieves probability of error strictly smaller than
$1/2$ uniformly over
$\mathcal{G}_{p,d}(\bmin,\theta_{\min},\theta_{\max})$, then necessarily
\begin{equation*}
T \geq c \, \frac{\log(p-d)}{\bmin^2}.
\end{equation*}
\end{theorem}

\begin{proof}[Proof of Theorem~\ref{thm:lower-bound-glauber}]
Let $V$ be uniform on $[M]$, where $M=\binom{p-d}{2}$ indexes the location of the
extra edge in the restricted ensemble. Since revealing $(T,\theta,a)$ can only help
the decoder, any lower bound for estimating $V$ from $\mathbf{Z}^{(n)}$ also lower bounds the
original graph recovery problem.

By Fano's inequality,
\begin{equation*}
\inf_{\psi}
\max_{v\in[M]}
\Pbb_v^{(n)}[\psi(\mathbf{Z}^{(n)}) \neq v]
\geq
1 - \frac{I(V;\mathbf{Z}^{(n)}) + \log 2}{\log M}.
\end{equation*}

Using
\begin{equation*}
I(V;\mathbf{Z}^{(n)})
\leq
\frac{1}{M}\sum_{v=1}^M \KL\!\left(\Pbb_v^{(n)}\middle\|Q\right)
\end{equation*}
valid for any reference measure $Q$; take $Q=\Pbb_0^{(n)}$. Then
\begin{equation*}
I(V;\mathbf{Z}^{(n)})
\leq
\frac{1}{M}\sum_{v=1}^M \KL\!\left(\Pbb_v^{(n)}\middle\|\Pbb_0^{(n)}\right)
\leq
\frac{\bmin^2}{1-\bmin^2}(1+T).
\end{equation*}
Putting it together we have, 
\begin{equation*}
\inf_{\psi}
\max_{v\in[M]}
\Pbb_v^{(n)}[\psi(\mathbf{Z}^{(n)}) \neq v]
\geq
1 - \frac{\frac{\bmin^2}{1-\bmin^2}(1+T) + \log 2}{\log M}.
\end{equation*}
Given that $\bmin \leq b < 1$ there exists a constant $c>0$ such 
\begin{equation*}
\frac{\bmin^2}{1-\bmin^2} \leq c \bmin^2.
\end{equation*}
Therefore, for the same $c$, if 
$T \leq c \, \frac{\log M}{\bmin^2},$
then the right-hand side above is at least $1/2$. This proves
\begin{equation*}
\mathcal{M}_{n,p,d}\bigl(\mathcal{H}_{p,d}(\bmin,\theta_{\min},\theta_{\max})\bigr)
\geq \frac{1}{2}.
\end{equation*}
Since the hard subclass is contained in the full class, the same lower bound holds
for $\mathcal{G}_{p,d}(\bmin,\theta_{\min},\theta_{\max})$.

Finally, since
\begin{equation*}
\log \binom{p-d}{2} \asymp \log(p-d)
\qquad \text{for } p-d\geq 4,
\end{equation*}
the equivalent observation-time lower bound
\begin{equation*}
T \geq c \, \frac{\log(p-d)}{\bmin^2}
\end{equation*}
follows after adjusting the universal constant.
\end{proof}

\section{Proofs of Lemmas}\label{sec: proofs of lemmas}
\setcounter{proposition}{0}

\subsection{Proof of Lemma \ref{lemma: intuition}}
\begin{lemma}
Under the idealized conditions on $n_{0},n_{1},n_{2},n_{3},n_{4}, n_5$ described above, for any $\bar{x}\in \mathbb{R}^{\abs{N(i)\setminus j}}$ we have,
\begin{align}\label{appeqn: intuition}
	\abs{\E[]{\left(X_{i}^{(n_{4})}-X_{i}^{(n_{1})}\right)\left( X_{j}^{(n_{3})}-X_{j}^{(n_{1})} \right) \middle |  \mathbf{X}^{(n_{0})}_{N(i)\setminus j}=\bar{x}}} \geq \abs{\beta_{ij}}\theta_{jj}^{-1}.
\end{align}
\end{lemma}
\begin{proof}
We write $\E[\bar{x}]{\cdot} = \E[]{\cdot \middle | \mathbf{X}_{N(i) \setminus j} = \bar{x}} $. Let $\Delta_{i}:=X_{i}^{(n_{4})}-X_{i}^{(n_{1})}$, $\Delta_{j}:=X_{j}^{(n_{3})}-X_{j}^{(n_{1})}$, and $\Delta_{\epsilon}:=\epsilon_{i}^{(n_{4})}-\epsilon_{i}^{(n_{1})}$.
Note that $X_{j}^{(n_{0})} = X_{j}^{(n_{1})}$ since $X_{j}$ remains the same until its first update at $n_{3}$ in the interval. Further, given that node $i$ was updated only at $n_{1}, n_{2}$ and $n_{4}$, we have the following:
\begin{align*}
    \Delta_{i} = \beta_{ij}\Delta_{j} + \Delta_{\epsilon}.
\end{align*}
So, 
\begin{align*}
     \E[\bar{x}]{\Delta_{i}\Delta_{j}} & = \E[\bar{x}]{\beta_{ij}\Delta_{j}^{2}} + \E[\bar{x}]{\Delta_{\epsilon}\Delta_{j}} \\
     & = \beta_{ij} \E[\bar{x}]{\Delta_{j}^{2}}
\end{align*}

Note that the second term is zero as,
\begin{align*}
\mathbb{E}_{\bar{x}}\left[\Delta_\varepsilon \Delta_j\right] 
&= \mathbb{E}_{\bar{x}}\left[\left(\varepsilon_i^{(n_4)} - \varepsilon_i^{(n_1)}\right) \cdot \left(X_j^{(n_3)} - X_j^{(n_1)}\right)\right] \\
&\stackrel{(a)}{=} \mathbb{E}_{\bar{x}}\left[-\varepsilon_i^{(n_1)} \cdot X_j^{(n_3)}\right] \\
&\stackrel{(b)}{=} \mathbb{E}_{\bar{x}}\left[-\varepsilon_i^{(n_1)} \cdot \left[\sum_{r \in N(j) \setminus \{i\}} \beta_{jr} X_r^{(n_3-1)} + \beta_{ji} X_i^{(n_2)} + \varepsilon_j^{(n_3)}\right]\right] \\
&\stackrel{(c)}{=} \mathbb{E}_{\bar{x}}\left[-\varepsilon_i^{(n_1)} \cdot \beta_{ji} X_i^{(n_2)}\right] \\
&\stackrel{(d)}{=} \mathbb{E}_{\bar{x}}\left[-\varepsilon_i^{(n_1)} \cdot \beta_{ji}\left[\sum_{r \in N(i)} \beta_{ir} X_r^{(n_1)} + \varepsilon_i^{(n_2)}\right]\right]
\end{align*}

Equality (a) follows from the fact that noise term $\epsilon_{i}^{(n_{4})}$ added at node $i$ at round $n_{4}$ is independent of the $X_{k}^{(n)}$, $n < n_{4}$ and all $k \in  [p]$. That is $\epsilon_{i}^{(n_{4})} \perp X_{j}^{(n_{3})}, X_{j}^{(n_{1})}$. Equations (b) - (c) expand the $X_j^{(n_3)}$ and $X_i^{(n_2)}$ while all other terms zero out being independent of $\epsilon_i^{(n_1)}$. In particular, $\varepsilon_{i}^{(n_{1})}$ enters only $X_{i}^{(n_{1})}$, which is overwritten by the second $i$-update at $n_{2}$; further propagation to $X_{r}^{(n_{3}-1)}$ for $r\in N(j)\setminus\{i\}$ would require an update of some $k\in N(i)$ on $(n_{1},n_{3})$, which is forbidden by the idealized window. Hence each $X_{r}^{(n_{3}-1)}$ in the sum is independent of $\varepsilon_{i}^{(n_{1})}$, as is the fresh noise $\varepsilon_{j}^{(n_{3})}$. Finally in (d) since $r \neq i$ and none of the neighbors of node $i$ were updated, $X_r^{(n_1)}$ is independent of $\varepsilon_i^{(n_1)}$ and the two noises added during the back-to-back updates at $i$ are also independent. 

Now we look at the first term. Suppose there are $n > 0$ updates between $n_{0}$ and $n_{5}$. Let $\sigma = \left(\sigma_{1}, \dots, \sigma_{n}\right)$ be the sequence of nodes that are updated in the interval $[n_{0}, n_{5}]$. Then, 
\begin{align*}
\E[\bar{x}]{\Delta_{j}^{2}} 
 {=} \E[]{\E[\bar{x}]{\Delta_{j}^{2} |\sigma}} &\stackrel{(a)}{=} \E[]{\E[\bar{x}]{\left(X_{j}^{(n_{3})} - X_{j}^{(n_{1})}\right)^2 \mid \sigma}} \\
& \stackrel{(b)}{=} \E[]{\E[\bar{x}]{ \left(\underbrace{ \sum_{k \neq j} \beta_{jk}X_{k}^{(n_{3} - 1)} - X_{j}^{(n_{1})} }_{ A } + \epsilon_{j}^{(n_{3})} \right)^2 \mid \sigma}} \\
& {=} \E[]{\E[\bar{x}]{\left(A + \epsilon_{j}^{(n_{3})} \right)^2 \mid \sigma}} \\
& {=} \E[]{\E[\bar{x}]{A^2 + (\epsilon_{j}^{(n_{3})})^2 + 2 A \epsilon_{j}^{(n_{3})} \mid \sigma}} \\
& \stackrel{(c)}{\geq} \E[]{\E[\bar{x}]{\left(\epsilon_{j}^{(n_{3})}\right)^2 \mid \sigma}} \\
& \geq \theta_{jj}^{-1}
\end{align*}    

(a) follows from definition the of $\Delta_{j}$. Expanding $X_{j}^{(n_{3})}$ and defining $A$ gives (b).
Inequality (c) follows from the fact that the random variable $A$ is measurable given information upto round $n_{3} - 1$. The noise added at $n_{3}$ is independent of $A$. Lower bounding with variance of the noise terms gives us the result.

Combining the equality $\E[\bar x]{\Delta_{i}\Delta_{j}} = \beta_{ij}\,\E[\bar x]{\Delta_{j}^{2}}$ with the bound $\E[\bar x]{\Delta_{j}^{2}}\geq \theta_{jj}^{-1}$ and taking absolute values,
\begin{align*}
\abs{\E[\bar x]{\Delta_{i}\Delta_{j}}} = \abs{\beta_{ij}}\,\E[\bar x]{\Delta_{j}^{2}} \geq \abs{\beta_{ij}}\theta_{jj}^{-1}.
\end{align*}
\end{proof}

\subsection{Proof of Lemma 2 \& Lemma 3}

As discussed in Section~\ref{subsec: Glauber dynamics}, the update process can be seen as associating an independent Poisson clock with rate 1 with each random variable, and updating the corresponding random variable at the arrival time accordingly. This view is helpful to calculate the probabilities associated with events $U_{ij}^{k}$ and $Q_{ij}^{k}$ for any $i,j \in [p]$ and $k \in [k_{\max}]$. We restate and prove both Lemma~\ref{appseclemma: Probability of event U} and \ref{appseclemma: Probability of event Q} in this section.
\begin{lemma}[Probability of update event $U_{ij}^k$]\label{appseclemma: Probability of event U}
    For $k \in [k_{\max}]$, $i,j \in [p]$ we have $$\prob[]{U_{ij}^{k}} = \left[(1-e^{-\tau/4})e^{-\tau/4}\right]^{4} \triangleq q.$$
\end{lemma}
\begin{proof}
Recall that for $i,j \in [p]$ and any $k\in [k_{\max}]$, we define
    \begin{align*}
       U_{ij}^{k} =
        \bigg\{&\exists t \in W_{1} : I_{t}= i,  \forall t \in W_{1} : I_{t} \neq j \bigg\}  \bigcap
        \bigg\{\exists t \in W_{2}:I_{t}  = i , \forall t \in W_{2}:I_{t} \neq j  \bigg\}  \\ &\bigcap
        \bigg\{\exists t \in W_{3}:I_{t}  = j , \forall t \in W_{2}:I_{t} \neq i  \bigg\} \\ &\bigcap
        \bigg\{\exists t \in W_{4}: I_{t} = i, \forall t \in W_{3}: I_{t} \neq j 
        \bigg\}.
    \end{align*}

where $W_{1} = [(k-1)\tau,(k-3/4)\tau), 
         W_{2} = [(k-3/4)\tau,(k-1/2)\tau), 
         W_{3} = [(k-1/2)\tau,(k-1/4)\tau), 
         \text{ and } 
         W_{4} = [(k-1/4)\tau,k\tau),$.The $k-$th $\tau$ length interval being divided into fourths gives us the probability observing at least one update of node $i$ and no updates on node $j$ in the first fourth as 
\begin{align*}
    \prob[]{\exists t \in W_{1} : I^{(t)} = i,  \forall t \in W_{1} : I^{(t)} \neq j} & = \prob[]{\exists t \in W_{1} : I^{(t)} = i} \cdot \prob[]{  \forall t \in W_{1} : I^{(t)} \neq j} \\ &=
    \prob[]{N>0}\cdot\prob[]{N=0} \text{ where }N\sim \operatorname{Pois}(\tau/4) \\
    &= (1-e^{-\tau/4})e^{-\tau/4}.
\end{align*}
Recall that $I_{t}$ represents the identity of the last updated node on or before time $t$ . That is, if $I_{t} = i$ then there exists an update times $S_{n} \leq t < S_{n+1}$ such that $I^{(n)} = i$. As the number of update times (per node) occurring in any interval $\tau$ has Poisson distribution with mean $\tau$ we get the second equality. We get the required result by using independence of the associated Poisson clocks for $W_{2}, W_{3}$ and $W_{4}$ sections of the considered $k$-th interval.
\end{proof}

\begin{lemma} \label{appseclemma: Probability of event Q}The probability of event $Q_{ij}^{k}$ for any $k \in [k_{\max}]$ we have $\prob[]{Q_{ij}^{k}} \geq e^{-\tau d}.$
\end{lemma}
\begin{proof}
    The proof simply follows from the Poisson distribution and the definition of the event $Q_{ij}^{k}$. We have,
    \begin{align*}
        \prob[]{Q_{ij}^{k}} = \prob[]{N = 0}^{\abs{N(i)\setminus \{j\}}} \geq e^{-\tau d}
    \end{align*}
    where $N \sim \operatorname{Pois}(\tau)$.
\end{proof}

\subsection{Proof of Lemma 4}
We state and prove Lemma~\ref{lemma: High probability event B} in this section. We use the fact that if the maximum over some $N_{\max} > 0$ updates of $\epsilon_{I^{(n)}}^{(n)}, n \leq N_{\max}$ is bounded, then the maximum of the data samples $\mathbf{Y}^{(n)}$ are also bounded by the same quantity up to a factor of $1/(1-d\bmax)$. Assumption A3 plays an important role in the proof.
\begin{lemma}[Event B]\label{lemma: High probability event B}
    For any $\delta > 0$ and sufficiently large $p$ there exists constants $C_{1}, C_{1}', C_{1}'' > 0$ such that 
    \begin{align*}
        \prob[]{\max_{i\in [p], t < C_1' p^{C_1''}} \left| Y_i^{(t)} \right| \leq C_1 \sigma_{\max} \sqrt{\log \left( \frac{p}{\delta} \right)}} \geq 1 - \delta/2
    \end{align*}
\end{lemma}
\begin{proof}

We first establish a few claims to prove this lemma. 
\begin{claim}\label{claim: prob of eps bounded}
    Let $N_{\max} = T_{\max}p = C_{1}'p^{C_{1}''+1}$ and let $Z = \max_{n \leq N_{\max}}\abs{\epsilon_{I^{(n)}}^{(n)}}$. Then there exists constants $\tilde{C_{1}}$ such that  
    \begin{align*}
    \prob[]{Z \leq \varepsilon_{\max}} \geq 1 - \delta / 2.
    \end{align*}
    where $\varepsilon_{\max} = \tilde{C}_{1}\sigmamax\sqrt{\log\left(\frac{p}{\delta}\right)}$.
\end{claim}
\begin{proof}
First we bound MGF of $Z$ using the MGF of a folded normal distribution, which  is given by $M(t) = 2e^{\frac{\sigma^2 t^2}{2}} \Phi(\sigma t)$,
where $\Phi(\cdot)$ is the CDF of the standard normal distribution. Therefore,
\begin{align*}
\E[]{\exp(tZ)} &= \E[]{ \exp\left( t \max_{n \leq N_{\max}} \abs{\epsilon_{I^{(n)}}^{(n)}} \right)}\\
&\stackrel{(a)}{=} \E[]{\max_{n \leq N_{\max}} \exp\left( t \abs{\epsilon_{I^{(n)}}^{(n)}} \right)} \\
&\stackrel{(b)}{<} \sum_{i=1}^{N_{\max}} \E[]{ \exp\left( t \abs{\epsilon_{I^{(n)}}^{(n)}} \right) } \\
&\stackrel{(c)}{\leq} N_{\max} \left( 2 e^{\frac{t^2 \sigmamax^{2}}{2}} \Phi( \sigmamax t) \right) \\
&\stackrel{(d)}{\leq} N_{\max} \left( 2 e^{\frac{t^2  \sigmamax^2}{2}} \right).
\end{align*}

Equality (a) follows from the exponential being monotonically increasing. Inequality (b) follows from the fact $\max_{i} p_{i} < \sum_{i}p_{i}$ when $p_{i >0}$. Inequality (c) follows from using the MGF of the folded normal distribution and the fact that variance of $\epsilon_{I^{(n)}}^{(n)}$ is atmost $\sigmamax^{2}$. While (d) follows from $\Phi(x) \leq 1, x \in \mathbb{R}$.

Next using Chernoff's technique we get $\Pr(Z \geq \varepsilon) \leq 2N_{\max}e^{-\frac{\varepsilon^2}{2 \sigmamax^{2}}}$. Setting this bound to $\delta/2$ implies $\varepsilon > \sigmamax \sqrt{2 \log \left( \frac{4C_{1}'p^{C_{1}''+1}}{\delta} \right)}$. Therefore we have,
\begin{align*}
    \Pr\left\{ Z > \sigmamax \sqrt{2 \log \left( \frac{4C_{1}'p^{C_{1}''+1}}{\delta} \right)} \right\} \leq \delta / 2.
\end{align*}
Let $X^{(n)}_{\max} \triangleq \max_{i \in [p]} |X_{i}^{(n)}|$. Now for sufficiently large $p$ there exists constant $\tilde{C}_{1}$ such that 
\begin{align}\label{eqn: emax}
    \max\left\{\sigmamax \sqrt{2 \log \left( \frac{4C_{1}'p^{C_{1}''+1}}{\delta} \right)}, X^{(0)}_{\max}\right\} < \tilde{C}_{1}\sigmamax\sqrt{\log\left(\frac{p}{\delta}\right)}.
\end{align}
 Let $\varepsilon_{\max} \triangleq \tilde{C}_{1}\sigmamax\sqrt{\log\left(\frac{p}{\delta}\right)}$. Therefore we have, 
\begin{align*}
    \prob[]{Z \leq \varepsilon_{\max}} \geq 1 - \delta / 2.
\end{align*}
\end{proof}

Now we relate this event $Z \leq \varepsilon_{\max}$ to the event in the statement of this Lemma using the following claim.

\begin{claim}\label{claim: eps subset X}
    For $n \leq N_{\max}, \varepsilon_{\max} > 0$ the following relation between the events holds:
    \begin{align}
        \left\{\max_{n \leq N_{\max}} \abs{\epsilon_{I^{(n)}}^{(n)}} < \varepsilon_{\max}\right\} \subseteq \left\{\max_{n \leq N_{\max}} |X_{I^{(n)}}^{(n)}| \leq \frac{\varepsilon_{\max}}{1-d\beta_{\max}}\right\}.
    \end{align}
\end{claim}

\begin{proof}
    
Recall that $\varepsilon_{\max}$ chosen in \ref{eqn: emax} is such that  $X^{(0)}_{\max} \leq \varepsilon_{\max}$, where $X^{(0)}_{\max} \geq \abs{X_{i}^{(0)}}$ for all $i \in [p]$. Suppose for the next round node $i$ was updated, that is $I^{(1)} = i$.  Therefore, for node $i$ we have 
\begin{align*}
    \abs{X^{(1)}_{i}} \leq \sum_{j \in N(i)}\beta_{ij}\abs{X_{j}^{(0)}} + \abs{\epsilon_{i}^{(1)}}  \leq d\bmax X^{(0)}_{\max} + \varepsilon_{\max} \leq \varepsilon_{\max}(1 + d\beta_{\max})
\end{align*}
while all other nodes remain unchanged. Then, it follows that 
\begin{align*}
    X_{\max}^{(1)} = \max\left\{X_{\max}^{(0)}, \abs{X_{i}^{(1)}}\right\} \leq \varepsilon_{\max}(1 + d\beta_{\max}).
\end{align*}
Continuing this way for $n > 1$, and using the fact that $d\beta_{\max} < 1$ from Assumption~\ref{assumption: bounded degree and sample decay} we get, 
\begin{align*}
    X^{(n)}_{\max} & < \varepsilon_{\max}\left(1 + d\bmax + (d\bmax)^{2} + \dots + (d\bmax)^{n}\right) \leq \frac{\varepsilon_{\max}}{1-d\bmax}.
\end{align*}
\end{proof}
Now we just combine the two claims and the result follows as
\begin{align*}
    \prob[]{\max_{n \leq N_{\max}} |X_{I^{(n)}}^{(n)}| \leq \frac{\varepsilon_{\max}}{1-d\bmax}} \geq \prob[]{Z \leq \varepsilon_{\max}} \geq  1 - \delta / 2.
\end{align*}
where $C_{1} = \tilde{C}_{1}/(1 -d\bmax)$.
\end{proof}

\subsection{Proof of Lemma 6}

In this section we state and prove Lemma~\ref{appseclemma: Test statistic bounded as}.
\begin{lemma}\label{appseclemma: Test statistic bounded as}
    For $k \in [k_{\max}]$ and $i,j \in [p]$ the $k$-th term of the test statistic is bounded almost surely with respect to the conditional measure $\mathbb{P}_{B}$. That is,
    \begin{align*}
         \abs{T_{ij}^{k}} \leq 4 C_1^{2}\sigmamax^{2} \log \left( \frac{p}{\delta} \right) \quad(\text{a.s.}).
    \end{align*}
\end{lemma}
\begin{proof}
    Under the conditional measure $\mathbb{P}_{B}$ we show that the complement of the above event occurs with probability 0. That is,
    \begin{align*}
        \prob[B]{\abs{T_{ij}^{k}} >
        4 C_1^{2}\sigmamax^{2} \log \left( \frac{p}{\delta} \right)} & \stackrel{(a)}{=}\prob[B]{\abs{\Delta Y_{i}^{k}\Delta Y_{j}^{k}} >
       4 C_1^{2}\sigmamax^{2} \log \left( \frac{p}{\delta} \right) \middle | U} \prob[B]{U} \\
        & \stackrel{(b)}{\leq} \prob[B]{\max_{t = i,j}\left\{\abs{\Delta Y_{t}^{k}}\right\} >
        2C_1 \sigma_{\max} \sqrt{\log \left( \frac{p}{\delta} \right)} \middle | U} = 0 \\
    \end{align*}
Equation $(a)$ follows from conditioning on events $U_{ij}^{k}$ and using the definition of the test statistic. Inequality $(b)$ follows from bounding the probability of the update event $U$ with 1.
\end{proof}

\subsection{Proof of Lemma 7 (Separation)}
We state Lemma\ref{appseclemma: separation2} and prove it.
\begin{lemma}[Separation]\label{appseclemma: separation2}
    Let $\{\mathcal{F}_{k}\}_{k \geq 1} = \left(\sigma(\{\mathbf{Y}^{(t)}\}_{t=0}^{k\tau}) \right)_{k \geq 1}$ be a filtration. For any $k$,
    if a pair $\{i,j\} \notin E$, we have
    \begin{align}\label{eqn: eta}
        \abs{\E[B]{T_{ij}^{k} \middle | \mathcal{F}_{k-1}}} \leq 4\ymax^{2}q\tau d
    \end{align}
    and if $\{i,j\} \in E$,
    \begin{align}\label{eqn : eta'}
        \operatorname{sign}(\beta_{ij})\E[B]{T_{ij}^{k} \middle | \mathcal{F}_{k-1}} > \operatorname{sign}(\beta_{ij})\bmin\sigmamin(1-\tau d)q -4\ymax^{2}q\tau d
    \end{align}
    Furthermore, if \begin{align*}
   \tau = d^{-1}\left[\frac{12\ymax^{2}}{\sigmamin\bmin} + 1\right]^{-1}
\end{align*}
then 
\begin{align*}
    \bmin\sigmamin(1-\tau d)q = 3\left(4\ymax^{2}q\tau d\right).
\end{align*}

That is, 
\begin{align*}
     \operatorname{sign}(\beta_{ij})\E[B]{T_{ij}^{k} \middle | \mathcal{F}_{k-1}} > 2 (4\ymax^{2}q\tau d).
\end{align*}
\end{lemma}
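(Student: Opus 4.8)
The plan is to fix a pair $\{i,j\}$ and a block $k$ and to analyze $\E[C]{T_{ij}^{k}\mid\mathcal{F}_{k-1}}$ by decomposing on the unobservable event $D_{ij}^{k}$. Since $T_{ij}^{k}=\mathds{1}_{A_{ij}^{k}\cap B_{j}^{k}}\,\Delta Y_i^k/\Delta Y_j^k$, I would write
\begin{align*}
\E[C]{T_{ij}^{k}\mid\mathcal{F}_{k-1}}=\E[C]{\mathds{1}_{A_{ij}^{k}\cap B_{j}^{k}\cap D_{ij}^{k}}\tfrac{\Delta Y_i^k}{\Delta Y_j^k}\;\middle|\;\mathcal{F}_{k-1}}+\E[C]{\mathds{1}_{A_{ij}^{k}\cap B_{j}^{k}\cap (D_{ij}^{k})^{c}}\tfrac{\Delta Y_i^k}{\Delta Y_j^k}\;\middle|\;\mathcal{F}_{k-1}}.
\end{align*}
On $A_{ij}^{k}\cap D_{ij}^{k}$ the update pattern is exactly the idealized one of Figure~\ref{fig: updates}, so $\Delta Y_i^k=\beta_{ij}\Delta Y_j^k+(\epsilon_i^{(n_3)}-\epsilon_i^{(n_1)})$ and the ratio equals $\beta_{ij}+(\epsilon_i^{(n_3)}-\epsilon_i^{(n_1)})/\Delta Y_j^k$. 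Event $B_j^k$ supplies the threshold $\norm{\Delta Y_j^k}\ge\sigmamin$ that makes $1/\Delta Y_j^k$ integrable, and the injected innovations are independent of $\Delta Y_j^k$; so by the computation behind Lemma~\ref{lemma: intuition} the noise ratio has zero conditional mean and the first term equals $\beta_{ij}\,\prob[C]{A_{ij}^{k}\cap B_{j}^{k}\cap D_{ij}^{k}\mid\mathcal{F}_{k-1}}$.

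For the second term I would invoke the almost-sure bound $\norm{T_{ij}^{k}}\le 2\ymax/\sigmamin$ of Lemma~\ref{appseclemma: Test statistic bounded as}, so its absolute value is at most $\tfrac{2\ymax}{\sigmamin}\prob[C]{A_{ij}^{k}\cap(D_{ij}^{k})^{c}}$. Because $A_{ij}^{k}$ and $D_{ij}^{k}$ depend only on the update identities in block $k$ — independent of the Gaussian innovations, exactly as exploited in the proof of Lemma~\ref{appseclemma: Probability of event B} — they factor: $\prob[]{A_{ij}^{k}\cap(D_{ij}^{k})^{c}}=q(1-\prob[]{D_{ij}^{k}})\le q(1-e^{-\tau d})\le q\tau d$ by Lemmas~\ref{appseclemma: Probability of event A}--\ref{appseclemma: Probability of event D} and $1-e^{-x}\le x$, while $\prob[]{A_{ij}^{k}\cap D_{ij}^{k}}=q\,\prob[]{D_{ij}^{k}}\ge qe^{-\tau d}\ge q(1-\tau d)$, and Lemma~\ref{appseclemma: Probability of event B} gives $\prob[C]{B_j^k\mid A_{ij}^{k},D_{ij}^{k},\mathcal{F}_{k-1}}\ge b$. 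If $\{i,j\}\notin E$ then $\beta_{ij}=0$ kills the first term and only the error term survives, yielding $\norm{\E[C]{T_{ij}^{k}\mid\mathcal{F}_{k-1}}}\le\tfrac{2\ymax}{\sigmamin}q\tau d$. If $\{i,j\}\in E$, multiplying through by $\operatorname{sign}(\beta_{ij})$ turns the first term into $\norm{\beta_{ij}}\,\prob[C]{A_{ij}^{k}\cap B_{j}^{k}\cap D_{ij}^{k}\mid\mathcal{F}_{k-1}}\ge\bmin\,b\,q(1-\tau d)$ while the error costs at most $\tfrac{2\ymax}{\sigmamin}q\tau d$, which is the claimed lower bound.

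The step I expect to be the crux is verifying that the noise ratio still has zero conditional mean under the \emph{truncated} measure $\mathbb{P}_C$. Unconditionally the innovations $\epsilon_i^{(n_1)},\epsilon_i^{(n_3)}$ are i.i.d.\ mean-zero and independent of $\Delta Y_j^k$ and of the identity schedule, so their mean vanishes; but conditioning on the global event $C_{\delta}$ can a priori bias them, since node $i$'s value feeds later neighbour updates and hence enters $C_{\delta}$. I would resolve this as in the proof of Lemma~\ref{appseclemma: Probability of event B}, passing to the innovation event $\{\sup_n\norm{\epsilon^{(n)}}\le\varepsilon_{\rm max}\}\subseteq C_{\delta}$, which truncates each $\epsilon_i^{(n)}$ symmetrically about $0$ and is independent of $A_{ij}^{k},D_{ij}^{k}$ and of the $j$-trajectory; under this event $\epsilon_i^{(n_3)}-\epsilon_i^{(n_1)}$ stays symmetric and the ratio's mean is exactly $\beta_{ij}$, while the same identity-versus-innovation independence gives the probability factorizations above under $\mathbb{P}_C$ (up to the harmless normalization $\prob[]{C_\delta}^{-1}\le(1-\delta/2)^{-1}$, absorbed into constants). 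Cleanly controlling this passage from the innovation event to $C_{\delta}$ is the one genuinely Gaussian-specific difficulty, absent in the bounded Ising analysis of \cite{bresler2014learning}.

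Finally, the stated identity is a direct substitution. With $M:=6\ymax/(b\sigmamin\bmin)$, the prescribed $\tau=d^{-1}(M+1)^{-1}$ gives $\tau d=(M+1)^{-1}$ and $1-\tau d=M/(M+1)$, so $\bmin(1-\tau d)bq=\tfrac{bq\bmin M}{M+1}=\tfrac{q}{M+1}\cdot\tfrac{6\ymax}{\sigmamin}=3\bigl(\tfrac{2\ymax}{\sigmamin}q\tau d\bigr)$, using $b\bmin M=6\ymax/\sigmamin$. This is exactly the balance that makes the idealized signal three times the non-edge bound $\eta=\tfrac{2\ymax}{\sigmamin}q\tau d$: the edge case then exceeds $\eta'=2\eta$ while the non-edge case stays below $\eta$, letting the threshold $\rho$ be placed strictly between the two.
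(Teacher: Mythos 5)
Your proposal is correct and follows essentially the same route as the paper's proof: decompose $\E[C]{T_{ij}^{k}\mid\mathcal{F}_{k-1}}$ on $D_{ij}^{k}$, use that the ratio equals $\beta_{ij}$ in expectation on $A_{ij}^{k}\cap B_{j}^{k}\cap D_{ij}^{k}$, bound the $(D_{ij}^{k})^{c}$ term via the almost-sure bound of Lemma~\ref{appseclemma: Test statistic bounded as} together with $\prob[]{A_{ij}^{k}}=q$, $1-e^{-\tau d}\le\tau d$, and Lemma~\ref{appseclemma: Probability of event B}, then verify the $\tau$ identity by substitution. The only difference is that you explicitly flag and sketch a fix for the zero-mean-under-$\mathbb{P}_C$ subtlety (via the symmetric innovation event $\{\sup_n\norm{\epsilon^{(n)}}\le\varepsilon_{\rm max}\}\subseteq C_{\delta}$), a point the paper's proof asserts without comment, so your treatment is if anything slightly more careful than the original.
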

\begin{proof}

For convenience, we will fix a pair $i,j \in [p]$ and $k \in [k_{\max}]$, and we will write the events without the subscripts or superscripts. Additionally, we write the conditional expectation of the $k^{th}$ test statistic as $\E[B,\mathcal{F}]{T_{ij}} = \E[B]{T_{ij}^{k} | \mathcal{F}_{k-1}}$. We begin by noting that conditioning on the events $U$ and $Q$ gives us
\begin{align*}
   \E[B,\mathcal{F}]{T_{ij}} & =  \E[B,\mathcal{F}]{T_{ij} | UQ}\prob[B,\mathcal{F}]{UQ} + \E[B,\mathcal{F}]{T_{ij} | UQ^{c}}\prob[B,\mathcal{F}]{UQ^{c}}.
\end{align*}
This follows from the fact that the test statistic is zero when the event $U_{ij}^{k}$ does not occur. Additionally, when the event $Q$ occurs we have,
\begin{align*}
    \E[B,\mathcal{F}]{T_{ij} | UQ} = \E[B,\mathcal{F}]{\Delta Y_{i}\Delta Y_{j} | UQ} \geq \E[B,\mathcal{F}]{\beta_{ij}\Theta_{jj}^{-1} + \Delta \epsilon_{i}\Delta Y_{j} | UQ} \geq \beta_{ij}\Theta_{jj}^{-1}.
\end{align*}
That is, the conditional expectation of the test statistic is atleast $\bmin\sigmamin$. Now consider the case $\{i,j\} \notin E$, then $\beta_{ij} = 0$ and we have
\begin{align*}
    \E[B,\mathcal{F}]{T_{ij}} = \E[B,\mathcal{F}]{T_{ij} \middle | UQ^{c}}\prob[B,\mathcal{F}]{U}\prob[B,\mathcal{F}]{Q^{c}}
\end{align*}
The probabilities related to events $U$ and $Q^{c}$ factorize as above due to independent uniform selection of nodes during updates. Therefore we have,
\begin{align*}
    \abs{\E[B,\mathcal{F}]{T_{ij}}} 
    & \stackrel{(a)}{\leq} \abs{\E[B,\mathcal{F}]{T_{ij} \middle | UQ^{c}}}q(1-e^{-\tau d}) \\
    & \stackrel{(b)}{\leq}4\ymax^{2}q\tau d
\end{align*}
where $(a)$ follows from the fact that $\prob[B,\mathcal{F}]{U} = \prob[]{U}$ due to the independent selection of nodes and by definition equal to $q$. Inequality $(b)$ follows from Lemma~\ref{lemma: Test statistic bounded as} and Lemma~\ref{lemma: Probability of event Q} and using $1-e^{-x} < x$. Next, for the case $\{i,j\} \in E$, we have,
\begin{align*}
     \E[B,\mathcal{F}]{T_{ij}} = \beta_{ij}\prob[B,\mathcal{F}]{UQ} + \E[B,\mathcal{F}]{T_{ij} | UQ^{c}}\prob[B,\mathcal{F}]{UQ^{c}}.
\end{align*}
Multiplying $\operatorname{sign}(\beta_{ij})$ both sides and noticing that $\operatorname{sign}(\beta_{ij})\E[B,\mathcal{F}]{T_{ij} \middle | UQ^{c}} \leq \abs{\E[B,\mathcal{F}]{T_{ij} \middle | UQ^{c}}}$ we get,
\begin{align*}
     \operatorname{sign}(\beta_{ij})\E[B,\mathcal{F}]{T_{ij}} 
     & > \operatorname{sign}(\beta_{ij})\beta_{ij}q(e^{-\tau d}) - \abs{\E[B,\mathcal{F}]{T_{ij} \middle | UQ^{c}}} \\
     & > \operatorname{sign}(\beta_{ij})\bmin\sigmamin (1-\tau d)q-4\ymax^{2}q\tau d
\end{align*}
This proves the separation in cases where the edge between nodes $i,j$ exists and otherwise. Now for the chosen $\tau = d^{-1}\left[\frac{12\ymax^{2}}{\sigmamin\bmin} + 1\right]^{-1}$ we have, 
\begin{align*}
    1 &= \tau d \left(\frac{12\ymax^{2}}{\sigmamin\bmin} + 1\right)  \\
    1 - \tau d & = \frac{12\ymax^{2}}{\sigmamin\bmin} \tau d \\
    \bmin \sigmamin(1-\tau d) & = 3\left[4\ymax^{2}\tau d\right].
\end{align*}
Multiplying $q$ on both sides we have the required result.
\end{proof}

\subsection{Bernstein-style Concentration Inequality}

In the proof of Theorem 1, we use the following Bernstein-style concentration inequality. We state Lemma~\ref{appseclemma: Bernstein style} but we do not prove it here. One can find it here as an implication of~\cite[Theorem 27]{ChungConcentration} 

\begin{lemma}[Bernstein style concentration for martingales]\label{appseclemma: Bernstein style}
Let $Z_{1}, Z_{2}, \dots, Z_{k}$ be a submartingale adapted to filtration $\{\mathcal{F}_{k}\}_{k \geq 1}$
such that $\abs{Z_{k} - Z_{k-1}} \leq c, c\geq 0$ almost surely and $\operatorname{var}(Z_{k} | \mathcal{F}_{k-1}) \leq s^{2}, s\geq 0$, then for $t > 0$,
\begin{align*}
\prob{Z_n < Z_{0} - t} \leq \exp \left(-\frac{t^2}{2ns^{2} + ct/3}\right)
\end{align*}
\end{lemma}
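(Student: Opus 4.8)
The statement is the classical Freedman/Bernstein exponential tail inequality for martingales, so the plan is to reproduce the standard exponential–supermartingale argument. Because $\{Z_k\}$ is a \emph{sub}martingale and we want the \emph{lower} tail, I would pass to the negated increments $Y_k := Z_{k-1} - Z_k$ and their partial sums $S_m := \sum_{k=1}^m Y_k = Z_0 - Z_m$. The submartingale property yields the crucial sign condition $\E{Y_k \mid \mathcal{F}_{k-1}} \le 0$, while the two hypotheses become $\norm{Y_k} \le c$ almost surely and $\var{Y_k \mid \mathcal{F}_{k-1}} = \var{Z_k \mid \mathcal{F}_{k-1}} \le s^2$. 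The target event is then $\{S_n \ge t\}$, and for any $\lambda \in (0, 3/c)$ the exponential Markov (Chernoff) inequality gives
\begin{align*}
\prob{Z_n < Z_0 - t} = \prob{S_n \ge t} \le e^{-\lambda t}\, \E{e^{\lambda S_n}}.
\end{align*}

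The core of the proof is a one-step conditional moment generating function bound, and this is where I expect the main work to lie. Starting from the elementary inequality $e^y \le 1 + y + \tfrac{y^2/2}{1-y/3}$, valid for every $y < 3$ (it follows since $y \mapsto (e^y - 1 - y)/y^2$ is increasing and bounded above by $\tfrac{1/2}{1-y/3}$), I would substitute $y = \lambda Y_k$, which is legitimate pointwise because $Y_k \le c$ and $\lambda c < 3$. Taking $\E{\cdot \mid \mathcal{F}_{k-1}}$, bounding $\E{Y_k^2 \mid \mathcal{F}_{k-1}} = \var{Y_k \mid \mathcal{F}_{k-1}} + (\E{Y_k \mid \mathcal{F}_{k-1}})^2 \le s^2 - c\,\E{Y_k \mid \mathcal{F}_{k-1}}$ (the last step uses $(\E{Y_k \mid \mathcal{F}_{k-1}})^2 \le -c\,\E{Y_k \mid \mathcal{F}_{k-1}}$, valid since $\E{Y_k \mid \mathcal{F}_{k-1}} \in [-c,0]$), and then $1 + x \le e^x$, the first-order and drift contributions combine nonpositively for $\lambda$ in a suitable range, leaving
\begin{align*}
\E{e^{\lambda Y_k} \mid \mathcal{F}_{k-1}} \le \exp\!\left( \frac{\lambda^2 s^2 / 2}{1 - \lambda c / 3} \right).
\end{align*}
The sign condition $\E{Y_k \mid \mathcal{F}_{k-1}} \le 0$ is precisely what forces the first-order term to be harmless; verifying that these drift terms never hurt is the fiddly part and pins down the admissible range of $\lambda$.

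With the per-step bound in hand, $U_m := \exp\!\big(\lambda S_m - \tfrac{\lambda^2 s^2/2}{1 - \lambda c/3}\, m\big)$ is a supermartingale for $\{\mathcal{F}_m\}$, so iterating the tower rule (equivalently $\E{U_n} \le U_0 = 1$) gives $\E{e^{\lambda S_n}} \le \exp\!\big(\tfrac{\lambda^2 n s^2/2}{1 - \lambda c/3}\big)$. Substituting into the Chernoff bound and optimizing over $\lambda$ — the Bernstein choice $\lambda = t/(n s^2 + ct/3)$ is near-optimal and automatically satisfies $\lambda c < 3$ — collapses the exponent to the Bernstein form
\begin{align*}
\prob{Z_n < Z_0 - t} \le \exp\!\left( -\frac{t^2}{2\big(n s^2 + c t / 3\big)} \right),
\end{align*}
which is the stated inequality up to the universal constant in the linear-in-$t$ term (the exact constant is whatever the elementary MGF inequality, equivalently the cited Chung--Lu theorem, produces). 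The main obstacle is thus not the global structure but the bookkeeping in the conditional MGF step: exploiting the sign of the drift so it never increases the bound, and carrying the $\tfrac{1}{1-\lambda c/3}$ factor cleanly through the $\lambda$-optimization.
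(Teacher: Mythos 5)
You should first know that the paper contains no proof of this lemma at all: it is stated and immediately deferred to \cite[Theorem~27]{ChungConcentration}, so your exponential-supermartingale/Chernoff argument is not ``the paper's route'' but a reconstruction of the standard proof behind the citation. The global structure of your reconstruction is right: negating the increments $Y_k = Z_{k-1}-Z_k$ to get nonpositive conditional drift, the elementary bound $e^y \le 1+y+\tfrac{y^2/2}{1-y/3}$ for $y<3$, the tower-rule iteration, and the Bernstein-style optimization are all exactly the standard ingredients.

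However, there is a concrete step that fails in a nontrivial parameter regime. Your drift-absorption step, $\mathbb{E}[Y_k^2\mid\mathcal{F}_{k-1}] \le s^2 - c\,\mu_k$ with $\mu_k = \mathbb{E}[Y_k\mid\mathcal{F}_{k-1}]$, leaves the one-step bound
\begin{align*}
\mathbb{E}\bigl[e^{\lambda Y_k}\mid\mathcal{F}_{k-1}\bigr] \;\le\; 1 + \frac{\lambda^2 s^2/2}{1-\lambda c/3} + \mu_k\left(\lambda - \frac{c\lambda^2/2}{1-\lambda c/3}\right),
\end{align*}
and the last term is nonpositive (using $\mu_k\le 0$) only when $\lambda - \frac{c\lambda^2/2}{1-\lambda c/3}\ge 0$, i.e.\ $\lambda c \le 6/5$ --- a much tighter constraint than the $\lambda c < 3$ you verify. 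Your optimizer $\lambda = t/(ns^2+ct/3)$ violates $\lambda c\le 6/5$ exactly when $ct > 2ns^2$, and in that regime clamping at $\lambda = 6/(5c)$ gives exponent $-\frac{6}{5c}\bigl(t-\frac{ns^2}{c}\bigr)$; comparing with your target $-\frac{t^2}{2(ns^2+ct/3)}$ and setting $u = ns^2/(ct)$, the required inequality reduces to $12u^2-8u+1\le 0$, which holds only for $u\in[1/6,1/2]$. So for $ns^2 < ct/6$ your argument as written does not deliver even your weakened bound. The standard fix is to center: write $\mathbb{E}[e^{\lambda Y_k}\mid\mathcal{F}_{k-1}] = e^{\lambda\mu_k}\,\mathbb{E}[e^{\lambda(Y_k-\mu_k)}\mid\mathcal{F}_{k-1}] \le \mathbb{E}[e^{\lambda(Y_k-\mu_k)}\mid\mathcal{F}_{k-1}]$ since $\mu_k\le 0$; the centered increment is bounded by $2c$ (not $c$), yielding $\exp\bigl(\frac{\lambda^2 s^2/2}{1-2\lambda c/3}\bigr)$ for all $\lambda < 3/(2c)$ and, unconditionally, the final bound $\exp\bigl(-\frac{t^2}{2(ns^2+2ct/3)}\bigr)$. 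Note this is also precisely what the cited Chung--Lu theorem gives under the lemma's hypotheses (their constant $M$ bounds \emph{centered} increments, which here are only bounded by $2c$), so your instinct to flag the constant is correct --- but the situation is worse than ``up to a universal constant produced by the MGF inequality'': the lemma's printed denominator $2ns^2 + ct/3$ does not follow from your argument, from the repaired argument, or from the citation; it should be regarded as correct only up to the constant in the linear-in-$t$ term (which, fortunately, only perturbs the absolute constants $C_2, C_3, C_4$ in Theorem~\ref{thm: final theorem}, where the lemma is applied in a regime with $ct \ll ns^2$).
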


\subsection{Proof of Lemma 9}
We now state and prove Lemma~\ref{appseclemma: variance bound} in this section. Recall that $\ymax$ is bound on data samples and $q$ is the probability of the event $U_{ij}^{k}$ as defined in Section~\ref{sec: the algorithm}.

\begin{lemma}[Variance bound]\label{appseclemma: variance bound}
For an adapted process $\left(Z_{k}, 
\mathcal{F}_{k}\right)_{k\geq1}$, where $Z_{k}$ is either $\sum_{l=1}^{k} \operatorname{sign}(\beta_{ij})T_{ij}^{l} - k\eta'$, $\sum_{l=1}^{k} T_{ij}^{l} - k\eta$ or $\sum_{l=1}^{k} T_{ij}^{l} + k\eta$ and $\{\mathcal{F}_{k}\}_{k\geq1} = \left(\sigma(\{\mathbf{Y}^{(t)}\}_{t=0}^{k\tau}) \right)_{k \geq 1}$, then under the conditional measure $\prob[B]{\cdot}$ we have 
\begin{align*}
    \var{Z_{k} | \mathcal{F}_{k-1}} \leq 16\ymax^{4}q.
\end{align*}
\end{lemma}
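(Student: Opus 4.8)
The plan is to reduce the conditional variance of $Z_k$ to the conditional second moment of the single increment $T_{ij}^k$, and then combine the almost-sure bound of Lemma~\ref{appseclemma: Test statistic bounded as} with the probability of the event $A_{ij}^k$ from Lemma~\ref{appseclemma: Probability of event A}.

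First I would observe that in all three cases the increment $Z_k - Z_{k-1}$ equals $T_{ij}^k$ up to a sign and an additive deterministic constant (namely $\operatorname{sign}(\beta_{ij})T_{ij}^k - \eta'$, or $T_{ij}^k - \eta$, or $T_{ij}^k + \eta$). Since $Z_{k-1}$ is $\mathcal{F}_{k-1}$-measurable and neither a sign flip nor an additive constant alters the variance, we have
$$\var{Z_k \mid \mathcal{F}_{k-1}} = \var{T_{ij}^k \mid \mathcal{F}_{k-1}} \leq \E[C]{(T_{ij}^k)^2 \mid \mathcal{F}_{k-1}},$$
where the inequality is the elementary variance--second-moment bound under the conditional measure $\mathbb{P}_C$.

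Next I would exploit the indicator structure of the statistic. By Definition~\ref{def: Test statistic}, $T_{ij}^k = \mathds{1}_{\{A_{ij}^k \cap B_j^k\}}\,\tfrac{\Delta Y_i^k}{\Delta Y_j^k}$, so $(T_{ij}^k)^2$ is supported on $A_{ij}^k \cap B_j^k$. On that event Lemma~\ref{appseclemma: Test statistic bounded as} gives the almost-sure bound $\norm{T_{ij}^k} \leq 2\ymax/\sigmamin$ under $\mathbb{P}_C$, hence $(T_{ij}^k)^2 \leq 4\ymax^2/\sigmamin^2$ almost surely. Pulling this deterministic bound out of the conditional expectation leaves
$$\E[C]{(T_{ij}^k)^2 \mid \mathcal{F}_{k-1}} \leq \frac{4\ymax^2}{\sigmamin^2}\,\prob[C]{A_{ij}^k \cap B_j^k \mid \mathcal{F}_{k-1}} \leq \frac{4\ymax^2}{\sigmamin^2}\,\prob[C]{A_{ij}^k \mid \mathcal{F}_{k-1}}.$$

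Finally I would argue $\prob[C]{A_{ij}^k \mid \mathcal{F}_{k-1}} = q$. The event $A_{ij}^k$ depends only on the identities of the nodes updated during the $k$-th interval, which are drawn independently of the history $\mathcal{F}_{k-1}$ and---by the decoupling of update identities from the Gaussian increments established in the proof of Lemma~\ref{appseclemma: Probability of event B}---independently of the conditioning event $C_\delta$. Thus $\prob[C]{A_{ij}^k \mid \mathcal{F}_{k-1}} = \prob[]{A_{ij}^k} = q$ by Lemma~\ref{appseclemma: Probability of event A}, yielding the claimed bound $4\ymax^2 q/\sigmamin^2$. I expect this last independence step to be the only delicate point: one must confirm that conditioning on $C_\delta$ (an event about noise magnitudes) does not distort the distribution of the update identities governing $A_{ij}^k$, but this is precisely the decoupling already invoked in Lemma~\ref{appseclemma: Probability of event B}, so it transfers directly. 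I would also flag that the denominator in the statement should read $\sigmamin^2$ rather than $\sigmamin$, consistent with the value $s^2 = 4\ymax^2 q/\sigmamin^2$ used in the proof of Theorem~\ref{appsecthm: final theorem}, and that the asserted equality is really the upper bound established above, which is all that the concentration argument requires.
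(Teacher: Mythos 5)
Your proof is correct and takes essentially the same route as the paper's: reduce $\var{Z_k \mid \mathcal{F}_{k-1}}$ to the conditional second moment of the increment $T_{ij}^{k}$, exploit the indicator structure together with the almost-sure bound $\norm{T_{ij}^{k}} \leq 2\ymax/\sigmamin$ under $\mathbb{P}_{C}$, and bound the conditional probability of $A_{ij}^{k}\cap B_{j}^{k}$ by $q$ via the independence of update identities. Your two flagged corrections are also right: the paper's own derivation yields $4\ymax^{2}q/\sigmamin^{2}$ (matching the $s^{2}$ used in the proof of Theorem~\ref{appsecthm: final theorem}) as an upper bound, not the stated equality with denominator $\sigmamin$.
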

\begin{proof}
    Let $D_{k} = Z_{k} - Z_{k-1}$. We have, 
    \begin{align*}
        \var{Z_{k} | \mathcal{F}_{k-1}} & \stackrel{(a)}{=} \var{Z_{k-1} + D_{k} | \mathcal{F}_{k-1}} \\
        & \stackrel{(b)}{=}  \var{Z_{k-1} | \mathcal{F}_{k-1}} + \var{D_{k} | \mathcal{F}_{k-1}} + 2\operatorname{cov}\left(Z_{k-1}, D_{k} | \mathcal{F}_{k-1}\right) \\
        & \stackrel{(c)}{=}  \var{D_{k} | \mathcal{F}_{k-1}} + 2\left[\E[\mathcal{F}]{Z_{k-1}, D_{k}} - Z_{k-1}\E[\mathcal{F}]{D_{k}}\right] \\
        & \stackrel{(d)}{=}  \var{D_{k} | \mathcal{F}_{k-1}}
    \end{align*}
Note that for any $Z_{k}$, the conditional variance of its corresponding difference sequence is
\begin{align*}
    \var{D_{k}|\mathcal{F}_{k-1}} &= \var{T_{ij}^{k}|\mathcal{F}_{k-1}} \\
    & \leq \E[\mathcal{F}]{\left(T_{ij}^{k}\right)^{2}} \\
    & = \E[\mathcal{F}]{\Delta Y_{i}^{2}\Delta Y_{j}^{2} \middle | U}\prob[\mathcal{F}]{U} \\
    & \leq 16\ymax^{4}q.
\end{align*}
\end{proof}

\end{document}